\newcommand{\inR}{\in \mathbb{R}}
\newcommand{\C}{ \mathbb{C}}
\newcommand{\R}{ \mathbb{R}}
\newcommand{\Z}{ \mathbb{Z}}
\newcommand{\N}{ \mathbb{N}}
\newcommand{\eqdef}{\stackrel{\vartriangle}{=}}
\newcommand{\Top}{\mathsf{T}}
\newcommand{\Lop}{{\rm L}}
\newcommand{\Dop}{{\rm D}}
\newcommand{\dint}{{\rm d}}
\newcommand{\Fourier}{ \mathcal{F}}
\newcommand{\bw}{{\boldsymbol \omega}}
\newcommand{\bk}{{\boldsymbol k}}
\DeclareMathOperator*{\esssup}{ess\,sup}
\def\V#1{{\boldsymbol{#1}}}         
\def\Spc#1{{\mathcal{#1}}}  
\def\M#1{{\bf{#1}}}  
\def\Op#1{{\mathrm{#1}}}  
\def\ee{\mathrm{e}} 
\def\jj{\mathrm{i}}
\def\Proj{\mathrm{Proj}} 
\def\Identity{\mathrm{Id}} %
\newcommand{\embedC}{\xhookrightarrow{}}
\newcommand{\embedD}{\xhookrightarrow{\mbox{\tiny \rm d.}}}
\newcommand{\embedIso}{\xhookrightarrow{\mbox{\tiny \rm iso.}}}
\newcommand{\toC}{\xrightarrow{\mbox{\tiny \ \rm c.  }}}
\renewcommand{\[}{\begin{equation}}
\renewcommand{\]}[1]{\label{eq:#1}\end{equation}}
\newcommand{\astrad}{\circledast}
\newtheorem{definition}{Definition}
\newtheorem{proposition}{Proposition}
\newtheorem{corollary}{Corollary}
\newtheorem{lemma}{Lemma}
\newtheorem{theorem}{Theorem}
\title{From Kernel Methods to Neural Networks:\\ A Unifying 
Variational Formulation
}
\author{
Michael Unser \thanks{Biomedical Imaging Group, \'Ecole polytechnique f\'ed\'erale de Lausanne (EPFL),
Station 17, CH-1015, Lausanne, Switzerland ({\tt michael.unser@epfl.ch}). }
 }
\begin{document}

\maketitle




%
\begin{abstract}
The minimization of a data-fidelity term and an additive regularization functional gives rise to a powerful framework for supervised learning. 
In this paper, we present a unifying regularization functional that depends on an operator $\Lop$ 
and on a generic Radon-domain norm. 
We establish the existence of a minimizer and give the parametric form of the solution(s) under very mild assumptions.  When the norm is Hilbertian, 
the proposed formulation yields a solution that involves radial-basis functions and is compatible with the classical methods of machine learning. By contrast, for 
the total-variation norm, the solution takes the form of a two-layer neural network with an activation function that is determined by the regularization operator. In particular, we retrieve the popular ReLU networks 
by letting the operator be the Laplacian.
We also characterize the solution for the intermediate regularization norms $\|\cdot\|=\|\cdot\|_{L_p}$ with $p\in(1,2]$. %
Our framework offers guarantees of universal approximation for a broad family
of regularization operators or, equivalently, for a wide variety of shallow neural networks, including the cases (such as ReLU) where the activation function is increasing polynomially. It also explains the favorable role of bias and skip connections in neural architectures.
\end{abstract}
\pagebreak
\tableofcontents

%

\section{Introduction}
Regularization theory constitutes a powerful framework for the derivation of algorithms for supervised learning \cite{Chen2002,Poggio1990,Poggio2003}. Given a series of data points $(\V x_m, y_m) \in \R^d \times \R$, $m=1,\dots,M$, the basic problem (regression) is to find a mapping $f: \R^d \to \R$ such that $f(\V x_m)\approx y_m$, without overfitting. The standard paradigm is to let $f$ be the
minimizer of a cost that consists of a data-fidelity term and an additive regularization functional  \cite{Bishop2006}. The minimization proceeds over a prescribed class $\Spc H$ of candidate functions. One usually distinguishes between the parametric approaches (e.g., neural networks), where $\Spc H=\Spc H_{\Theta}$ is a family of functions specified by a finite set of parameters $\theta \in \Theta$ (e.g., the weights of the network), and the nonparametric ones, 
where the properties of the solution are controlled by the regularization functional. 
The focus of this paper is on the nonparametric techniques. They rely on {\em functional optimization}, which means that the minimization proceeds over a space of functions rather than over a set of parameters.
The regularization is usually chosen to be an increasing function of the norm associated with a particular Banach space, which results in a well-posed problem \cite{deBoor1966, DeBoor1976,Unser_2020}.

The functional-optimization point of view is often constructive, in 
that 
it suggests or supports explicit learning architectures. For instance, the choice of the Hilbertian regularization $R(f)=\|f\|^2_{\Spc H}$
where $\Spc H$ is a reproducing kernel Hilbert space (RKHS) 
results in a closed-form solution that is a linear combination of kernels positioned on the data \cite{Wahba1990,Berlinet2004}. In fact, the RKHS setting yields a generic class of  estimators that is compatible with the classical kernel-based methods of machine learning, including support vector machines \cite{Wahba1990,Poggio1990,Scholkopf1997,Scholkopf2001,Alvarez2012,Unser_2020}. Likewise, adaptive kernel methods are justifiable from the minimization of a generalized total-variation norm, which favors sparse representions \cite{Boyer2018,Bredies2018,Aziznejad2021}. These latter results actually take their root in spline theory \cite{Fisher1975,Mammen1997,Unser2017}. Similarly, it has been demonstrated that shallow ReLU networks are solutions 
of functional-optimization problems with an appropriate 
regularization.
One way to achieve this is to start from an explicit parameterization of an infinite-width network \cite{Bach2017} (the reverse engineering/synthesis approach). Another way is to consider a regularization operator that is matched to the neuronal activation with a $L_1$-type penalty\footnote{The precise formulation involves the $\Spc M$-norm (or total variation), which is the weak form of $L_1$ associated with space of bounded Radon measures. In our account, we take it as the default norm for the Lebesgue exponent $p=1$, with a slight abuse of language.}; for instance, a second-order derivative for $d=1$ \cite{Savarese2019, Parhi2020} or, more generally, the Radon-domain counterpart of the Laplace operator whose Green's function is precisely a ReLU ridge \cite{Ongie2020b,Parhi2021b,Unser2022_Ridges}.
Similar optimality results can also be stated within the framework of reproducing-kernel Banach spaces \cite{Bartolucci2021}, which is a formal point of view that bridges the synthesis and analysis approach of \cite{Bach2017} and \cite{Parhi2021b}, respectively. 

The second important benefit of the functional-optimization approach is that it gives insight on the approximation capabilities (expressivity) of the resulting learning architectures. This information is encapsulated in the choice/definition of the native space $\Spc H$ (typically, a Sobolev space), which goes hand-in-hand with the regularization functional. Roughly speaking, the native space $\Spc H$ ought to be ``large enough'' to allow for the approximation of any continuous function with an arbitrary degree of precision. This universal approximation property is a central theme in the theory of radial-basis functions (RBFs) \cite{Micchelli1986,Wendland2005}. In machine learning, the kernel estimators that meet this approximation requirement are called {\em universal} \cite{Micchelli2006}. When the basis functions are shifted replicates of a single template $h: \R^d \to \R$, then the condition is equivalent to $h$ being strictly positive definite, which means that its Fourier transform is real-valued symmetric, and (strictly) positive 
\cite{Buhmann2003}. Similar guarantees of universal approximation exist for (shallow) neural networks under  mild conditions on the activation functions \cite{Cybenko1989,Hornik1989,Mhaskar1992,Barron1993,Pinkus1999}. The main difference with the RKHS framework, however, is that the universality results for neural nets usually make the assumption that the input domain is a compact subset of $\R^d$.


The purpose of this paper is to unify and extend these various approaches by introducing a universal regularization functional. The latter has two components: an admissible differential operator $\Lop$, and an $L_p$-type Radon-domain norm. The resulting regularization operator is $\Lop_{\rm R}=\Op K_{\rm rad}\Op R \Lop$, where $\Op R$ is the Radon transform and $\Op K_{\rm rad}$
the ``filtering'' operator of computer tomography \cite{Natterer1984}. Our main result (Theorem \ref{Theo:RadonSplines}) gives the parametric form of the solution of the corresponding functional-optimization problems under minimal hypotheses. Remarkably, the solution for $p=2$ is compatible with the type of kernel expansions (RBFs)  of classical machine learning while, for $p=1$, 
it maps into a neural network with one hidden layer. 
The foundation for this characterization is an abstract representer theorem for direct-sum Banach spaces \cite{Unser2022}. 

The primary effort in this paper consists in the development of a functional framework that is adapted to the Radon transform 
and that fulfills the hypotheses of the abstract theorem.
The main contributions can be summarized as follows.


\begin{enumerate}
\item Construction and characterization of an extended family of native Banach spaces $\Spc X'_{\Op L_{\rm R}}(\R^d)$ associated with a generic Radon-domain norm $\|\cdot\|_{\Spc X'}$ and a differential operator $\Lop$, under the general admissibility conditions stated in Definition \ref{Def:SplineAdNonTrivial} (Theorem \ref{Theo:Native}). 
\item Proof that (i) the sampling functionals $\delta(\cdot-\V x_m): \Spc X'_{\Op L_{\rm R}}(\R^d) \to \R$ are weak*-continuous; and (ii) the adjoint of the regularization operator has a stable generalized inverse $\Lop^{\ast\dagger}_{\rm R}$ 
(see Theorem \ref{Theo:GenImpulse} and accompanying explanations). These technical points are essential to the argumentation (existence of solution). 

\item Extension and unification of a number of earlier optimality results for RBF expansions and neural networks. While the present setup for $p=2$ and $\Lop=(-\Delta)^{\gamma}$ is reminiscent of thin-plate splines \cite{Duchon1977,Meinguet1979}, the resulting solution for a fixed $\gamma$ does not depend on the dimension $d$, which makes it easier to deploy.
Likewise, our variational formulation with $\Spc X'=\Spc M$ extends the results of Parhi and Nowak \cite{Parhi2021b} by: (i) proving that the neural network parameterization applies to {\em all the extreme points} of the solution set, and (ii) by covering a much broader class of activation functions, including those with polynomial growth (of degree $n_0$). 


\item General guarantees of universality, subject to the admissibility condition in Definition \ref{Def:SplineAdNonTrivial}. While the result for $p=2$ is consistent with the known criteria 
 for kernel estimators \cite{Micchelli2006}, its counterpart for neural networks $(\Spc X'=\Spc M)$ brings in a new twist: the addition of a polynomial component. The latter, which is not present in the traditional theory \cite{Barron1993,Pinkus1999}, is necessary to lift the hypothesis of a compact input domain.
The two cases of greatest practical relevance are the sigmoid and the ReLU activations which, in our formulation, require the addition of a bias ($n_0=0$) and an affine term $(n_0=1)$, respectively. Interestingly, the ReLU case yields a neural architecture with a skip connection akin to ResNet \cite{He2016}, which is highly popular in practice.

%
%


\end{enumerate}
The paper is organised as follows: We start with mathematical preliminaries in Section \ref{Sec:MathPrelim}. In particular, we state our criteria of admissibility for $\Lop$ and show how to
represent its polynomial null space.
In Section   \ref{Sec:Radon}, we review the main properties of the Radon transform and specify the dual pair $(\Spc X_{\rm Rad}, \Spc X'_{\rm Rad})$ of hyper-spherical Banach spaces that enter the definition of our native spaces. We also provide formulas for the (filtered) Radon transform of RBFs and ridges (the elementary constituents of neural networks). Section
\ref{Sec:VariationalFormulation} is devoted to the description and interpretation of our main result (Theorem
\ref{Theo:RadonSplines}). In particular, we draw a connection with RKHS in Section \ref{Sec:RBF}.
We discuss the issue of universality in Section \ref{Sec:Universal} and show in Section \ref{Sec:Antisym} how our framework can be extended to handle antisymmetric activations, including sigmoids. We complement our exposition in Section \ref{Sec:Examples} with a listing of specific configurations, many of which are intimately connected to splines. 
The mathematical developments that support our formulation are presented in Section \ref{Sec:Math}. They include the characterization of the kernel of the inverse operator $\Op L^{\ast \dagger}_{\rm R}$
---the enabling ingredient of our formulation---
and the construction of the predual Banach space $\Spc X_{\Op L_{\rm R}}(\R^d)$.
\section{Mathematical Preliminaries}
\label{Sec:MathPrelim}
\subsection{Notations}
\label{Sec:Notations}
We shall consider multidimensional functions $f$ on $\R^d$ that are indexed by the variable $\V x \in \R^d$. To describe their partial derivatives, we use the multi-index 
$\V k=(k_1,\dots,k_d) \in \N^d$ (where $\N$ includes $0$) with the notational conventions $\V k!=\prod_{i=1}^d k_i!$, $|\V k|=k_1+\cdots+k_d$,
$\V x^{\V k}=\prod_{i=1}^d x_i^{k_i}$ for any $\V x \in \R^d$, and $\partial^\V k f(\V x)=\frac{\partial^{|\V k|}f(x_1,\dots,x_d)}{\partial^{k_1}_{x_1} \cdots \partial^{k_d}_{x_d}}$. This allows us to write the multidimensional Taylor expansion around $\V x=\V x_0$ of an analytical function $f: \R^d \to \R$ explicitly as
\begin{align}
\label{Eq:Taylor}
f(\V x)=\sum_{n=0}^\infty \sum_{|\V k|=n} \frac{\partial^\V k f(\V x_0) (\V x- \V x_0)^\V k}{\V k! }
\end{align}
where the internal summation is over all multi-indices $\V k$ such that $k_1+\cdots+k_d=n$.

Schwartz'  space of smooth and rapidly decreasing test functions $\varphi: \R^d \to \R$ equipped with the usual Fr\'echet-Schwartz topology is denoted by $\Spc S(\R^d)$. Its continuous dual is the space $\Spc S'(\R^d)$ of tempered distributions. In this setting, the Lebesgue spaces $L_p(\R^d)$ for $p\in[1,\infty)$ can be specified as the completion of 
$\Spc S(\R^d)$ equipped with the $L_p$-norm $\|\cdot\|_{L_p}$, denoted as $L_p(\R^d)=\overline{(\Spc S(\R^d),\|\cdot\|_{L_p})}$.
For the endpoint $p=\infty$, we have $\overline{(\Spc S(\R^d),\|\cdot\|_{L_\infty})}=C_0(\R^d)$ with $\|\varphi\|_{L_\infty}=\sup_{\V x \in \R^d}|\varphi(\V x)|$, which is the space of continuous functions that vanish at infinity. The continuous dual of $C_0(\R^d)$ is the space $\Spc M(\R^d)=\{f \in \Spc S'(\R^d): \|f\|_{\Spc M}<\infty\}$ of bounded 
Radon measures with
\begin{align}
\|f\|_{\Spc M}=\sup_{\varphi \in \Spc S(\R^d): \|\varphi\|_{L_\infty}\le 1} \langle f, \varphi \rangle.
\end{align}
The latter is a superset of $L_1(\R^d)$, which is isometrically embedded in it, in the sense that $\|f\|_{L_1}=
\|f\|_{\Spc M}$ for all $f \in L_1(\R^d)$.

The 
Fourier transform of a function $\varphi \in L_1(\R^d)$ is defined as
\begin{align}
\widehat \varphi(\bw)\eqdef \Fourier\{\varphi\}(\bw)=\frac{1}{(2 \pi)^d} \int_{\R^d} \varphi(\V x) \ee^{-\jj \langle \bw, \V x \rangle} \dint \V x.\label{Eq:Fourier}
\end{align}
Since the Fourier operator $\Fourier$ continuously maps $\Spc S(\R^d)$ into itself, the transform can be extended by duality to the whole space $\Spc S'(\R^d)$ of tempered distribution. Specifically, $\widehat f=\Fourier\{f\} \in \Spc S'(\R^d)$ is the (unique) {\em generalized Fourier transform} of $f\in \Spc S'(\R^d)$ if and only if $\langle \widehat f, \varphi \rangle=\langle f, \widehat\varphi \rangle$ for all $\varphi  \in \Spc S(\R^d)$, where $\widehat\varphi=\Fourier\{\varphi\}$  is the ``classical'' Fourier transform of $\varphi$ defined by \eqref{Eq:Fourier}.

To control the minimal order $\alpha\ge 0$ of decay (resp., the maximal rate of growth) of functions, we use the dual pair of spaces
$L_{1,\alpha}(\R^d)$ and $L_{\infty,-\alpha}(\R^d)=\big(L_{1,\alpha}(\R^d)\big)'$. These are the Banach spaces associated with the weighted norms
\begin{align}
\label{Eq:L1weight}
\|f\|_{L_{1,\alpha}}\eqdef \int_{\R^d} (1+\|\V x\|)^\alpha |f(\V x)| \dint \V x
\end{align}
\begin{align}
\|f\|_{L_{\infty,-\alpha}} \eqdef \esssup_{\V x\in \R} (1+\|\V x\|)^{-\alpha} |f(\V x)|,
\end{align}
respectively.
Specifically, the inclusion $f\in L_{\infty,-n_0}(\R^d)$ with $n_0 \in \N$ indicates that $f$ cannot grow faster than a polynomial of degree $n_0$, 
while the condition $f \in L_{1,\alpha}(\R^d)$ implies that $f(\V x)$ must be locally integrable and must decay (slightly) faster
than $1/\|\V x\|^{\alpha+d}$ as $\|\V x\| \to \infty$.

\subsection{Admissible Regularization Operators}
\label{Sec:Operators}
The regularization operators $\Lop$ that are of interest to us are linear, shift-invariant (LSI), and isotropic. Consequently, they are characterized by a frequency response
$\widehat L$ that is purely radial, with $\widehat L(\bw)=\widehat L_{\rm rad}(\|\bw\|)$, where the radial profile $\widehat L_{\rm rad}: \R \to \R$ is a continuous, symmetric function.
Our condition for admissibility is that $\Lop$ be invertible in an appropriate sense.

\begin{definition} [Spline-admissible operators with trivial null space]
\label{Def:SplineAdm1}
An isotropic LSI operator $\Lop$ has a trivial null space if its radial frequency profile $\widehat L_{\rm rad}$ does not vanish over $\R$. We then say that it is {\em spline-admissible} if
$1/\widehat L_{\rm rad} \in L_1(\R)$ and $\rho_{\rm rad}=\Fourier^{-1}\{1/\widehat L_{\rm rad}\} \in L_1(\R)$ where  the operator $\Fourier^{-1}: L_1(\R) \to C_0(\R)$ is the classical inverse Fourier transform.
\end{definition}
The typical scenario is $\widehat L(\bw)=(1 + \|\bw\|^2)^{\alpha/2}$ with $\alpha\ge 1$, which results in a stable inverse operator $\Op L^{-1}$ whose radially symmetric impulse response is the Bessel potential of order $\alpha$. These operators play a central in the theory of Sobolev spaces \cite{Grafakos2008}.

Distribution theory allows us to go further and to invert operators with nontrivial null spaces, but only if the zeros of the frequency response are located at isolated points. When the operator is isotropic, this
reduces the options to the cases where $\widehat L(\bw)$ has a (multiple) zero at $\bw=\V 0$.
Specifically, we shall say that $\Lop$ is of order $\gamma_0$ if
$|\widehat L(\bw)|/\|\bw\|^{\gamma_0}=C_0$ as $\|\bw\|\to 0$. The second important parameter is the asymptotic growth exponent of $\widehat L(\bw)$. This is the largest index $\gamma_1$ such that $|\widehat L(\bw)|\ge C_1 \|\bw\|^{\gamma_1},$ for all $\|\bw\|>R$. It determines the smoothness of the Green's function of the operator.

\begin{definition} [Spline-admissible operators with nontrivial null space]
\label{Def:SplineAdNonTrivial}
An isotropic LSI operator $\Lop$ with radial frequency profile $\widehat L_{\rm rad}$ is said to be {\em spline-admissible} with a polynomial null space of degree $n_0$ if the following conditions are satisfied.
\begin{enumerate}
\item The profile $\widehat L_{\rm rad}$ does not vanish over $\R$, except for a zero of order $\gamma_0\in (n_0,n_0+1]$ at the origin; that is, $|\widehat L_{\rm rad}(\omega)|/|\omega|^{\gamma_0}=C_0$ as $\omega \to 0$.
\item There exists an order $\gamma_1> 1$, a constant $C_1>0$, and a radius $R_1>0$ such that $|\widehat L_{\rm rad}(\omega)|\ge C_1 |\omega|^{\gamma_1}$ for all $|\omega|>R_1$ (ellipticity).

\item For all $\varphi \in \Spc S(\R^d)$, $\Lop^{\ast}\{\varphi\} \in L_{1,n_0}(\R^d)$.
\end{enumerate}

\end{definition}
The connection between Condition 1 and the null space of $\Lop$ will be explained in Section \ref{Sec:Null}.
Conditions 1 and 2 with $\gamma_1>1$ ensure that $\rho_{\rm rad}=\Fourier^{-1}\{1/\widehat L_{\rm rad}\}$, which is the generalized inverse Fourier transform of  the distribution $1/\widehat L_{\rm rad}$, is identifiable as a continuous function $\R \to \R$. 
The order $\gamma_1$ actually controls the degree of differentiability (Sobolev smoothness) of $\rho_{\rm rad}$. 
Condition 3 is a mild technical constraint on the decay of $\Lop^\ast\varphi$; this constraint has not appeared to be a practical limitation so far. For instance, if $\Lop$ is an ordinary differential operator (an arbitrary polynomial of the Laplace operator $\Delta$) then $\Lop^{\ast}\{\varphi\}\in \Spc S(\R^d)$, which is included in $L_{1,m}(\R^d)$ for any $m\in \Z$. We use this third condition for the handling of fractional operators whose impulse response decays slowly.

An attractive class of admissible operators with $\gamma_0=\gamma_1=\alpha$ and $n_0=\lceil \alpha-1\rceil$ are
the fractional Laplacians $(-\Delta)^{\tfrac{\alpha}{2}}$ with $\alpha\in(1,\infty)$ whose frequency response is $\|\bw\|^{\alpha}$. The inverse of the fractional Laplacian of order $\alpha$, which corresponds to a frequency-domain multiplication by $\|\bw\|^{-\alpha}$, is denoted by $(-\Delta)^{-\tfrac{\alpha}{2}}$. Both operators are part of the same family (isotropic LSI and scale-invariant), their distributional impulse response being given by \begin{align}
k_{\alpha,d}(\V x)=\Fourier^{-1}\left\{\frac{1}{\|\bw\|^{\alpha} } \right\}(\V x)=\begin{cases}
A_{\alpha,d} \; \|\V x\|^{\alpha-d},& \alpha-d, -\alpha \notin 2\N\\
B_{n,d} \; \|\V x\|^{2n} \log(\|\V x\|) , & \alpha-d=2n \in 2\N
\\
(-\Delta)^{n} \{\delta\}, & -\alpha/2=n \in \N,
\end{cases}
\label{Eq:GreenLap}
\end{align}
with 
constants
$A_{d,\alpha}=
\frac{\Gamma\big( \tfrac{d-\alpha}{2}\big)}{2^\alpha \pi^{d/2} \Gamma\big( \tfrac{\alpha}{2}\big)}$
and $B_{d,n}=
\frac{(-1)^{1+n} }
{2^{2n+d-1}\pi^{d/2}  \Gamma\big( n+\tfrac{d}{2}\big) n! }$ \cite{Gelfand-Shilov1964,Samko1993}. The 
kernel $k_{\alpha,d}$ can also be interpreted as the Green's function of $(-\Delta)^{\tfrac{\alpha}{2}}$, with the corresponding radial profile in Definition \ref{Def:SplineAdNonTrivial} being
$\rho_{\rm rad}(t)=k_{\alpha,1}(t)$. 
In view of \eqref{Eq:GreenLap}, this means that $(-\Delta)^{\tfrac{\alpha}{2}}$ is admissible for $\alpha>1$.

We note that the impulse response of the filtering operator $\Op K$ in Theorem \ref{Theo:RadonS0} is proportional to $k_{-d+1,d}(\V x)$, which tells us that it decays asymptotically like $1/\|\V x\|^{2d -1}$ when $d$ is even, or is a power of the Laplacian (local operator) otherwise.
Functionally, this means that $\Op K\big(\Spc S(\R^d)\big)=\Spc S(\R^d)$ for even dimensions, and $\Op K\big(\Spc S(\R^d)\big)\subset L_{1, d-1-\epsilon}(\R^d)$ otherwise.
Likewise, the impulse response of the fractional Laplacians (of non-even order) decays asymptotically like
$1/\|\V x\|^{\alpha+d}$, which implies that $(-\Delta)^{\tfrac{\alpha}{2}}\big(\Spc S(\R^d)\big)\subset L_{1, \alpha-\epsilon}(\R^d)$ for 
arbitrarily small $\epsilon>0$, so that the third condition in Definition \ref{Def:SplineAdNonTrivial} is met. 


\subsection{Nontrivial Null Space and Related Projectors}
\label{Sec:Null}
A sufficient 
condition for an LSI operator $\Lop$ to annihilate the polynomials of degree $n_0$ is \cite[p. 131]{Unser2014book}
\begin{align}
\label{Eq:freqpolnul}
\partial^{\V k} \widehat L(\V 0)=0, \mbox{ for all } \V k \in \N^d \mbox{ with } |\V k|\le n_0.
\end{align}
We also recall that the directional derivative of a function along the direction $\V \xi \in \mathbb{S}^{d-1}$ (i.e.,
$\V \xi \in \R^d$ with $\|\V \xi\|=1$) is given by 
\begin{align}
\Dop_{\V \xi}f=\V \xi^\Top \V \nabla f =\xi_1\partial^{\V e_1}f + \cdots + \xi_d\partial^{\V e_d}f.
\end{align}
The operator $\Dop_{\V \xi}$ is LSI with frequency response  $\widehat{\Dop_{\V \xi}}(\bw)=(\jj \V \xi^\Top \bw)$.
The $n$th iterate of $\Dop_{\V \xi}$ yields the $n$th derivative along $\V \xi$ whose explicit expression in terms of partial derivatives is
\begin{align}
\Dop^n_{\V \xi}f(\V x)=\Fourier^{-1}\{(\jj \V \xi^\Top \bw)^n \hat f(\bw) \}(\V x)=\sum_{|\V k|=n} \frac{n! }{\V k! } \V \xi^{\V k}\partial^\V k f(\V x),
\label{Eq:DirDeriv}
\end{align}
where the right-hand side follows from the application of the multinomial expansion to 
$(\jj\V \xi^\Top \bw)^n=(\xi_1\jj\omega_1 + \dots + \xi_d\jj\omega_d)^n$. 

For isotropic operators, 
the directional derivatives $\Op D^n_{\V \xi}\widehat L(\V 0)$ do not dependent on the direction $\V \xi$ and coincide with the radial derivatives $\widehat L^{(n)}_{\rm rad}(0)$.
In view of \eqref{Eq:DirDeriv} and the identification 
$\partial^{\V k} \widehat L(\V 0)=\Dop^{k_1}_{\V e_1}\cdots \Dop^{k_d}_{\V e_d}\widehat L(\V 0)$, 
\eqref{Eq:freqpolnul} then has the radial equivalent 
\begin{align}
\label{Eq:radfreqpolnul}
\widehat L^{(n)}_{\rm rad}(0)=\frac{\dint^n\widehat L_{\rm rad}(0)}{\dint \omega^n}=0, \mbox{ for } n=0,1,\dots,n_0,
\end{align}
which is much simpler to test. It follows that an operator whose radial frequency profile is
such that $|\widehat L_{\rm rad}(\omega)|/|\omega|^{\gamma_0}=C_0$ as $\omega\to 0$
will annihilate all polynomials up to degree $n_0=\lceil \gamma_0-1\rceil$. 

The null space of a spline-admissible operator $\Lop$ of order $\gamma_0$ therefore consists of the polynomials of degree $n_0=(\gamma_0-1)$ when $\gamma_0$ is an integer and $n_0=\lfloor\gamma_0\rfloor$ otherwise  
when $\gamma_0 \notin \N$. We shall represent these polynomials by expanding them in the monomial/Taylor basis
\begin{align}
m_\V k(\V x)=\frac{\V x^{\V k}}{\V k!}
\end{align}
with $|\bk|\le n_0$.
We also add a topological structure by equipping the space with the $\ell_2$ norm of the Taylor coefficients, which results in the description
\begin{align}
\label{Eq:PolNullspace}
\Spc P_{n_0}=\big\{p_0=\sum_{|\V k|\le n_0} b_\V k m_{\V k}: \|p_0\|_{\Spc P}
<\infty\big\} 
\mbox{  with  }  \|p_0\|_{\Spc P}\eqdef \|(b_\V k)_{|\V k|\le n_0}\|_2.
\end{align}
To avoid a notational overload, we shall often denote this null space by $\Spc P$, with the convention that $\Spc P=\Spc P_{n_0}=\{0\}$ when $n_0=\lceil \gamma_0-1\rceil<0$ (for the operators $\Lop$ whose null space is trivial). 
The important point here is that  \eqref{Eq:PolNullspace} specifies a finite-dimensional Banach subspace of $\Spc S'(\R^d)$. Its continuous dual $\Spc P'$ is finite-dimensional as well, although it is composed of ``abstract'' elements $p^\ast_0 \in \Spc P'$ that are, in fact, equivalence classes in $\Spc S'(\R^d)$. Yet, it is possible to
 identify every dual element $p^\ast_0\in \Spc P'$ as a true function by selecting a particular dual basis $\{m_{\V k}^\ast\}_{|\V k|\le n_0}$ such that $\langle m^\ast_{\V k}, m_{\V k'}\rangle=\delta_{\V k-\V k'}$ (Kroneker delta). Our specific choice is 
 \begin{align}
 \label{Eq:Dualbasis}
m^\ast_\V k=(-1)^{|\V k|}\partial^{\V k}\kappa_{\rm iso} \in \Spc S(\R^d)
 \end{align}
 with $\V k\in \N^d$,  where $\kappa_{\rm iso}$ is the isotropic function described in Lemma \ref{Theo:IsotropicWindow}. 
 
\begin{lemma}[adapted from \cite{Unser2022_Ridges}]
\label{Theo:IsotropicWindow}
There exists an 
isotropic window $\kappa_{\rm iso} \in \Spc S(\R^d)$ 
such that
\begin{align}
\langle m_{\V k},(-1)^{|\V n|}\partial^{\V n}\kappa_{\rm iso} \rangle=\delta_{\V k -\V n}
\end{align} for all $\V k, \V n \in \N^d$, subject to the spectral constraints $\widehat{\kappa}_{\rm iso}(\bw)=1$ for $\|\bw\|< \frac{1}{2}$, $1\ge \widehat{\kappa}_{\rm iso}(\bw)\ge 0$ for $\frac{1}{2}<\|\bw\|<1$, and $\widehat{\kappa}_{\rm iso}(\bw)=0$ for $\|\bw\|\ge 1$.
\end{lemma}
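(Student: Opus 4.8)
The plan is to prescribe $\kappa_{\rm iso}$ through its Fourier transform and then read off all the required properties by duality. Concretely, I would \emph{define} $\widehat{\kappa}_{\rm iso}(\bw)=g(\|\bw\|)$, where $g:[0,\infty)\to[0,1]$ is a smooth radial profile that equals $1$ on $[0,\tfrac12]$, decreases monotonically to $0$ on $[\tfrac12,1]$, and vanishes on $[1,\infty)$, and then set $\kappa_{\rm iso}=\Fourier^{-1}\{\widehat{\kappa}_{\rm iso}\}$. This design encodes the three spectral constraints of the statement by construction, so the only things left to check are (i) that $\kappa_{\rm iso}$ is a well-defined isotropic Schwartz function, and (ii) the biorthogonality identity.

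For (i), the existence of the profile $g$ is the standard smooth-bump construction: one obtains it by convolving the indicator of a suitable interval with a $C^\infty_c$ mollifier, or by gluing together translates and reflections of the prototypical flat function $t\mapsto \ee^{-1/t}\One_{t>0}$, which guarantees that all one-sided derivatives match at the junction radii $t=\tfrac12$ and $t=1$. Because $g$ is constant near $0$, the radial extension $\bw\mapsto g(\|\bw\|)$ is $C^\infty$ across the origin (where $\|\bw\|$ itself fails to be smooth, the flatness of $g$ saves us), and it is $C^\infty$ elsewhere as a composition of smooth maps; hence $\widehat{\kappa}_{\rm iso}\in C^\infty_c(\R^d)\subset \Spc S(\R^d)$. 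Its inverse Fourier transform $\kappa_{\rm iso}$ is therefore in $\Spc S(\R^d)$, and since $\Fourier^{-1}$ commutes with rotations, the radial symmetry of $\widehat{\kappa}_{\rm iso}$ transfers to $\kappa_{\rm iso}$, making it isotropic.

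For (ii), I would move the derivatives off the test function and onto the polynomial by integration by parts: since $\kappa_{\rm iso}\in\Spc S(\R^d)$ and $m_{\V k}$ is polynomial, the boundary terms vanish and
\begin{equation*}
\langle m_{\V k},(-1)^{|\V n|}\partial^{\V n}\kappa_{\rm iso}\rangle
=\langle \partial^{\V n} m_{\V k},\kappa_{\rm iso}\rangle .
\end{equation*}
If $\V n\not\le\V k$ componentwise, then $\partial^{\V n}m_{\V k}=0$ and the pairing is $0$; if $\V n\le\V k$, then $\partial^{\V n}m_{\V k}=m_{\V k-\V n}$, so the pairing equals the moment $\int_{\R^d}m_{\V k-\V n}(\V x)\,\kappa_{\rm iso}(\V x)\,\dint\V x$, which is (up to the Fourier-normalization factor) the derivative $\partial^{\V k-\V n}\widehat{\kappa}_{\rm iso}(\V 0)$. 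Here the flatness of $\widehat{\kappa}_{\rm iso}$ at the origin is the crux: because $\widehat{\kappa}_{\rm iso}\equiv 1$ on $\|\bw\|<\tfrac12$, every derivative of strictly positive order vanishes at $\V 0$, which annihilates all off-diagonal terms $\V k\ne\V n$, while the case $\V k=\V n$ reduces to the total mass of $\kappa_{\rm iso}$, i.e.\ to $\widehat{\kappa}_{\rm iso}(\V 0)$. This yields $\delta_{\V k-\V n}$ for all $\V k,\V n\in\N^d$.

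The genuine difficulty here is not analytic depth but bookkeeping. One must guarantee that the radial profile glues to a truly $C^\infty$ function at the origin and at the two junction radii, so that $\widehat{\kappa}_{\rm iso}$ is Schwartz and all of its origin-derivatives of positive order vanish; and one must keep the Fourier-normalization constants consistent so that the diagonal entry comes out exactly $1$ (rather than off by the factor $(2\pi)^d$ attached to the zeroth moment in the convention of \eqref{Eq:Fourier}), fixing the single free multiplicative constant in the construction. Both points are routine once the flat-bump prototype is in hand, and the moment--derivative correspondence then delivers the biorthogonality essentially for free.
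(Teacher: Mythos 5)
Your construction is correct and is the standard one; the paper itself offers no proof of this lemma (it is imported from the cited reference), so there is nothing internal to compare against, and your argument --- flat radial bump in frequency, inverse transform, integration by parts, and the moment--derivative correspondence killed by the flatness of $\widehat{\kappa}_{\rm iso}$ at the origin --- is exactly what one would write. One caveat on your closing "bookkeeping" remark: the remedy you propose (rescaling $\kappa_{\rm iso}$ by the free multiplicative constant to absorb the $(2\pi)^d$) is internally inconsistent with the lemma as stated, because the spectral constraint $\widehat{\kappa}_{\rm iso}(\bw)=1$ for $\|\bw\|<\tfrac12$ pins that constant down and would be destroyed by the rescaling. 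The actual resolution is that the paper's working Fourier convention elsewhere (e.g.\ in the formula $\rho_{\rm iso}=2(2\pi)^{d-1}\Fourier^{-1}\{1/(|\widehat L|^2\|\bw\|^{d-1})\}$, which only balances if $\widehat\delta=1$) is the prefactor-free forward transform, under which $\int_{\R^d}\kappa_{\rm iso}=\widehat{\kappa}_{\rm iso}(\V 0)=1$ exactly and no adjustment is needed; the $(2\pi)^{-d}$ in \eqref{Eq:Fourier} is the outlier. So the only genuine issue you raise dissolves, and the rest of the proof stands as written.
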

This allows us to describe the dual space explicitly as
\begin{align}
\label{Eq:DualNullspace}
\Spc P'=
\Spc P'_{n_0}=\big\{p_0^\ast=\sum_{|\V k|\le n_0} b^\ast_\V k m^\ast_{\V k}: \|p_0^\ast\|_{\Spc P'}
<\infty\big\} \mbox{ with } \|p_0^\ast\|_{\Spc P'}\eqdef \|(b^\ast_\V k)\|_2
\end{align}
where each elements $p_0^\ast$ has a unique representation in terms of its coefficients $(b^\ast_\V k)_{|\V k|\le n_0}$.
We 
use the dual basis $\{m_\V k^\ast\}$ to specify the projection operator
$\Proj_{\Spc P}: \Spc S'(\R^d) \to \Spc P_{n_0}$ as
\begin{align}
\Proj_{\Spc P}\{f\}&=\sum_{|\V k|\le n_0} \langle f,m_\V k^\ast  \rangle \; m_\V k,
\end{align}
which is well-defined for any $f \in \Spc S'(\R^d)$ since $m_\V k^\ast \in \Spc S(\R^d)$. The ``transpose'' of this operator is
\begin{align}
\Proj_{\Spc P'}\{\nu\}&=\sum_{|\V k|\le n_0} \langle m_\V k,\nu  \rangle \; m^\ast_\V k,
\end{align}
which returns the projection of $\nu
$ onto $\Spc P'_{n_0}\subseteq \Spc S(\R^d)$ under the implicit assumption that $\nu$ has sufficient decay for $\nu \mapsto \langle m_\V k,\nu  \rangle$ to be well-defined---for instance, $\nu \in 
L_{1,n_0}(\R^d)$.
Correspondingly, we also have that
$\Proj_{\Spc P'}\{\Lop^\ast\varphi\}=0$ for all $\varphi$ such that $\Lop^\ast\varphi \in L_{1,n_0}(\R^d)$ since $\langle m_\V k, \Lop^\ast\varphi\rangle=\langle \Lop m_\V k, \varphi\rangle=0$ for $|\V k|\le n_0$. The latter manipulation of the duality product is legitimate in reason of the inclusion $\Spc P_{n_0}\subset L_{\infty,-n_0}(\R^d)=\big(L_{1,n_0}(\R^d)\big)'$.

Even though the null space of an admissible operator $\Lop$ may be non-trivial, its intersection with $\Spc S(\R^d)$ is always $\{0\}$.
This implies that $\Lop^\ast=\Lop$ is injective on $\Spc S(\R^d)$ 
with $\Lop^{-1\ast}\Lop^\ast\varphi=\varphi$ for all $\varphi \in \Spc S(\R^d)$
where $\Lop^{\ast-1}=\Lop^{-1}$ is the LSI operator whose frequency response is $1/|\widehat L|$. 

  \section{Radon Transform}
  \label{Sec:Radon}
The Radon transform extracts the integrals of a function on $\R^d$ over all 
hyperplanes of dimension $(d-1)$. These hyperplanes are indexed over
$\R \times \mathbb{S}^{d-1}$, where $\mathbb{S}^{d-1}=\{\V \xi \in \R^d: \|\V \xi\|_2=1\}$ is the unit sphere in $\R^d$. The coordinates of a hyperplane associated with an offset $t\in\R$ and a normal vector $\V \xi \in  \mathbb{S}^{d-1}$ satisfy $$
\V \xi^\Top \V x=\xi_1x_1+ \dots + \xi_d x_d = t.$$

Here, we first review the classical theory of the Radon transform \cite{Ludwig1966}, starting with the case of test functions (Section \ref{Sec:RadonSchwartz}), and extending it  by duality to tempered distributions (Section \ref{Sec:RadonDistributions}).
Then, in Section \ref{Sec:ComplementedSpaces}, we specify the Radon transform and its inverse on an appropriate class of intermediate Banach spaces $\Spc Y$ with $\Spc S(\R^d) \embedD \Spc Y \embedD \Spc S'(\R^d)$ (Theorem \ref{Theo:Complementedspaces}). Finally, in Section \ref{Sec:UsefulRadon}, we provide the (filtered) Radon transforms of the dictionary elements---isotropic kernels and ridges---that are relevant to our investigation.

\subsection{Classical Integral Formulation}
\label{Sec:RadonSchwartz}
The Radon transform of the function $f\in L_1(\R^d)$ is defined as
\begin{align}
\Op R\{ f\}(t, \V \xi)
&=\int_{\R^d}\delta(t-\V \xi^\Top\V x)  f(\V x) \dint \V x,\quad (t,\V \xi) \in \R \times \mathbb{S}^{d-1}. \label{Eq:Radon2}
\end{align}
The adjoint of $\Op R$ is the backprojection operator $\Op R^\ast$.
Its action on $g: \R \times \mathbb{S}^{d-1} \to \R$ yields the function
\begin{align}
\Op R^\ast \{g\}(\V x)=\int_{\mathbb{S}^{d-1}} g(\underbrace{\V \xi^\Top\V x}_{t}, \V \xi)\dint \V \xi, \quad\V x\in \R^d.
\label{Eq:Backprojection}
\end{align}

Given the $d$-dimensional Fourier transform $\hat f$ 
of  $f \in L_1(\R^d)$, one can calculate $ \Op R \{f\}(\cdot,\V \xi_0)$ for any fixed  $\V \xi_0 \in \mathbb{S}^{d-1}$ through the relation
\begin{align}
\Op R\{ f\}(t, \V \xi_0)=\frac{1}{2 \pi} \int_{\R} \hat f(\omega\V \xi_0) \ee^{\jj \omega t} \dint \omega= \Fourier^{-1}\{  \hat f(\cdot\V \xi_0) \}\{t\},
\label{Eq:CentralSliceTheo}
\end{align}
In other words, the restriction of $\hat f: \R^d \to \C$ along the ray $\{\bw=\omega \V \xi_0: \omega \in \R\}$ coincides with the 1D Fourier transform of $\Op R\{ f\}(\cdot, \V \xi_0)$, a property that is referred to as the {\em Fourier-slice theorem}.

%

To describe the functional properties of the Radon transform, one needs the 
 (hyper)spherical (or Radon-domain) counterparts of the spaces described in Section \ref{Sec:Notations}. There, the Euclidean indexing with $\V x \in \R^d$ must be replaced by $(t, \V \xi) \in \R \times  \mathbb{S}^{d-1}$.
The spherical counterpart of $\Spc S(\R^d)$ is $\Spc S(\R \times \mathbb{S}^{d-1})$. Correspondingly, an element $g \in \Spc S'(\R \times \mathbb{S}^{d-1})$ is a continuous linear functional on $\Spc S(\R \times \mathbb{S}^{d-1})$ whose action on the test function $\phi
$ is represented by the duality product $g: \phi \mapsto \langle g,\phi \rangle_{\rm Rad}$. When $g$ can be identified as an ordinary function $g: (t,\V \xi) \mapsto g(t,\V \xi)\in \R$, one can write that
\begin{align}
\langle g,\phi\rangle_{\rm Rad} = \int_{\mathbb{S}^{d-1}} \int_{\R} g(t, \V \xi) \phi(t, \V \xi) \dint t \dint \V \xi 
\end{align}
where $\dint \V \xi$ stands for the surface element on the unit sphere $\mathbb{S}^{d-1}$.

The key property for analysis is that the Radon transform is continuous on $\Spc S$ and invertible  \cite{Ludwig1966,Helgason2011,Ramm2020}.

\begin{theorem}[Continuity and invertibility of the Radon transform on $\Spc S(\R^d)$]
\label{Theo:RadonS0}
The Radon operator $\Op R$ continuously maps $\Spc S(\R^d) \to \Spc S(\R \times \mathbb{S}^{d-1})$. Moreover, $\Op R^\ast \Op K_{\rm rad} \Op R=\Op K \Op R^\ast \Op R=\Op R^\ast \Op R\Op K =\Identity \mbox{ on }\Spc S(\R^d)
$,
where $\Op K=(\Op R^\ast \Op R)^{-1}=c_d(-\Delta)^{\tfrac{d-1}{2}}$ with $c_d=\frac{1}{2(2\pi)^{d-1}}$ is the so-called ``filtering'' operator, 
and $ \Op K_{\rm rad}$ its one-dimensional radial counterpart that acts along the Radon-domain variable $t$. These 
operators are characterized by their frequency response 
$\widehat K(\bw)=c_d\|\bw\|^{d-1}$ and
$\widehat K_{\rm rad}(\omega)=c_d |\omega|^{d-1}$, respectively. \end{theorem}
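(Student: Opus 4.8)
The statement splits into two independent claims: the continuity of $\Op R$ as a map $\Spc S(\R^d)\to\Spc S(\R\times\mathbb{S}^{d-1})$, and the filtered-backprojection identities that pin down $\Op K$. The plan is to route both through the Fourier-slice relation \eqref{Eq:CentralSliceTheo}, which converts every spatial operation into a one-dimensional operation along the radial frequency variable, together with the single change of variables $\bw=\omega\V\xi$.

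For the continuity claim, I would start from the fact that for $f\in\Spc S(\R^d)$ the transform $\hat f$ is again Schwartz, and that the partial Fourier transform of $\Op R\{f\}(\cdot,\V\xi)$ in $t$ equals $\hat f(\omega\V\xi)$ up to a normalizing constant. Since the parametrization $(\omega,\V\xi)\mapsto\omega\V\xi$ is a smooth map into $\R^d$ and the sphere $\mathbb{S}^{d-1}$ is compact, the composition $(\omega,\V\xi)\mapsto\hat f(\omega\V\xi)$ is jointly smooth and, together with each of its $\omega$- and tangential derivatives, decays faster than any power of $\omega$ uniformly over $\V\xi$. The Schwartz seminorms of $\Op R\{f\}$ in the pair $(t,\V\xi)$ then reduce, via the standard correspondence between decay and smoothness under the one-dimensional Fourier transform, to seminorm bounds on $\hat f(\omega\V\xi)$; differentiation under the integral and the chain rule yield explicit estimates $\|\Op R\{f\}\|_{\text{seminorm}}\lesssim\sum_{k}\|f\|_{\text{seminorm}_{k}}$, which is precisely continuity between the two Fr\'echet spaces.

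For the inversion identities, I would first compute $\Op R^\ast\Op R$ in the Fourier domain. Inserting \eqref{Eq:CentralSliceTheo} into the backprojection \eqref{Eq:Backprojection} gives
\begin{align}
\Op R^\ast\Op R\{f\}(\V x)=\int_{\mathbb{S}^{d-1}}\frac{1}{2\pi}\int_{\R}\hat f(\omega\V\xi)\,\ee^{\jj\omega\V\xi^\Top\V x}\,\dint\omega\,\dint\V\xi,
\end{align}
and the crucial step is the substitution $\bw=\omega\V\xi$. Splitting into $\omega>0$ and $\omega<0$, each branch covers $\R^d$ once and contributes the same integral in $\bw$ (this is the source of the factor $2$), while the polar Jacobian $\dint\bw=|\omega|^{d-1}\dint\omega\,\dint\V\xi$ produces the radial weight $\|\bw\|^{-(d-1)}$. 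Hence $\Op R^\ast\Op R$ is the isotropic LSI operator with frequency response proportional to $\|\bw\|^{-(d-1)}$, i.e. a multiple of $(-\Delta)^{-(d-1)/2}$, and its inverse is $\Op K=c_d(-\Delta)^{(d-1)/2}$ with $\widehat K(\bw)=c_d\|\bw\|^{d-1}$. Repeating the same substitution after inserting the radial filter $\Op K_{\rm rad}$ (frequency response $c_d|\omega|^{d-1}$) shows that the weight $|\omega|^{d-1}=\|\bw\|^{d-1}$ cancels the Jacobian factor $\|\bw\|^{-(d-1)}$ exactly, leaving a constant multiple of $\int_{\R^d}\hat f(\bw)\,\ee^{\jj\bw^\Top\V x}\,\dint\bw=f(\V x)$, which is $\Op R^\ast\Op K_{\rm rad}\Op R=\Identity$. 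The remaining equalities $\Op K\Op R^\ast\Op R=\Op R^\ast\Op R\Op K=\Identity$ are then immediate, since $\Op K$ and $\Op R^\ast\Op R$ are commuting Fourier multipliers ($c_d\|\bw\|^{d-1}$ and its reciprocal), and $\Op R^\ast\Op K_{\rm rad}\Op R=\Op K\Op R^\ast\Op R$ follows from the intertwining fact that $\Op R$ turns the $d$-dimensional radial multiplier $\|\bw\|^{d-1}$ into the one-dimensional multiplier $|\omega|^{d-1}$ acting along $t$.

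The main obstacle I anticipate is not conceptual but bookkeeping: justifying the interchange of the $\omega$- and $\V\xi$-integrations and the use of the singular multiplier $\|\bw\|^{-(d-1)}$ (locally integrable since $d-1<d$), and tracking the Fourier-normalization constants and the factor $2$ so that the overall constant comes out exactly as $c_d=\tfrac{1}{2(2\pi)^{d-1}}$. A secondary care point is the regularity in $\V\xi$ near $\omega=0$ in the continuity argument, which is harmless here only because $\hat f$ is globally smooth on $\R^d$, so $\hat f(\omega\V\xi)$ inherits joint smoothness across the apex of the radial parametrization.
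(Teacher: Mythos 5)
Your proposal is correct, and it is essentially the standard argument. Note that the paper does not prove Theorem \ref{Theo:RadonS0} at all: it is stated as a classical result with references to Ludwig, Helgason and Ramm, so there is no in-paper proof to compare against. Your route---continuity via the Fourier-slice theorem plus compactness of $\mathbb{S}^{d-1}$, and the inversion identities via the polar change of variables $\bw=\omega\V\xi$ with the two half-lines $\omega\gtrless 0$ producing the factor $2$ and the Jacobian $|\omega|^{d-1}$ producing the weight $\|\bw\|^{-(d-1)}$---is exactly the proof found in those references, so the approach is sound. Two bookkeeping points deserve attention. First, the slice relation \eqref{Eq:CentralSliceTheo} as printed carries a prefactor $\tfrac{1}{2\pi}$ that is not consistent with the paper's Fourier normalization \eqref{Eq:Fourier} applied to the definition \eqref{Eq:Radon2}; the consistent version reads $\Op R\{f\}(t,\V\xi)=(2\pi)^{d-1}\int_{\R}\hat f(\omega\V\xi)\,\ee^{\jj\omega t}\,\dint\omega$, and it is only with this normalization that your computation yields the symbol $2(2\pi)^{d-1}\|\bw\|^{-(d-1)}$ for $\Op R^\ast\Op R$ and hence exactly $c_d=\tfrac{1}{2(2\pi)^{d-1}}$; you flagged constant-tracking as the risk, and this is indeed where it bites. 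Second, for the identity $\Op R^\ast\Op R\,\Op K=\Identity$ on $\Spc S(\R^d)$ you implicitly apply $\Op R$ to $\Op K\{f\}$, but when $d$ is even $\Op K=c_d(-\Delta)^{(d-1)/2}$ is a nonlocal fractional operator and $\Op K\{f\}$ is no longer Schwartz (it only decays like $\|\V x\|^{-(2d-1)}$); the Fourier-multiplier argument still closes because $\Op K\{f\}\in L_1(\R^d)$ with enough decay for \eqref{Eq:Radon2} and the slice theorem to apply classically, but this step should be said explicitly rather than absorbed into the claim that the multipliers commute.
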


It should be noted, however, that the operator $\Op R: \Spc S(\R^d) \to \Spc S(\R \times \mathbb{S}^{d-1})$ is not surjective, which means that not very hyper-spherical test function $\phi \in \Spc S(\R \times \mathbb{S}^{d-1})$ can be written as $\phi =\Op R\{\varphi\}$ with $\varphi \in \Spc S(\R^d)$. A necessary condition is that
$\phi$ be even, but this is not sufficient \cite{Gelfand1966,Ludwig1966,Helgason2011}. The good news, however,
is that the range of $\Op R$  on $\Spc S(\R^d)$ is closed
in $\Spc S_{\rm Rad}=\Op R\big(\Spc S(\R^d) \big)$ equipped with the nuclear topology of $\Spc S(\R \times \mathbb{S}^{d-1})$ \cite[p. 60]{Helgason2011}. Since the underlying spaces are both Fréchet, this translates into the transform $\varphi \mapsto \Op R\{\varphi\}$ being a homeomorphism of $\Spc S(\R^d)$ onto $\Spc S_{\rm Rad}$. 
\begin{corollary}
\label{Coro:Sinvert}
The operator $\Op R: \Spc S(\R^d) \to \Spc S_{\rm Rad}$ is a continuous bijection, with a continuous inverse given by $ \Op R^{-1}=(\Op R^\ast\Op K_{\rm rad}): \Spc S_{\rm Rad}\to \Spc S(\R^d)$. 
\end{corollary}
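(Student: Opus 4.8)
The plan is to combine Theorem \ref{Theo:RadonS0} with the homeomorphism property already recorded in the paragraph preceding the statement: the topological content (continuous bijection with continuous inverse) is essentially in hand, so the main task is to pin down the explicit form of the inverse and organize the pieces.

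First I would observe that $\Op R\colon \Spc S(\R^d)\to\Spc S_{\rm Rad}$ is a continuous bijection. Continuity into the subspace $\Spc S_{\rm Rad}$ (with its subspace topology) is inherited from the continuity of $\Op R\colon \Spc S(\R^d)\to\Spc S(\R\times\mathbb{S}^{d-1})$ in Theorem \ref{Theo:RadonS0}. Surjectivity onto $\Spc S_{\rm Rad}$ holds by definition, since $\Spc S_{\rm Rad}=\Op R\big(\Spc S(\R^d)\big)$ is precisely the range of $\Op R$. Injectivity follows from the identity $\Op R^\ast\Op K_{\rm rad}\Op R=\Identity$ on $\Spc S(\R^d)$, which exhibits $\Op R^\ast\Op K_{\rm rad}$ as a left inverse and therefore forbids two distinct Schwartz functions from having the same Radon transform.

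Second, because $\Op R$ is bijective, its left inverse is automatically its unique two-sided inverse; hence $\Op R^{-1}=\Op R^\ast\Op K_{\rm rad}$, now regarded as a map $\Spc S_{\rm Rad}\to\Spc S(\R^d)$, which is the explicit formula claimed. For the continuity of $\Op R^{-1}$ I would invoke the open mapping theorem for Fréchet spaces: the domain $\Spc S(\R^d)$ is Fréchet, and $\Spc S_{\rm Rad}$, being a closed subspace of the Fréchet space $\Spc S(\R\times\mathbb{S}^{d-1})$, is itself Fréchet, so the continuous linear bijection $\Op R$ is open and $\Op R^{-1}$ is continuous. This is exactly the homeomorphism already noted just before the statement.

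The genuinely nontrivial input, and the expected main obstacle, is not internal to this corollary but upstream: the continuity of $\Op R^{-1}$ rests on the \emph{closedness} of $\Spc S_{\rm Rad}$ in $\Spc S(\R\times\mathbb{S}^{d-1})$, supplied by the classical range characterization of the Radon transform on Schwartz space. Granting that closedness, everything else is bookkeeping. If one wished to avoid the open mapping theorem altogether, the alternative would be to bound the Schwartz seminorms of $\Op R^\ast\Op K_{\rm rad}\{\phi\}$ directly in terms of those of $\phi\in\Spc S_{\rm Rad}$; this is feasible but markedly more laborious, so the topological route is preferable.
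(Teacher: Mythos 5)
Your argument is correct and matches the paper's own route: the paper likewise derives the corollary from the identity $\Op R^\ast\Op K_{\rm rad}\Op R=\Identity$ of Theorem \ref{Theo:RadonS0} together with the closed-range/Fr\'echet homeomorphism observation recorded in the paragraph immediately preceding the statement. Your explicit separation into injectivity, surjectivity, identification of the left inverse as the two-sided inverse, and the open-mapping step is just a more detailed write-up of the same reasoning.
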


\subsection{Distributional Extension}
\label{Sec:RadonDistributions}
To extend the framework to distributions, one proceeds by duality. By invoking the property that 
 $\Op R^\ast \Op K_{\rm rad}\Op R=\Identity$
on $\Spc S(\R^d)$, we make the manipulation
\begin{align}
\forall \varphi \in \Spc S(\R^d)\quad 
\langle f,\varphi \rangle&=\langle f,\Op R^\ast \Op K_{\rm rad}\Op R\{\varphi\} \rangle\nonumber\\
&= \langle \Op R\{f\},\Op K_{\rm rad}\Op R\{ \varphi\}\rangle_{\rm Rad}
=\langle \Op R\{f\}, \phi\rangle_{\rm Rad},\label{Eq:dualKR}
\end{align}
with $\phi=\Op K_{\rm rad}\Op R\{\varphi\} \in \Op K_{\rm rad}\Op R\big(\Spc S(\R^d)\big)\subset  \Spc S(\R \times   \mathbb{S}^{d-1})$  and $\varphi=\Op R^\ast\{\phi\}$. Relation \eqref{Eq:dualKR}, which is valid in the classical sense for $f \in L_1(\R^d)$, is then used as definition to extend the scope of $\Op R$ for $f\in \Spc S'(\R^d)$. 

\begin{definition}
\label{Def:GeneralizedRadon}
The distribution $g=\Op R\{f\} \in \big( \Op K_{\rm rad}\Op R\big( \Spc S)\big)'$ 
is the (formal) Radon transform of $f \in \Spc S'(\R^d)$ if
\begin{align}
\forall \phi \in \Op K_{\rm rad}\Op R \big( \Spc S(\R^d)\big):\quad \langle g,\phi \rangle_{\rm Rad}
=\langle f, \Op R^\ast \{\phi\} \rangle.
\label{Eq:RDist}
\end{align}
Likewise, $\tilde g=\Op K_{\rm rad}\Op R\{f\} \in \Spc S_{\rm Rad}'$ is the (formal) filtered projection of $f \in \Spc S'(\R^d)$ if 
\begin{align}
\forall \phi \in \Spc S_{\rm Rad}: \quad \langle \tilde g,\phi \rangle_{\rm Rad}=\langle f, \Op R^\ast\Op K_{\rm rad} \{\phi\} \rangle.
\label{Eq:KRDist}
\end{align}
Finally, $f=\Op R^\ast \{g\} \in \Spc S'(\R^d)$ is the backprojection of $g
\in \Spc S'(\R \times   \mathbb{S}^{d-1})$ 
if 
\begin{align}
\forall \varphi \in \Spc S(\R^d): \quad \langle \Op R^\ast\{ g\}, \varphi \rangle
=\langle g, \Op R \{\varphi\}\rangle_{\rm Rad}. \label{Eq:RadjDis}
\end{align}
\end{definition}


While \eqref{Eq:RadjDis} identifies $\Op R^\ast\{g\}$ as a single, unique distribution in $\Spc S'(\R^d)$, this is not so for \eqref{Eq:KRDist} (resp., \eqref{Eq:RDist}), as
the members of $\Spc S'_{\rm Rad}$ (resp., of $\big( \Op K_{\rm rad}\Op R\big( \Spc S)\big)'$) are equivalence classes in $\Spc S'(\R \times   \mathbb{S}^{d-1})$.
To make this explicit, we take advantage of the equivalence $\Op R^\ast\{g\}=0 \Leftrightarrow   \langle g, \phi  \rangle_{\rm Rad}=0$ to identity the null space of the backprojection operator as being
\begin{align}
\Spc N_{\Op R^\ast}=\{g \in \Spc S'(\R \times \mathbb{S}^{d-1}): 
 \langle g, \phi  \rangle_{\rm Rad}=0, \forall \phi \in \Spc S_{\rm Rad}\},
\end{align}
which is a closed subspace of  $\Spc S'(\R \times \mathbb{S}^{d-1})$. It is then possible to describe $\Spc S'_{\rm Rad}$ 
as the abstract quotient space $\Spc S'(\R \times \mathbb{S}^{d-1})/\Spc N_{\Op R^\ast}$. In other words, if we find a hyper-spherical distribution $g_0\in \Spc S'(\R \times \mathbb{S}^{d-1})$ such that
\eqref{Eq:KRDist} is met for a given $f \in \Spc S'(\R^d)$, then, strictly speaking, $\Op K_{\rm rad}\Op R\{f\} \in \Spc S'_{\rm Rad}$ is the equivalence class (or coset) given by
\begin{align}
\label{Eq:FProjEquivalenceClass}
\Op K_{\rm rad}\Op R\{f\}=[g_0]=\{g_0 + h: h \in \Spc N_{\Op R^\ast}\}.
\end{align}
Since $[g_0]=[g]$ for any $g\in\Op K_{\rm rad}\Op R\{f\}$, we refer to the members of $\Op K_{\rm rad}\Op R\{f\}$ as ``formal'' filtered projections of $f$ to remind us of this lack of unicity.

Based on those definitions, one obtains the classical result on the invertibility of the (filtered) Radon transform on $\Spc S'(\R^d)$ \cite{Ludwig1966}, which is the dual of 
 Corollary \ref{Coro:Sinvert}.

\begin{theorem}[Invertibility of the Radon transform on $\Spc S'(\R^d)$]
\label{Theo:InvertRadonDist} It holds that
$\Op R^\ast \Op K_{\rm rad} \Op R
=\Identity$ on $\Spc S'(\R^d)$. 
More precisely, the filtered Radon transform $\Op K_{\rm rad}\Op R: \Spc S'(\R^d) \to \Spc S'_{\rm Rad}$ 
is a continuous bijection, with a continuous inverse given by 
$(\Op K_{\rm rad}\Op R)^{-1}=\Op R^\ast: \Spc S'_{\rm Rad} \to 
\Spc S'(\R^d)$.
\end{theorem}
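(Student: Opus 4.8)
The plan is to obtain the statement as the transpose (dual) counterpart of Corollary~\ref{Coro:Sinvert}, so that everything reduces to the test-function level already settled by Theorem~\ref{Theo:RadonS0}. I would first verify the pointwise identity $\Op R^\ast \Op K_{\rm rad}\Op R=\Identity$ on $\Spc S'(\R^d)$ directly from the distributional definitions. Fix $f\in \Spc S'(\R^d)$ and $\varphi\in \Spc S(\R^d)$, set $g=\Op K_{\rm rad}\Op R\{f\}$, and unwind the pairing: by \eqref{Eq:RadjDis} one has $\langle \Op R^\ast\{g\},\varphi\rangle=\langle g,\Op R\{\varphi\}\rangle_{\rm Rad}$; since $\varphi\in\Spc S(\R^d)$ forces $\Op R\{\varphi\}\in \Spc S_{\rm Rad}$, the defining relation \eqref{Eq:KRDist} applies with test argument $\Op R\{\varphi\}$ and gives $\langle g,\Op R\{\varphi\}\rangle_{\rm Rad}=\langle f,\Op R^\ast\Op K_{\rm rad}\Op R\{\varphi\}\rangle$. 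Theorem~\ref{Theo:RadonS0} supplies $\Op R^\ast\Op K_{\rm rad}\Op R\{\varphi\}=\varphi$ on $\Spc S(\R^d)$, whence $\langle \Op R^\ast\Op K_{\rm rad}\Op R\{f\},\varphi\rangle=\langle f,\varphi\rangle$ for every $\varphi$, i.e. $\Op R^\ast\Op K_{\rm rad}\Op R\{f\}=f$.

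Next I would identify the two distributional maps as transposes of the mutually inverse homeomorphisms of Corollary~\ref{Coro:Sinvert}. Reading off Definition~\ref{Def:GeneralizedRadon}, the map $f\mapsto \Op K_{\rm rad}\Op R\{f\}$ is precisely the transpose of $\Op R^{-1}=\Op R^\ast\Op K_{\rm rad}:\Spc S_{\rm Rad}\to\Spc S(\R^d)$ (compare \eqref{Eq:KRDist}), while $g\mapsto \Op R^\ast\{g\}$ is the transpose of the forward map $\Op R:\Spc S(\R^d)\to\Spc S_{\rm Rad}$ (compare \eqref{Eq:RadjDis}). Because transposition reverses composition and sends an operator's inverse to the inverse of its transpose, the relations $\Op R\circ\Op R^{-1}=\Identity$ on $\Spc S_{\rm Rad}$ and $\Op R^{-1}\circ\Op R=\Identity$ on $\Spc S(\R^d)$ transpose to $(\Op K_{\rm rad}\Op R)\circ \Op R^\ast=\Identity$ and $\Op R^\ast\circ(\Op K_{\rm rad}\Op R)=\Identity$ on the respective dual spaces; the second of these is exactly the identity proved in the first step, and the pair shows that $\Op K_{\rm rad}\Op R$ and $\Op R^\ast$ are mutually inverse bijections between $\Spc S'(\R^d)$ and $\Spc S'_{\rm Rad}$.

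Continuity in both directions then follows from the general principle that the transpose of a continuous linear isomorphism between nuclear Fr\'echet spaces is a continuous isomorphism of their strong duals. Here $\Spc S(\R^d)$ and the closed subspace $\Spc S_{\rm Rad}\subset \Spc S(\R\times\mathbb S^{d-1})$ are nuclear Fr\'echet spaces and $\Op R$ is a homeomorphism between them, so the transposes inherit continuity, and the reflexivity of these spaces makes the weak* and strong statements coincide. The one point needing care---and where I expect the main (bookkeeping) obstacle---is the quotient/subspace duality: one must use the identification $\Spc S'_{\rm Rad}\cong(\Spc S_{\rm Rad})'=\Spc S'(\R\times\mathbb S^{d-1})/\Spc N_{\Op R^\ast}$ established above, and check that the pairings $\langle g,\Op R\{\varphi\}\rangle_{\rm Rad}$ and $\langle\Op K_{\rm rad}\Op R\{f\},\phi\rangle_{\rm Rad}$ are well-defined on the relevant equivalence classes, so that ``inverse on $\Spc S'_{\rm Rad}$'' is meaningful rather than merely ``inverse modulo $\Spc N_{\Op R^\ast}$''. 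Once the annihilator $\Spc N_{\Op R^\ast}$ is correctly matched to the quotient, the adjoint machinery closes the argument with no further analysis.
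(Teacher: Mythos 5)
Your proposal is correct and follows the same route the paper takes: the paper simply asserts the theorem as ``the dual of Corollary~\ref{Coro:Sinvert}'' (citing Ludwig) and your argument is exactly that duality made explicit---transposing the mutually inverse homeomorphisms $\Op R$ and $\Op R^{\ast}\Op K_{\rm rad}$ of Corollary~\ref{Coro:Sinvert} via Definition~\ref{Def:GeneralizedRadon}, with the quotient identification $\Spc S'_{\rm Rad}\cong \Spc S'(\R\times\mathbb{S}^{d-1})/\Spc N_{\Op R^\ast}$ handling well-definedness. No gaps.
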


To illustrate the fact that \eqref{Eq:KRDist} does not identify a single distribution, 
we consider the Dirac ridge $\delta(\V \xi_0\V x - t_0) \in \Spc S'(\R^d)$ and refer to the definition \eqref{Eq:Radon2} of the Radon transform to deduce that, for all $\phi=\Op R\{\varphi\} \in \Spc S_{\rm Rad}$ with $\varphi \in \Spc S(\R^d)$,
\begin{align*}
\langle \delta(\V \xi_0^\Top\cdot - t_0),\Op R^\ast \Op K_{\rm rad}\{\phi\}\rangle&=\langle \delta(\V \xi_0^\Top\cdot - t_0), \overbrace{\Op R^\ast\Op K_{\rm rad}\Op R}^{\Identity}\{\varphi\}\rangle 
\\
&= \int_{\R^d}\delta(\V \xi_0^\Top\V x-t_0) \varphi(\V x)  \dint \V x=\Op R\{\varphi\}(-t_0,-\V \xi_0)\\
&=\langle \delta\big(\cdot+(t_0,\V \xi_0)\big),\Op R\{\varphi\}\rangle_{\rm Rad}=\langle \delta\big(\cdot+(t_0,\V \xi_0)\big),\phi\rangle_{\rm Rad},
\end{align*}
which shows that $\delta\big(\cdot+\V z_0\big)$ with $\V z_0=(t_0,\V \xi_0)$ is a formal filtered projection of $\delta(\V \xi_0^\Top\V x - t_0)$. Moreover, since 
$\delta(\V \xi_0^\Top\V x - t_0)=\delta(-\V \xi_0^\Top\V x +t_0)$, the same holds true for $\delta(\cdot-\V z_0)$, as well as for 
$\delta_{{\rm Rad},\V z_0}\eqdef\frac{1}{2} \big(\delta(\cdot-\V z_0)+\delta(\cdot+\V z_0)\big)$, which has the advantage of being symmetric. While the general solution in $\Spc S'(\R \times \mathbb{S}^{d-1})$ is $\Op K_{\rm rad}\Op R\{\delta(\V \xi_0^\Top\cdot - t_0)\}=[\delta\big(\cdot\pm\V z_0\big)]$, we shall see that there also exists a way to specify a representer that is unique (i.e., $\delta_{{\rm Rad},\V z_0})$ by restricting the
range of $\Op K_{\rm rad}\Op R$ to a suitable space of measures.

The distributional extension of the Radon transform inherits most of the properties of the ``classical'' operator defined in \eqref{Eq:Radon2}.
Of special relevance to us is the quasi-commutativity of $\Op R$ with convolution, also known as the {\em intertwining property}. Specifically, let $h,f \in \Spc S'(\R^d)$ be two distributions whose convolution $h \ast f$ is well-defined in $\Spc S'(\R^d)$. Then,
\begin{align}
\Op R\{ h \ast f\}=\Op R\{ h\} \astrad  \Op R\{ f\}\end{align}
where the symbol  ``$\astrad$'' denotes the 1D convolution along the radial variable $t \in\R$ with $(u \astrad g)(t,\V \xi)=\langle u(\cdot,\V \xi),g(t-\cdot,\V \xi) \rangle$. In particular, when $h=\Lop\{\delta\}$ is the (isotropic) impulse response of an LSI operator whose frequency response $\widehat L(\bw)=\widehat L_{\rm rad}(\|\bw\|)$ is purely radial, we get that
\begin{align}
\Op R\{ h \ast f\}=\Op R\Lop\{f\}=\Lop_{\rm rad}\Op R\{f\},
\end{align}
where $\Lop_{\rm rad}$ is the radial convolution operator whose 1D frequency response is $\widehat L_{\rm rad}$. Likewise, by duality, for $g \in \Spc S'(\R \times   \mathbb{S}^{d-1})$ we have that
\begin{align}
\Lop \Op R^\ast\{g\}=\Op R^\ast \Lop_{\rm rad}\{g\},
\end{align}
under the implicit assumption that $\Lop \{\Op R^\ast g\}$ and $\Lop_{\rm rad}\{g\}$ are well-defined distributions. 
By taking inspiration from Theorem 1, we can then use these relations for $\Lop=\Op K=(\Op R^\ast \Op R)^{-1}$ to show that $\Op R^\ast\Op K_{\rm rad} \Op R\{f\}=\Op R^\ast \Op R\Op K\{f\}=\Op K\Op R^\ast\Op R\{f\}=f$ for a broad class of distributions. The first form is valid for all $f \in \Spc S'(\R^d)$ (Theorem \ref{Theo:InvertRadonDist}), but there is a slight restriction with the
second form (resp., third form), which requires that $\Op K\{f\}$ \big(resp., $\Op K\{g\}$ with $g=\Op R^\ast\Op R\{f\} \in \Spc S'(\R^d)$\big) be well-defined in $\Spc S'(\R^d)$.  While the latter condition is always met when $d$ is odd, it may fail\footnote{For $d=2n$ even, $\widehat K(\bw) \propto \|\bw\|^{2n-1}$ which is $C^\infty$ everywhere, except at the origin where it is only $C^{2n-2}$. This means that $\Op K$ can properly handle (and annihilate) polynomials only up to degree $2n-2$.} in even dimensions with distributions (e.g., polynomials) whose Fourier transform is singular at the origin.
The good news for our regularization framework is that these problematic distributions are either excluded from the native space or annihilated by $\Lop$, so that it is legitimate to write
that $\Lop_{\rm R}=\Op K_{\rm rad}\Op R\Lop=\Op R\Op K\Lop$, where the second form has the advantage that $\Op K$ and $\Lop$ can be pooled into a single augmented operator $(\Op K\Op L)$. An alternative form is $\Lop_{\rm R}=\Op Q_{\rm rad}\Op R$, where $\Op Q_{\rm rad}=\Op K_{\rm rad}\Op L_{\rm rad}$ is the radial Radon-domain operator whose frequency response is $\widehat Q_{\rm rad}(\omega)=c_d |\omega|^{d-1} \widehat L_{\rm rad}(\omega)$.
\subsection{Radon-Compatible Banach Spaces}
\label{Sec:ComplementedSpaces}
Our formulation requires the identification of Radon-domain Banach spaces over which the backprojection operator $\Op R^\ast$ is invertible. This is a non-trivial point because the extended operator $\Op R^\ast: \Spc S'(\R \times   \mathbb{S}^{d-1}) \to \Spc S'(\R^d)$ in Definition \ref{Def:GeneralizedRadon} is not injective. In fact, it has the highly non-trivial null space
${\rm ker}(\Op R^\ast)=\Spc S^\perp_{\rm Rad}$, which is a superset of the odd Radon-domain distributions \cite{Gelfand1966}. Yet, $\Op R^\ast$ is invertible on $\Spc S'_{\rm Rad}$ and surjective on $\Spc S'(\R^d)$ (Theorem \ref{Theo:InvertRadonDist}). 

To ensure invertibility, we therefore need to restrict ourselves to Banach spaces that are embedded in $\Spc S'_{\rm Rad}$.
To identify such objects, we consider a generic Banach space 
$\Spc X=(\Spc X, \|\cdot\|_{\Spc X})$ such that
$\Spc S(\R \times   \mathbb{S}^{d-1}) \embedD \Spc X \embedD \Spc S'(\R \times   \mathbb{S}^{d-1})$.
This dense-embedding hypothesis has several implications:
\begin{enumerate}
\item The space $\Spc X$ is the completion of $\Spc S(\R \times   \mathbb{S}^{d-1})$ in the $\|\cdot\|_{\Spc X}$ norm; i.e., 
\begin{align}
\label{Eq:Xspace}
\Spc X=\overline{\big(\Spc S(\R \times   \mathbb{S}^{d-1}), \|\cdot\|_{\Spc X}\big)}.
\end{align}
\item The dual space $\Spc X'\embedC \Spc S'(\R \times   \mathbb{S}^{d-1})$ is equipped with the norm
\begin{align}
\label{Eq:Dualnorm}
\|g\|_{\Spc X'}=\sup_{\phi \in \Spc X:\; \|\phi\|_{\Spc X}\le 1} \langle g, \phi \rangle
=\sup_{\phi \in \Spc S(\R \times   \mathbb{S}^{d-1}):\; \|\phi\|_{\Spc X}\le 1} \langle g, \phi \rangle,
\end{align}
where the restriction of $\phi \in \Spc S(\R \times   \mathbb{S}^{d-1})$ on the right-hand side of \eqref{Eq:Dualnorm} is justified by the denseness of
$\Spc S(\R \times   \mathbb{S}^{d-1})$ in $\Spc X$.
\item The definition of $\|g\|_{\Spc X'}$ given by the right-hand side of \eqref{Eq:Dualnorm} is valid for 
any distribution $g\in \Spc S'(\R \times   \mathbb{S}^{d-1})$ with $\|g\|_{\Spc X'}=\infty$ for $g \notin \Spc X'$. Accordingly, we can specify the topological dual of $\Spc X'$ as
\begin{align}
\label{Eq:DualX}
\Spc X'=\big\{g \in  \Spc S'(\R \times   \mathbb{S}^{d-1}): \|g\|_{\Spc X'}< \infty\big\}.
\end{align}
\end{enumerate}

Likewise, based on the pair $(\Spc S_{\rm Rad},\Spc S'_{\rm Rad})$, we specify the Radon-compatible Banach subspaces
\begin{align}
\label{Eq:Xrad}
\Spc X_{\rm Rad}&=\overline{(\Spc S_{\rm Rad},\|\cdot\|_{\Spc X})}\\
\Spc X'_{\rm Rad}&=\big(\Spc X_{\rm Rad}\big)'=\big\{g \in  \Spc S'_{\rm Rad}: \|g\|_{\Spc X'_{\rm Rad}}< \infty\big\}
\label{Eq:XradP}
\end{align}
where the underlying dual norms have a definition that is analogous to \eqref{Eq:Dualnorm} with
$\Spc S_{\rm Rad}$ and $\Spc X_{\rm Rad}$ substituting for $\Spc S(\R \times   \mathbb{S}^{d-1})$ and $\Spc X$.


\begin{theorem}[adapted from \cite{Unser2022_Ridges}]
\label{Theo:Complementedspaces}
Let $(\Spc X_{\rm Rad},\Spc X'_{\rm Rad})$ be the dual pair of spaces specified by \eqref{Eq:Xrad} and \eqref{Eq:XradP}. Then,
\begin{enumerate}
\item the map $\ \Op R^\ast\Op K_{\rm rad}: \Spc X_{\rm Rad} \to  \Spc Y=\Op R^\ast\Op K_{\rm rad}\big(\Spc X_{\rm Rad}\big)$ is an isometric bijection, with $\Op R\Op R^\ast\Op K_{\rm rad}=\Identity$ on $\Spc X_{\rm Rad}$;
\item the map $\Op R^\ast: \Spc X'_{\rm Rad} \to  \Spc Y'=\Op R^\ast\big(\Spc X'_{\rm Rad}\big)$ is an isometric bijection,
with $\Op K_{\rm rad}\Op R\Op R^\ast=\Identity$ on $\Spc X'_{\rm Rad}$.
\end{enumerate}
Moreover, if there exists a complementary Banach space $\Spc X^{\rm c}_{\rm Rad}$ such that $\Spc X=\Spc X_{\rm Rad}\oplus\Spc X^{\rm c}_{\rm Rad}$, then 
$\Spc X'=\Spc X'_{\rm Rad}\oplus(\Spc X^{\rm c}_{\rm Rad})'$ where $(\Spc X^{\rm c}_{\rm Rad})'$ can be identified as the null space of the backprojection operator $\Op R^\ast: \Spc X'\to\Spc Y' \embedC \Spc S(\R^d)$.
\end{theorem}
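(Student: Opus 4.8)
The plan is to establish every claim first on the dense nuclear cores $\Spc S_{\rm Rad}$ and $\Spc S(\R^d)$, where Theorem \ref{Theo:RadonS0} and Corollary \ref{Coro:Sinvert} furnish exact operator identities, and then to transport these identities to the Banach completions by density and to their duals by adjunction. Items 1, 2 and 3 are, respectively, a density/extension argument, a Banach-adjoint argument, and a direct-sum duality argument; the recurring subtlety is to keep the abstract completions and adjoints consistent with the concrete distribution-theoretic operators of Definition \ref{Def:GeneralizedRadon}.

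For item 1, I would first note that Corollary \ref{Coro:Sinvert} gives $\Op R^\ast\Op K_{\rm rad}=\Op R^{-1}$ as a bijection of $\Spc S_{\rm Rad}$ onto $\Spc S(\R^d)$; in particular it is injective on the core, so I may equip $\Spc Y$ with the pushforward norm $\|\Op R^\ast\Op K_{\rm rad}g\|_{\Spc Y}:=\|g\|_{\Spc X}$ without ambiguity, which makes $\Op R^\ast\Op K_{\rm rad}$ an isometry there and $\Op R$ its isometric inverse. Since $\Spc X_{\rm Rad}=\overline{(\Spc S_{\rm Rad},\|\cdot\|_{\Spc X})}$ and $\Spc Y$ is by definition the image of that completion, the isometric bijection between dense subspaces extends uniquely to mutually inverse isometries between the completions, and the relations $\Op R^\ast\Op K_{\rm rad}\Op R=\Identity$ and $\Op R\Op R^\ast\Op K_{\rm rad}=\Identity$ pass to the limit by continuity. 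The one point to verify with care is that the extended maps coincide with the distributional operators of Definition \ref{Def:GeneralizedRadon}---i.e.\ that the completion does not escape $\Spc S'_{\rm Rad}$ and $\Spc S'(\R^d)$---which follows from the continuous embeddings $\Spc X_{\rm Rad}\embedC\Spc S'_{\rm Rad}$ and $\Spc Y\embedC\Spc S'(\R^d)$.

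Item 2 I would obtain purely by adjunction. Writing $T=\Op R^\ast\Op K_{\rm rad}:\Spc X_{\rm Rad}\to\Spc Y$ for the isometric isomorphism just built, with $T^{-1}=\Op R$, the Banach adjoints $T'$ and $(T^{-1})'$ are again mutually inverse isometric isomorphisms. Using the defining duality relations---backprojection is the adjoint of $\Op R$ by \eqref{Eq:RadjDis}, and $\Op K_{\rm rad}$ is self-adjoint for $\langle\cdot,\cdot\rangle_{\rm Rad}$ since its frequency response $c_d|\omega|^{d-1}$ is real and even---I identify $T'=\Op K_{\rm rad}\Op R$ and $(T^{-1})'=\Op R^\ast:\Spc X'_{\rm Rad}\to\Spc Y'$. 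Hence $\Op R^\ast$ is an isometric bijection, and $\Op K_{\rm rad}\Op R\Op R^\ast=T'\,(T^{-1})'=(T^{-1}T)'=\Identity$ on $\Spc X'_{\rm Rad}$.

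For item 3, let $P$ and $Q=\Identity-P$ be the bounded projections of the topological direct sum $\Spc X=\Spc X_{\rm Rad}\oplus\Spc X^{\rm c}_{\rm Rad}$. Dualizing yields $\Spc X'=P'(\Spc X')\oplus Q'(\Spc X')$ with $P'(\Spc X')=(\Spc X^{\rm c}_{\rm Rad})^\perp\cong\Spc X'_{\rm Rad}$ and $Q'(\Spc X')=(\Spc X_{\rm Rad})^\perp\cong(\Spc X^{\rm c}_{\rm Rad})'$, the isomorphisms being restriction of functionals. It then remains to recognize $(\Spc X_{\rm Rad})^\perp$ as $\ker(\Op R^\ast|_{\Spc X'})$: for $g\in\Spc X'$, relation \eqref{Eq:RadjDis} shows that $\Op R^\ast g=0$ is equivalent to $\langle g,\Op R\varphi\rangle_{\rm Rad}=0$ for all $\varphi\in\Spc S(\R^d)$, hence to $\langle g,\phi\rangle_{\rm Rad}=0$ for all $\phi\in\Spc S_{\rm Rad}$, which by density of $\Spc S_{\rm Rad}$ in $\Spc X_{\rm Rad}$ means exactly $g\in(\Spc X_{\rm Rad})^\perp$. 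The hard part will not be any single item but the bookkeeping that ties them together: confirming that the abstract completions and Banach adjoints are realized by the concrete operators $\Op R$, $\Op R^\ast$ and $\Op K_{\rm rad}$, so that the identities obtained on the cores genuinely hold for the stated maps on $\Spc X_{\rm Rad}$ and $\Spc X'_{\rm Rad}$.
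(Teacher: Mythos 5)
The paper does not actually prove Theorem \ref{Theo:Complementedspaces}: it is imported wholesale from the cited reference (\cite{Unser2022_Ridges}), so there is no in-paper argument to compare against. Your reconstruction is correct and is exactly in the spirit of the machinery the paper deploys for its analogous results: the density/BLT-extension step mirrors the construction of $\Spc U=\Lop_{\rm R}^\ast(\Spc X_{\rm Rad})$ in Theorem \ref{Theo:PredualNative}, the adjunction step for Item 2 is the same mechanism as Proposition \ref{Prop:DualMaps} (adjoints of mutually inverse isometries are mutually inverse isometries, with $(T^{-1}T)'=T'(T^{-1})'$), and the direct-sum duality plus the identification $\ker(\Op R^\ast|_{\Spc X'})=(\Spc X_{\rm Rad})^\perp$ via \eqref{Eq:RadjDis} and density of $\Spc S_{\rm Rad}$ in $\Spc X_{\rm Rad}$ is the standard and correct way to get the last claim. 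The one place where you assert rather than derive is the consistency check you yourself flag: the statement ``$\Spc Y\embedC\Spc S'(\R^d)$'' is not an input but the very fact that identifies the abstract completion with a concrete space of distributions. To close it, note that for $\varphi,\psi\in\Spc S(\R^d)$ one has $\langle\varphi,\psi\rangle=\langle\Op R\{\varphi\},\Op K_{\rm rad}\Op R\{\psi\}\rangle_{\rm Rad}\le\|\Op R\{\varphi\}\|_{\Spc X}\,\|\Op K_{\rm rad}\Op R\{\psi\}\|_{\Spc X'}=\|\varphi\|_{\Spc Y}\,\|\Op K_{\rm rad}\Op R\{\psi\}\|_{\Spc X'}$, and $\Op K_{\rm rad}\Op R\{\psi\}\in\Spc S(\R\times\mathbb{S}^{d-1})\subset\Spc X'$ for all the admissible choices of $\Spc X$; hence every $\|\cdot\|_{\Spc Y}$-Cauchy sequence converges in $\Spc S'(\R^d)$ and the limit vanishes only if the norm limit does, which is what legitimizes reading the extended map as the distributional $\Op R^\ast\Op K_{\rm rad}$ of Definition \ref{Def:GeneralizedRadon}. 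With that half-line added, the argument is complete.
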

The prototypical examples where those properties are met are $(\Spc X, \Spc X')=\big(L_p(\R \times   \mathbb{S}^{d-1}),L_q(\R \times   \mathbb{S}^{d-1})\big)$ with $p\in[1,\infty)$ and $q=p/(p-1)$ (conjugate exponent), as well as
$(\Spc X, \Spc X')=\big(C_0(\R \times   \mathbb{S}^{d-1}),\Spc M(\R \times   \mathbb{S}^{d-1})\big)$. In fact, those hyper-spherical spaces have the convenient feature of admitting a decomposition in their even and odd components.
\begin{lemma} 
\label{Lem:Even}
Let $\Spc Z =\R \times   \mathbb{S}^{d-1}$.
Then, for $\Spc X=L_p(\Spc Z)$ with $p \in [1,\infty)$ 
and $\Spc X=C_0(\Spc Z)$ for $p=\infty$, we have that $\Spc X=\Spc X_{\rm Rad}\oplus\Spc X^{\rm c}_{\rm Rad}$ where
\begin{align}
\Spc X_{\rm Rad}&=\Spc X_{\rm even}=\{g\in \Spc X: g(\V z)=g(-\V z), \forall \V z \in \Spc Z\}=\Spc X(\mathbb{P}^d)\\
\Spc X^{\rm c}_{\rm Rad}&=\Spc X_{\rm odd}=\{g\in \Spc X: g(\V z)=-g(-\V z), \forall \V z \in \Spc Z\}.
\end{align}

\end{lemma}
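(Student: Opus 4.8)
The plan is to realize the splitting through the antipodal reflection and its associated even/odd projectors, to identify the even subspace with functions on the quotient manifold of hyperplanes, and finally to match $\Spc X_{\rm even}$ with the Radon-compatible space $\Spc X_{\rm Rad}$ of \eqref{Eq:Xrad}. First I would introduce the reflection $\Op T\colon g \mapsto g(-\cdot)$, that is $(\Op T g)(\V z)=g(-\V z)$ for $\V z=(t,\V \xi)\in\Spc Z$ with $-\V z=(-t,-\V \xi)$. The decisive observation is that the antipodal map $\V z\mapsto -\V z$ is an involution of $\Spc Z$ that preserves the whole relevant structure: it is a homeomorphism, and the reference measure $\dint t\,\dint\V \xi$ is invariant under it, since $t\mapsto -t$ preserves the Lebesgue measure on $\R$ and $\V \xi\mapsto -\V \xi$ preserves the rotation-invariant surface measure on $\mathbb{S}^{d-1}$. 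Hence $\Op T$ is an isometric involution of $\Spc X$ in each case: for $\Spc X=L_p(\Spc Z)$ with $p\in[1,\infty)$ a change of variables gives $\|\Op T g\|_{\Spc X}=\|g\|_{\Spc X}$, while for $\Spc X=C_0(\Spc Z)$ the map $\Op T$ manifestly preserves continuity, decay at infinity, and the sup-norm. In all cases $\Op T^2=\Identity$ and $\|\Op T\|_{\Spc X\to\Spc X}=1$.

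I would then form the projectors $\Op P_{\rm e}=\tfrac{1}{2}(\Identity+\Op T)$ and $\Op P_{\rm o}=\tfrac{1}{2}(\Identity-\Op T)$. The relation $\Op T^2=\Identity$ yields at once $\Op P_{\rm e}^2=\Op P_{\rm e}$, $\Op P_{\rm o}^2=\Op P_{\rm o}$, $\Op P_{\rm e}\Op P_{\rm o}=\Op P_{\rm o}\Op P_{\rm e}=0$ and $\Op P_{\rm e}+\Op P_{\rm o}=\Identity$, together with the bounds $\|\Op P_{\rm e}\|,\|\Op P_{\rm o}\|\le 1$; both are therefore bounded idempotents. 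Every $g\in\Spc X$ splits uniquely as $g=\Op P_{\rm e}g+\Op P_{\rm o}g$, the first summand being fixed by $\Op T$ (even) and the second negated by $\Op T$ (odd). Since $\Op T g=g$ forces $\Op P_{\rm e}g=g$ and, conversely, $\Op P_{\rm e}g$ is always even, one gets $\mathrm{ran}(\Op P_{\rm e})=\Spc X_{\rm even}$ and likewise $\mathrm{ran}(\Op P_{\rm o})=\Spc X_{\rm odd}$. Being ranges of bounded complementary projections, these are closed, with $\Spc X_{\rm even}\cap\Spc X_{\rm odd}=\{0\}$ (a simultaneously even and odd $g$ satisfies $g=-g$), which establishes the topological direct sum $\Spc X=\Spc X_{\rm even}\oplus\Spc X_{\rm odd}$. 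The identification $\Spc X_{\rm even}=\Spc X(\mathbb{P}^d)$ is then read off from the two-to-one covering $\Spc Z\to\mathbb{P}^d=\Spc Z/\{\pm\}$ onto the manifold of unoriented hyperplanes of $\R^d$: pullback is a bijection from functions on $\mathbb{P}^d$ onto even functions on $\Spc Z$, norm-preserving for $C_0$ and norm-preserving up to the universal factor $2^{1/p}$ for $L_p$, according to the normalization chosen for $\Spc X(\mathbb{P}^d)$.

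It remains to match $\Spc X_{\rm Rad}=\overline{(\Spc S_{\rm Rad},\|\cdot\|_{\Spc X})}$ with $\Spc X_{\rm even}$ and $\Spc X^{\rm c}_{\rm Rad}$ with $\Spc X_{\rm odd}$. The inclusion $\Spc X_{\rm Rad}\subseteq\Spc X_{\rm even}$ is immediate, since every Radon transform is even --- indeed $\Op R\{\varphi\}(-t,-\V \xi)=\Op R\{\varphi\}(t,\V \xi)$ follows directly from \eqref{Eq:Radon2} --- so that $\Spc S_{\rm Rad}\subseteq\Spc X_{\rm even}$ and the latter is closed. For the reverse inclusion I would argue by density through Hahn--Banach: $\Spc S_{\rm Rad}$ is dense in $\Spc X_{\rm even}$ precisely when every $g\in\Spc X'$ that annihilates $\Spc S_{\rm Rad}$ also annihilates $\Spc X_{\rm even}$. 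Now the identity $\langle g,\Op R\{\varphi\}\rangle_{\rm Rad}=\langle \Op R^\ast g,\varphi\rangle$ shows that $g$ annihilates $\Spc S_{\rm Rad}$ iff $\Op R^\ast g=0$; and, because $\Op T$ is self-adjoint for the pairing (measure invariance), so that odd functionals pair to zero with even functions, the annihilator of $\Spc X_{\rm even}$ in $\Spc X'$ is exactly its odd part $\Spc X'_{\rm odd}$. Splitting $g=\Op P_{\rm e}g+\Op P_{\rm o}g$ and using that $\Op R^\ast$ kills all odd functionals (substitute $\V \xi\to-\V \xi$ in \eqref{Eq:Backprojection}), the desired implication reduces to the single statement that $\Op R^\ast$ is injective on the even part of the dual.

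The hard part is therefore this injectivity: for $\Spc X'=L_q(\Spc Z)$ or $\Spc M(\Spc Z)$, an even $g\in\Spc X'$ with $\Op R^\ast g=0$ must vanish. I would prove it by the Fourier-slice relation applied to the backprojection: the $d$-dimensional Fourier transform of $\Op R^\ast g$ is, up to the radial factor $\|\bw\|^{-(d-1)}$, the one-dimensional Fourier transform in $t$ of $g$ evaluated on the ray through $\bw$ (the dual of \eqref{Eq:CentralSliceTheo}); hence $\Op R^\ast g=0$ forces that one-dimensional transform to vanish on all rays, and the evenness of $g$ closes the gap at the origin, giving $g=0$. Equivalently, one may invoke the invertibility of the filtered Radon transform (Theorem \ref{Theo:InvertRadonDist}) restricted to the tame spaces $L_q,\Spc M$, for which the even members of $\ker\Op R^\ast$ collapse to zero even though $\ker\Op R^\ast$ is strictly larger on all of $\Spc S'(\Spc Z)$. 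The main subtlety I would watch throughout is the measure-theoretic reading of the defining relations in the $L_p$ and $\Spc M$ cases, where the pointwise identities $g(\V z)=\pm g(-\V z)$ hold only almost everywhere and the well-definedness and isometry of $\Op T$ on equivalence classes rest on the a.e.\ invariance of $\dint t\,\dint\V \xi$ under the antipodal map.
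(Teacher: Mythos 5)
Your route is genuinely different from the paper's. The paper sandwiches $\Spc S_{\rm Rad}$ between the even Radon-domain Lizorkin space (all $t$-moments vanishing) and $\Spc S_{\rm even}(\Spc Z)$, and then cites Samko's theorem on the denseness of Lizorkin test functions in $L_p$ to conclude that the completion of $\Spc S_{\rm Rad}$ is all of $L_{p,{\rm even}}$. You instead run a Hahn--Banach/annihilator argument: the even/odd projector setup and the reduction of the density of $\Spc S_{\rm Rad}$ in $\Spc X_{\rm even}$ to the injectivity of $\Op R^\ast$ on $\Spc X'_{\rm even}$ are both correct, and this is arguably more self-contained (no appeal to the Helgason--Ludwig range conditions or to Samko). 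The price is that the whole weight of the proof now rests on that injectivity claim, which you only sketch. The mechanism that actually makes it work is: for even $g$, the dual Fourier-slice relation gives $\widehat{\Op R^\ast g}(\omega\V \xi)\propto \|\omega\V\xi\|^{-(d-1)}\,\hat g(\omega,\V \xi)$ away from the origin, so $\Op R^\ast g=0$ forces $\hat g(\cdot,\V \xi)$ to be supported at $\omega=0$, i.e.\ the slices $g(\cdot,\V \xi)$ are polynomials in $t$; these are then excluded because no nonzero polynomial lies in $L_q(\R)$ with $q<\infty$ or in $\Spc M(\R)$. Your phrase ``the evenness of $g$ closes the gap at the origin'' misattributes the roles: evenness only collapses $\hat g(\omega,\V\xi)+\hat g(-\omega,-\V\xi)$ into $2\hat g(\omega,\V\xi)$ (without it you merely recover that the even part of $g$ is constrained); what kills the residual origin-supported part is the integrability of the slices, and this step is asserted rather than proved.

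That last point is not cosmetic, because it is exactly where the endpoint $p=1$ breaks down. For $\Spc X=L_1(\Spc Z)$ the dual is $L_\infty(\Spc Z)$, which \emph{does} contain nonzero even elements of $\ker \Op R^\ast$: take $g(t,\V \xi)=h(\V \xi)$ with $h$ even and $\int_{\mathbb{S}^{d-1}}h(\V \xi)\,\dint\V \xi=0$; since $\int_\R \Op R\{\varphi\}(t,\V \xi)\,\dint t=\int_{\R^d}\varphi$ is independent of $\V \xi$, this $g$ annihilates $\Spc S_{\rm Rad}$ but not $L_{1,{\rm even}}(\Spc Z)$. So the injectivity you need fails for $\Spc X'=L_\infty$, and your argument cannot deliver the lemma for $p=1$ as you claim. (To be fair, the paper's own proof has the same blind spot: Samko-type denseness of Lizorkin spaces holds for $1<p<\infty$, the zeroth-moment functional being the obstruction at $p=1$; only the cases $\Spc X=L_q$ with $q\in(1,\infty)$ and $\Spc X=C_0$ are used downstream.) For those cases your proof is a viable alternative, but you should (i) state and prove the injectivity of $\Op R^\ast$ on $\Spc X'_{\rm even}$ as a separate lemma along the lines above, including the measure-theoretic disintegration needed to speak of slices of $g\in\Spc M(\Spc Z)$, and (ii) either restrict to $p\in(1,\infty]$ or explain why $p=1$ is special.
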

\begin{proof} To establish this result directly is tricky because the characterization of $\Spc S_{\rm Rad}$ involves some general moment conditions \cite{Gelfand1966,Helgason2011,Ludwig1966}. Instead, we consider the smaller space of even Radon-domain Lizorkin test functions \cite{Kostadinova2014} described by
\begin{align}
\Spc S_{\rm Liz, Rad}=\{\phi \in \Spc S_{\rm even}(\Spc Z): \int_\R t^k\phi(t,\V \xi)\dint t=0,\forall \V \xi \in \mathbb{S}^{d-1}, k \in \N\},
\end{align}
which is such that $\Spc S_{\rm Liz, Rad}\subset \Spc S_{\rm Rad} \subset \Spc S_{\rm even}(\Spc Z)$. We then invoke a general result by Samko \cite{Samko1982denseness} that implies that
$\overline{(\Spc S_{\rm Liz, Rad},\|\cdot\|_{L_p})}=L_{p,{\rm even}}(\Spc Z)\supset {(\Spc S_{\rm Rad},\|\cdot\|_{L_p})}$ for $p\in[1,\infty)$ and $\overline{(\Spc S_{\rm Liz, Rad},\|\cdot\|_{L_\infty})}=C_{0,{\rm even}}(\Spc Z)$ otherwise \cite{Neumayer2022}. The claim then follows from the observation that
$L_{p}(\Spc Z)=L_{p,{\rm even}}(\Spc Z)\oplus L_{p,{\rm odd}}(\Spc Z)$ with $L_{p,{\rm even}}(\Spc Z)=\overline{(\Spc S_{\rm Rad},\|\cdot\|_{L_p})}$ 
(because the completion is unique) and suitable adaptation for $p=\infty$.
\end{proof}
Correspondingly, we get that $\Spc X'_{\rm Rad}=\Op P_{\rm even}(\Spc X')=\Spc X'_{\rm even}$ and 
$(\Spc X^{\rm c}_{\rm Rad})'=(\Identity-\Op P_{\rm even})(\Spc X')=\Spc X'_{\rm odd}$, with the cases of greatest interest to us being $\Spc M_{\rm Rad}=\Spc M_{\rm even}(\R \times   \mathbb{S}^{d-1})=\Spc M(\mathbb{P}^{d})$ and
$L_{2,\rm Rad}=L_{2,{\rm even}}(\R \times   \mathbb{S}^{d-1})=L_2(\mathbb{P}^{d})$.

\subsection{Specific Radon Transforms}
\label{Sec:UsefulRadon}
The Fourier-slice theorem expressed by \eqref{Eq:CentralSliceTheo} remains valid for tempered distributions \cite{Ramm2020} and therefore also yields a 
characterization of $\Op R\{ f\}$ that is compatible 
 with the 
 Banach framework of Theorem \ref{Theo:Complementedspaces}. It is especially helpful 
when the underlying function $\rho_{\rm iso}$ is isotropic with a known radial frequency profile $\widehat \rho_{\rm rad}$ such that $\Fourier\{ \rho_{\rm iso}\}(\bw)=\widehat \rho_{\rm rad}(\|\bw\|)$.
\begin{proposition} [Radon transform of isotropic distributions] 
\label{Prop:IsoRad}
Let $\rho_{\rm iso}$ be an isotropic distribution whose radial frequency profile is $\widehat \rho_{\rm rad}: \R \to \R$. Then,
\begin{align}
\Op R\{\rho_{\rm iso}(\cdot -\V x_0)\}(t,\V \xi\}&=\rho_{\rm rad}(t-\V \xi^\Top \V x_0)
\\
\Op K_{\rm rad}\Op R\{ \rho_{\rm iso}(\cdot -\V x_0)\}(t,\V \xi\}&=\nu_{\rm rad}(t-\V \xi^\Top \V x_0)\end{align} 
with $\rho_{\rm rad}(t)=\Fourier^{-1}\{\widehat \rho_{\rm rad}(\omega)\}(t)$ and 
$\nu_{\rm rad}(t)=\tfrac{1}{2(2\pi)^{d-1}}\Fourier^{-1}\{|\omega|^{d-1}\widehat\rho_{\rm rad}(\omega)\}(t)$.
\end{proposition}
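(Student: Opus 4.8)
The plan is to derive both identities from the Fourier-slice theorem \eqref{Eq:CentralSliceTheo}, which (as noted just before the proposition) remains valid for tempered distributions. First I would record the $d$-dimensional Fourier transform of the shifted isotropic distribution: combining the shift property with the isotropy hypothesis $\Fourier\{\rho_{\rm iso}\}(\bw)=\widehat\rho_{\rm rad}(\|\bw\|)$ gives
\begin{align}
\Fourier\{\rho_{\rm iso}(\cdot-\V x_0)\}(\bw)=\ee^{-\jj\langle\bw,\V x_0\rangle}\,\widehat\rho_{\rm rad}(\|\bw\|).
\end{align}

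Next, for a fixed direction $\V\xi\in\mathbb{S}^{d-1}$, I would restrict this expression to the ray $\bw=\omega\V\xi$, $\omega\in\R$. Since $\|\V\xi\|=1$ we have $\|\omega\V\xi\|=|\omega|$ and $\langle\omega\V\xi,\V x_0\rangle=\omega\,\V\xi^\Top\V x_0$, so the slice becomes $\ee^{-\jj\omega\V\xi^\Top\V x_0}\,\widehat\rho_{\rm rad}(|\omega|)$. Using that the profile is symmetric (hence $\widehat\rho_{\rm rad}(|\omega|)=\widehat\rho_{\rm rad}(\omega)$) and that modulation by $\ee^{-\jj\omega\,\V\xi^\Top\V x_0}$ in the frequency variable corresponds to translation by $\V\xi^\Top\V x_0$ of the 1D inverse Fourier transform, \eqref{Eq:CentralSliceTheo} delivers
\begin{align}
\Op R\{\rho_{\rm iso}(\cdot-\V x_0)\}(t,\V\xi)=\Fourier^{-1}\{\ee^{-\jj\omega\,\V\xi^\Top\V x_0}\widehat\rho_{\rm rad}(\omega)\}(t)=\rho_{\rm rad}(t-\V\xi^\Top\V x_0),
\end{align}
with $\rho_{\rm rad}=\Fourier^{-1}\{\widehat\rho_{\rm rad}\}$; this is the first identity.

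For the filtered version I would use that $\Op K_{\rm rad}$ is a one-dimensional LSI operator acting along $t$ with frequency response $\widehat K_{\rm rad}(\omega)=c_d|\omega|^{d-1}$ and $c_d=\tfrac{1}{2(2\pi)^{d-1}}$ (Theorem \ref{Theo:RadonS0}). Any such convolution operator commutes with the translation $t\mapsto t-\V\xi^\Top\V x_0$, so applying it to the first identity yields $(\Op K_{\rm rad}\rho_{\rm rad})(t-\V\xi^\Top\V x_0)$; since $\Op K_{\rm rad}\rho_{\rm rad}$ has frequency response $c_d|\omega|^{d-1}\widehat\rho_{\rm rad}(\omega)$, inverting gives exactly $\nu_{\rm rad}(t-\V\xi^\Top\V x_0)$ with the stated $\nu_{\rm rad}$.

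The main obstacle is not algebraic but one of rigor in the distributional setting. I must verify that each frequency-domain symbol appearing here --- namely $\widehat\rho_{\rm rad}(|\omega|)$ and $|\omega|^{d-1}\widehat\rho_{\rm rad}(\omega)$ --- defines a legitimate tempered distribution on $\R$ whose 1D inverse Fourier transform exists, and that the restriction-to-a-ray that underlies the Fourier-slice theorem is admissible for the $\rho_{\rm iso}$ at hand. The only delicate point is the factor $|\omega|^{d-1}$ introduced by $\Op K_{\rm rad}$, which is singular at the origin; for the admissible profiles of interest this homogeneity is of finite order and causes no difficulty, but it is precisely here that one should lean on the tempered-distribution form of the slice theorem rather than its classical $L_1$ counterpart.
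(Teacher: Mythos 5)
Your proposal is correct and follows exactly the route the paper intends: the proposition is presented as a direct consequence of the distributional Fourier-slice theorem \eqref{Eq:CentralSliceTheo} applied to $\ee^{-\jj\langle\bw,\V x_0\rangle}\widehat\rho_{\rm rad}(\|\bw\|)$ restricted to the ray $\bw=\omega\V\xi$, with the filtered version obtained by multiplying the slice by $\widehat K_{\rm rad}(\omega)=c_d|\omega|^{d-1}$. Your closing remarks on the evenness of the profile and on the need to interpret the symbols (in particular the singular factor $|\omega|^{d-1}$) as tempered distributions match the caveats the paper itself relies on.
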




The other important building blocks for representing functions are ridges. Specifically, a ridge is a multidimensional function 
\begin{align}
r_{\V \xi_0}:\R^d \to \R: \V x \mapsto r(\V \xi_0^\Top\V x)
\end{align}
that is characterized by a profile $r: \R \to \R$ and a direction $\V \xi_0 \in \mathbb{S}^{d-1}$. In effect, $r_{\V \xi_0}$ varies along the axis specified by $\V \xi_0$
and is constant within any hyperplane perpendicular to $\V \xi_0$. The connection between ridges and the Radon transform is given by the 
{\em ridge identity}
\begin{align}
\label{Eq:Ridges}
\forall \varphi \in \Spc S(\R^d): \quad \langle r_{\V \xi_0}, \varphi\rangle=\langle r, \Op R\{\varphi\} (\cdot,\V \xi_0)\rangle,
\end{align}
where the right-hand side duality product (1D) 
 is well-defined for any $r \in \Spc S'(\R)$ because $\Op R\{\varphi\} (\cdot,\V \xi_0) \in \Spc S(\R)$ (by Theorem \ref{Theo:RadonS0}).
When the profile $r: \R \to \R$ is locally integrable, \eqref{Eq:Ridges} is a simple consequence of Fubini's theorem. For more general distributional profiles $r \in \Spc S'(\R)$, we use the ridge identity as definition, which then leads to the following characterization \cite{Unser2022_Ridges}. 
\begin{theorem}[Filtered Radon transform of a ridge]
\label{Theo:Rad1DprofilesContinuous}
The filtered Radon transform of the (generalized) ridge $r_{\V \xi_0}$ 
with profile $r\in \Spc S'(\R^d)$ and direction $\V \xi_0 \in \mathbb{S}^{d-1}$ is given by
\begin{align}
\label{Eq:RidgeGeneral}
\Op K_{\rm rad}\Op R \{r_{\V \xi_0} \}(t,\V \xi)=
[r(t)\delta(\V \xi-\V \xi_0)],
\end{align}
where $[r(t)\delta(\V \xi-\V \xi_0)]\in \Spc S_{\rm Rad}'$ is the equivalence class of distributions specified by \eqref{Eq:FProjEquivalenceClass}.
If $r \in \Spc M(\R)$, then the latter has the unique, concrete representer 
\begin{align}
\label{Eq:RidgeContinuous}
\Op K_{\rm rad}\Op R \{r_{\V \xi_0} \}(t,\V \xi)&=\frac{1}{2} \big(r(t)\delta(\V \xi-\V \xi_0) + r(-t)\delta(\V \xi+\V \xi_0)\big) 
\end{align}
in $\Spc M_{\rm Rad}=\Spc M_{\rm even}(\R \times   \mathbb{S}^{d-1})$. 
\end{theorem}
An important special case of \eqref{Eq:RidgeContinuous} is the Radon transform of a Dirac ridge: $\Op K_{\rm rad}\Op R \{\delta(\V \xi_0^\Top\cdot-t_0) \}=
\delta_{{\rm Rad},(t_0, \V \xi_0)}= \frac{1}{2} \big(\delta(\cdot-t_0)\delta(\cdot-\V \xi_0) + \delta(\cdot+t_0)\delta(\cdot+\V \xi_0)\big)$, which has already been mentioned in Section \ref{Sec:RadonDistributions} (see also \cite[Example 1]{Ongie2020b}). 

\section{Unifying Variational Formulation}
\label{Sec:VariationalFormulation}

\subsection{Representer Theorem for Radon-Domain Regularization}
From now on, we shall use the generic symbol $\Spc X$ to designate
the hyper-spherical Banach space $L_q(\R \times \mathbb{S}^{d-1})$ with $q=(1,\infty)$ or 
$C_0(\R \times \mathbb{S}^{d-1})$ for $q=\infty$, which fall into the category described by \eqref{Eq:Xspace} with $\|\cdot\|_{\Spc X}=\|\cdot\|_{L_q}$.

The formulation of Theorem \ref{Theo:RadonSplines} below requires the specification 
of a native space $\Spc X_{\Op L_{\rm R}}'(\R^d)$ that is tied to a Radon-domain norm $\|\cdot\|_{\Spc X'}$ and an admissible regularization operator $\Lop_{\rm R}$. 
To properly identify $\Spc X_{\Op L_{\rm R}}'$, we first need to restrict the dual pair $(\Spc X,\Spc X')$ to the range of the (filtered) Radon transform. This yields the Banach spaces
$(\Spc X_{\rm Rad},\Spc X'_{\rm Rad})$, as defined by \eqref{Eq:Xrad} and \eqref{Eq:XradP}, with the pairs of interest being
$(C_{0,{\rm Rad}},\Spc M_{0,{\rm Rad}})$ and $(L_{q,{\rm Rad}},L_{p,{\rm Rad}})$ with $\frac{1}{p}+\tfrac{1}{q}=1$ and $p\in[1,\infty)$.
Given some spline-admissible operator $\Lop$ (Definition \ref{Def:SplineAdm1} and \ref{Def:SplineAdNonTrivial}), we then define our regularization operator and its adjoint as
\begin{align*}
\Lop_{\rm R}&\eqdef\Op K _{\rm rad}\Op R \Op L: \Spc X'_{\Op L_{\rm R}}(\R^d) \to \Spc X'_{\rm Rad}
,\\
\Lop^\ast_{\rm R}&=\Op L^\ast \Op R^\ast\Op K _{\rm rad}: \Spc X_{\rm Rad}
\to \Spc X_{\Op L_{\rm R}}(\R^d)
\end{align*}
where $\Op R$ denotes the Radon transform and $\Op K _{\rm rad}$ is the (self-adjoint) filtering operator such that  $\Op K_{\rm rad}\Op R\Op R^\ast=\Identity$ on $\Spc X'_{\rm Rad}$ (Theorem \ref{Theo:Complementedspaces}).
In order to establish isometries, one needs to be able to invert $\Lop$ (resp., $\Lop_{\rm R}$), which is feasible if one factors out the null space $\Spc P$, which is common to both. This motivates us to  define the directed inverse operators
\begin{align}
\Lop^{-1}_\Spc P&\eqdef 
(\Identity -\Proj_{\Spc P})\Op L^{-1}: \Op R^{\ast}\big(\Spc X'_{\rm Rad}
\big) \to  \Spc X'_{\Op L_{\rm R}}(\R^d) \tag{Right inverse of $\Op L$} \nonumber\\
\Lop^{-1\ast}_\Spc P&=\Op L^{-1\ast} (\Identity -\Proj_{\Spc P'})
:\Spc X_{\Op L_{\rm R}}(\R^d) \to \Op K_{\rm rad}\Op R(\Spc X_{\rm Rad})
 \tag{Left inverse of $\Op L^\ast$}\nonumber\\
\Lop_{\rm R}^{\dagger}&\eqdef  
\Lop^{-1}_\Spc P \Op R^\ast
: \Spc X'_{\rm Rad} \to \Spc X'_{\Op L_{\rm R}}(\R^d)\\
\Lop_{\rm R}^{\ast\dagger}&= \Op R \Lop^{-1\ast}_\Spc P
: \Spc X_{\Op L_{\rm R}}(\R^d)  \to \Spc X_{\rm Rad}
\end{align}
where the operators $\Lop_{\rm R}^{\dagger}$ and $\Lop_{\rm R}^{\ast\dagger}$ are generalized inverses\footnote{The precise properties of these inverse operators are stated in Theorem \ref{Theo:Native} and \ref{Theo:PredualNative} .} of $\Lop_{\rm R}$ and $\Lop_{\rm R}^{\ast}$, respectively. 
We now have all the ingredients to specify our native space as
\begin{align}
\Spc X'_{\Op L_{\rm R}}(\R^d)&=\Op L^{\dagger}_{\rm R}\big(\Spc X'_{\rm Rad}
\big)\oplus \Spc P \nonumber\\
&=\{ f \in L_{\infty,-n_0}(\R^d): \|\Op L_{\rm R}\{f\}\|_{\Spc X'}+ \|\Proj_\Spc P\{f\}\|_{\Spc P}<\infty\}\nonumber\\
&=\{
\Op L^{\dagger}_{\rm R}\{w\} + p_0:\quad  (w, p_0)\in \Spc X_{\rm Rad}'
\times \Spc P\},
\label{Eq:Nativespace}
\end{align}
which is isometrically isomorphic to $\Spc X_{\rm Rad}'
 \times \Spc P
$, as expressed by 
\eqref{Eq:Nativespace}. The key property there is that 
$\Lop_{\rm R}\Op L^{\dagger}_{\rm R}
=\Identity$
on $ \Spc X_{\rm Rad}'
$, while $\Lop_{\rm R}\{p_0\}=0$ for all $p_0\in \Spc P$ (Theorem  \ref{Theo:Native}). Moreover, $\Spc X'_{\Op L_{\rm R}}(\R^d)$ is the topological dual of the predual space
\begin{align}
\Spc X_{\Op L_{\rm R}}(\R^d)&=\Lop_{\rm R}^{\ast}\big(\Spc X_{\rm Rad}
\big)\oplus \Spc P' 
\nonumber\\
&=\{\nu \in \Spc S'(\R^d): \|\nu\|_{\Spc X_{\Lop_{\rm R}}}= \max(\|\Lop_{\rm R}^{\ast\dagger}\{\nu\}\|_{\Spc X},  \|\Proj_{\Spc P'}\{\nu\}\|_{\Spc P'})<\infty\}\nonumber\\
&=
\{ 
\Op L_{\rm R}^\ast\{v\} + p_0^\ast:\quad  (v,p_0^\ast)\in \Spc X_{\rm Rad}
\times \Spc P'\},\label{Eq:Predualspace}
\end{align}
which is a bona fide Banach space, as shown in  Theorem \ref{Theo:PredualNative}.
The validity of this dual pairing can be checked formally in the absence of null space components: For any $(f,\nu) \in \Spc X'_{\Op L_{\rm R}}(\R^d) \times \Spc X_{\Op L_{\rm R}}(\R^d)$ with $\Proj_\Spc P f=0$ and $\Proj_{\Spc P'} \nu= 0$, we have that
\begin{align*}
\langle f, \nu \rangle&=\langle \Op L^{\dagger}_{\rm R}\{w\}, \Op L^\ast_{\rm R} \{v \}\rangle=\langle  \Op L_{\rm R}\Op L^{\dagger}_{\rm R}\{w\}, v\rangle_{\rm Rad}
=\langle w, v\rangle_{\rm Rad}
\end{align*}
with $(w,v) \in \Spc X'_{\rm Rad}
 \times \Spc X_{\rm Rad}
$. Finally, since
$\Proj_{\Spc P}$ continuously maps $\Spc X'_{\Op L_{\rm R}}(\R^d)\to \Spc P\embedIso\Spc X'_{\Op L_{\rm R}}(\R^d)$, we can identify $\Proj_{\Spc P'}$ as its adjoint
$(\Proj_{\Spc P})^\ast=\Proj_{\Spc P'}: \Spc X_{\Op L_{\rm R}}(\R^d)\to \Spc P'\embedIso\Spc X_{\Op L_{\rm R}}(\R^d)$.

We now  state our primary theorem. 
\begin{theorem} 
\label{Theo:RadonSplines}
Let $E: \R \times \R \to \R$ be a strictly convex loss function, $\Lop$ an isotropic, spline-admissible operator
with polynomial null space $\Spc P_{n_0}$ of degree $n_0$, possibly trivial with the convention that $\Spc P_{-1}=\{0\}$, and let $\psi: \R^+ \to \R^+$ be a convex, {strictly} increasing function.  
Then, for any given $(\V x_m,y_m)\in \R^d \times \R, m=1,\dots,M$ with $M\ge {\rm dim}\Spc P_{n_0}$,  the
generic functional-optimization problem
\begin{align}
\label{Eq:RadonEnergy}
S=\arg \min_{f \in \Spc X'_{\Op L_{\rm R} }(\R^d)} \left( \sum_{m=1}^M E(y_m ,f(\V x_m)) +  
\psi(\|\Op L_{\rm R} f\|_{\Spc X'}
)\right),
\end{align}
with $\Lop_{\rm R}=\Op K _{\rm rad}\Op R \Op L$ and $\Spc X'$ as stated below,
always has a solution. 
\begin{enumerate}

\item When $\Spc X=\Spc X'=L_2(\R \times \mathbb{S}^{d-1})$, the solution of \eqref{Eq:RadonEnergy} is unique and representable by the linear kernel expansion
\begin{align}
\label{Eq:RBF}
f(\V x)=p_0(\V x) +\sum_{m=1}^M a_m \rho_{\rm iso}(\V x - \V x_m),
\end{align}
where $\rho_{\rm iso}=2(2\pi)^{d-1}\Fourier^{-1}\{1/(|\widehat L(\bw)|^2 \|\bw\|^{d-1})\}$ is a radial basis function,  $(a_m) \in \R^{M}$ an adequate set of coefficients, and $p_0 \in \Spc P_{n_0}$ a polynomial that lies in the null space of $\Op L_{\rm R}$.
\item When $\Spc X'=L_{p}(\R \times \mathbb{S}^{d-1})$ with $p\in(1,2]$, the solution is unique and admits the parametric representation
\begin{align}
\label{Eq:RadLp}
f(\V x)= p_0(\V x) + \Lop_{\rm R}^{\dagger} \circ \Op J_q\{\sum_{m=1}^M a_m \nu_{\V x_m}\}(\V x) 
\end{align}
with basis functions $\nu_{\V x_1},\ldots,\nu_{\V x_M} \in L_q(\R \times \mathbb{S}^{d-1})$ specified by \eqref{Eq:Radonbasis} and parameters $(a_m)\in \R^M$, $p_0 \in \Spc P_{n_0}$, and where $\Op J_q$ is the pointwise nonlinearity given by \eqref {Eq:LpDual} with $q=p/(p-1)$. (The latter is the duality map $\Op J_{\Spc X}: \Spc X \to \Spc X'$ with $\Spc X=L_{q}(\R \times \mathbb{S}^{d-1})$---see Appendix \ref{App:DualityMap} for explanations). 

\item When $\Spc X'=\Spc M(\R \times \mathbb{S}^{d-1})$, the solution set is the weak$\ast$-closed convex hull of its extreme points, which are all of the form
\begin{align}
f_{\rm ext}(\V x)= p_0(\V x) + \sum_{k=1}^{K_0} a_k 
\rho_{\rm rad}(\V \xi_k^\Top \V x - \tau_k) 
\label{Eq:Extremespline}
\end{align}
with activation function $\rho_{\rm rad}=\Fourier^{-1}\{1/\widehat L_{\rm rad}\}$, for some $K_0\le M-{\rm dim}\Spc P_{n_0}$, $(a_k,\V \xi_k,\tau_k) \in \R \times \mathbb{S}^{d-1} \times \R$ for $k=1,\dots,K_0$, and a null-space component $p_0\in \Spc P_{n_0}$. The optimal regularization cost associated with \eqref{Eq:Extremespline} is $\|\Op L_{\rm R}f_{\rm ext}\|_\Spc M=\sum_{k=1}^{K_0}|a_k|$ and is shared by all solutions.
\end{enumerate}
\end{theorem}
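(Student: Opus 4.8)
The plan is to recast \eqref{Eq:RadonEnergy} as an instance of the abstract representer theorem for direct-sum Banach spaces of \cite{Unser2022}, and then to read off the three parametric forms from the geometry of the Radon-domain space $\Spc X'_{\rm Rad}$. The entry point is the isometric isomorphism $\Spc X'_{\Op L_{\rm R}}(\R^d)\cong \Spc X'_{\rm Rad}\times\Spc P$ supplied by \eqref{Eq:Nativespace}, under which every admissible $f$ is written uniquely as $f=\Op L^{\dagger}_{\rm R}\{w\}+p_0$ with $(w,p_0)\in\Spc X'_{\rm Rad}\times\Spc P$ and $w=\Op L_{\rm R}\{f\}$. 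Since $\Op L_{\rm R}$ annihilates $\Spc P$ and satisfies $\Op L_{\rm R}\Op L^{\dagger}_{\rm R}=\Identity$ on $\Spc X'_{\rm Rad}$ (Theorem \ref{Theo:Native}), the regularization term reduces to $\psi(\|\Op L_{\rm R}\{f\}\|_{\Spc X'})=\psi(\|w\|_{\Spc X'})$ and depends only on the innovation $w$, whereas the data term couples $w$ and $p_0$ through the point evaluations $f(\V x_m)$.

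To invoke the abstract theorem I would first check that each sampling functional $f\mapsto f(\V x_m)=\langle f,\delta(\cdot-\V x_m)\rangle$ is weak$\ast$-continuous on $\Spc X'_{\Op L_{\rm R}}(\R^d)$; this is exactly the content of Theorem \ref{Theo:GenImpulse}, which places $\delta(\cdot-\V x_m)$ in the predual $\Spc X_{\Op L_{\rm R}}(\R^d)$ through the stable inverse $\Op L^{\ast\dagger}_{\rm R}$. Combined with the strict convexity of $E$, the convex increasing $\psi$, and the assumption $M\ge\dim\Spc P_{n_0}$ (which makes $p_0\mapsto(p_0(\V x_m))_m$ injective on $\Spc P_{n_0}$ for generic nodes and thereby enforces coercivity in the null-space coordinates), the hypotheses of the representer theorem are met. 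Existence then follows by the standard argument: the objective is weak$\ast$-lower-semicontinuous and its sublevel sets are weak$\ast$-compact by Banach--Alaoglu, so the infimum is attained and the solution set $S$ is convex and weak$\ast$-compact.

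The two strictly convex regimes are then immediate. When $\Spc X'=L_2$, the native space is a reproducing-kernel Hilbert space with inner product $\langle f,g\rangle=\langle\Op L_{\rm R}\{f\},\Op L_{\rm R}\{g\}\rangle_{\rm Rad}$; a Fourier-slice computation (Proposition \ref{Prop:IsoRad}) shows that the induced norm weights $|\hat f(\bw)|^2$ by $\|\bw\|^{d-1}\,|\widehat L(\bw)|^2$, so the reproducing kernel is the isotropic profile with transform $1/\big(|\widehat L(\bw)|^2\|\bw\|^{d-1}\big)$ and the classical representer theorem yields the RBF expansion \eqref{Eq:RBF}, with uniqueness forced by strict convexity. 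When $\Spc X'=L_p$ with $p\in(1,2]$, the space $\Spc X'_{\rm Rad}$ is strictly convex and smooth, so the minimizer is again unique; the representer theorem identifies the optimal innovation as $w=\Op J_q\big\{\sum_m a_m\,\nu_{\V x_m}\big\}$, where the predual representers $\nu_{\V x_m}=\Op L^{\ast\dagger}_{\rm R}\{\delta(\cdot-\V x_m)\}\in L_q$ of the sampling functionals are computable via Proposition \ref{Prop:IsoRad} and $\Op J_q$ is the duality map, and applying $\Op L^{\dagger}_{\rm R}$ returns \eqref{Eq:RadLp}.

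The case $\Spc X'=\Spc M$ is the crux and the step I expect to be the main obstacle, since $\Spc M$ is neither strictly convex nor smooth and only a convex solution set can be hoped for. The abstract theorem asserts that $S$ is the weak$\ast$-closed convex hull of extreme points whose innovation is a sparse atomic measure $w=\Op L_{\rm R}\{f_{\rm ext}\}=\sum_{k=1}^{K_0}a_k\,\delta_{{\rm Rad},\V z_k}$ on $\R\times\mathbb{S}^{d-1}$ with $K_0\le M-\dim\Spc P_{n_0}$; because $w\in\Spc X'_{\rm Rad}=\Spc M_{\rm even}$, the atoms are forced to be the symmetric Diracs $\delta_{{\rm Rad},\V z_k}$. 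The delicate part is translating such an atomic innovation back to $\R^d$: applying $\Op L^{\dagger}_{\rm R}$ to a single $\delta_{{\rm Rad},\V z_k}$ must reproduce a ridge, which is precisely the inversion of the filtered-Radon identity \eqref{Eq:RidgeContinuous} of Theorem \ref{Theo:Rad1DprofilesContinuous} applied to the profile $\rho_{\rm rad}=\Fourier^{-1}\{1/\widehat L_{\rm rad}\}$; this yields the shifted activation $\rho_{\rm rad}(\V \xi_k^\Top\V x-\tau_k)$ and hence the two-layer network \eqref{Eq:Extremespline}. Finally, since the $K_0$ atoms sit at distinct points of the projective sphere $\mathbb{P}^d$, one has $\|\Op L_{\rm R}\{f_{\rm ext}\}\|_{\Spc M}=\sum_k|a_k|$, and this value is constant over $S$ because all minimizers of a convex problem share the same optimal regularization cost.
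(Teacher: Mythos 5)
Your proposal is correct and follows essentially the same route as the paper: reduce \eqref{Eq:RadonEnergy} to the abstract representer theorem of \cite{Unser2022} via the direct-sum identification $\Spc X'_{\Op L_{\rm R}}\cong \Spc X'_{\rm Rad}\times\Spc P$, verify weak$\ast$-continuity of the sampling functionals through $\nu_{\V x_m}=\Lop^{\ast\dagger}_{\rm R}\{\delta(\cdot-\V x_m)\}\in\Spc X_{\rm Rad}$ (Theorem \ref{Theo:GenImpulse}), and, for $\Spc X'=\Spc M$, identify the extreme points of the unit ball of $\Spc M_{\rm Rad}=\Spc M(\mathbb{P}^d)$ with the symmetric Diracs $\pm\delta_{{\rm Rad},\V z}$ and pull them back to ridges via $\Op L^{\dagger}_{\rm R}$ and Theorem \ref{Theo:Rad1DprofilesContinuous}. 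The only cosmetic differences are that you sketch existence directly via Banach--Alaoglu and compute the $L_2$ kernel by a Fourier-slice argument, whereas the paper delegates both to the abstract theorem and the duality-map composition $\Op J_{\Spc U}=\Lop^{\dagger}_{\rm R}\circ\Op J\circ\Lop^{\ast\dagger}_{\rm R}$.
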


\begin{proof}[Proof of Theorem \ref{Theo:RadonSplines}]
Since $\Lop_{\rm R}^{\ast}$ is injective on $\Spc S_{\rm Rad}
$ and, by extension, on 
the completed space $\Spc X_{\rm Rad}
$, the image space $\Spc U=\Lop_{\rm R}^{\ast}\big(\Spc X_{\rm Rad} 
\big)$ is a bona fide Banach space (see proof of Theorem \ref{Theo:PredualNative} for the details of the construction of $\Spc U$). Its continuous dual is given by
$\Spc U'=\Op L^{-1}_{\Spc P}\Op R^{\ast}\big(\Spc X'_{\rm Rad}
\big)$, in reason of the identities 
$\Op R\Op L^{-1\ast}_{\Spc P}\Lop_{\rm R}^{\ast}= \Op R\Op L^{-1\ast}_{\Spc P}\Op L^\ast  \Op R^\ast\Op K_{\rm rad}=
\Op R \Op R^\ast\Op K_{\rm rad}=\Identity$ on $\Spc X_{\rm Rad}$. 
Likewise, $\Spc P'$, as identified by \eqref{Eq:DualNullspace}, is a finite-dimensional Banach space.
Its dual is simply $(\Spc P')'=\Spc P$ (the null space of both $\Lop$ and $\Lop_{\rm R}$), owing to the property that all finite-dimensional spaces are reflexive.
Using the notation for direct-sum topologies of \cite{Unser2022}, we then observe that $\Spc X_{\Op L_{\rm R}}(\R^d)=(\Spc U \oplus \Spc P')_{\ell_\infty}$, whose formal dual
$(\Spc U \oplus \Spc P' )'_{\ell_\infty}=(\Spc U' \oplus \Spc P)_{\ell_1}$  is precisely the native space $\Spc X'_{\Op L_{\rm R}}(\R^d)$ described by \eqref{Eq:Nativespace}.
By writing $f=\Lop_{\Spc P}^{-1} \Op R^\ast\{ w \}+ p_0$ and recalling that $\Lop\Lop_{\Spc P}^{-1}=\Identity$ (right-inverse property), we then identify the regularization functional as 
\begin{align*}
\|\Op L_{\rm R} f\|_{\Spc X'
}&=\| \Op K_{\rm rad} \Op R \{\Op L \Lop^{-1}_\Spc P \Op R^\ast w + \Op L p_0\}\|_{\Spc X'}\\
&=\| \Op K_{\rm rad} \Op R  \Op R^\ast \{w\} + \Op K_{\rm rad}\Op R \{0\}\|_{\Spc X'}\\
&=\| w \|_{\Spc X'_{\rm Rad}}=\|\Proj_{\Spc U'} f\|_{\Spc U'}
\end{align*} which, in effect, converts the  seminorm over $\Spc X'_{\Op L_{\rm R}}(\R^d)$ into a norm over $\Spc U'$ by factoring out the null space of $\Lop_{\rm R}$. 
The other mathematical ingredient for the optimization problem \eqref{Eq:RadonEnergy} to be well-posed is the weak$\ast$ continuity of the sampling functionals 
$\delta(\cdot-\V x_m): f \mapsto f(\V x_m)$ in the underlying topology. This is equivalent to
$\delta(\cdot-\V x_m) \in  \Spc X_{\Op L_{\rm R}}(\R^d)$ for any $\V x_m \inR^d$. In the present context, this condition can be reframed as $\nu_{\V x_m}=\Lop^{\ast\dagger}_{\rm R}\{\delta(\cdot-\V x_m)\} \in \Spc X_{\rm Rad}(\R \times \mathbb{S}^{d-1})$, which is a property that
is established in Theorem \ref{Theo:GenImpulse} for the cases $\Spc X=C_0$ and $\Spc X=L_q$ for any $q\ge 2$.

The existence of the solution and the parametric descriptions \eqref{Eq:RBF}, \eqref{Eq:RadLp}, and \eqref{Eq:Extremespline} then follows from the three cases of the abstract representer theorem for direct sums \cite[Theorem 3]{Unser2022}. The duality map that is required for the first and second scenarios is $\Op J_\Spc U=  \Lop^{\dagger}_{\rm R} \circ \Op J \circ \Lop^{\ast\dagger}_{\rm R} : \Spc U \to \Spc U'$  (see Appendix \ref{App:DualityMap}, Proposition \ref{Prop:DualMaps} with $\Op T=\Lop^\ast_{\rm R}$,
$\Op T^{-1}=\Lop_{\rm R}^{\ast\dagger}=\Op R^{\ast}\Op L^{\ast-1}_{\Spc P}$,
and $(\Op T^\ast)^{-1}= \Lop^{\dagger}_{\rm R} =\Op L^{-1}_{\Spc P}\Op R^{\ast}$). 

To describe the solution set for the non-reflexive case $\Spc X'=\Spc M$, we invoke the third case of \cite[Theorem 3]{Unser2022}, which tells us that the extreme points of $S$ can all be expressed as the sum of a null-space component plus a linear combination of $K_0\le M-{\rm dim}(\Spc P)$ atoms
$e_k$ that are selected adaptively within a dictionary consisting of the extreme points of the unit ball in $\Spc U'$: $B(\Spc U')=\{u \in \Spc U': \|u\|_{\Spc U'}\le 1\}$. 
Because of the linear isomorphism $\Spc U'=\Op L^{\dagger}_{\rm R}\big(\Spc M_{\rm Rad}
\big)$, ${\rm ext}B(\Spc U)=\Op L^{\dagger}_{\rm R}\big({\rm ext}B(\Spc M_{\rm Rad})\big)$. 
Next, we use the property that $\Spc M_{\rm Rad}=\Spc M(\mathbb{P}^{d})$ whose extreme points 
are $\{\pm \delta_{{\rm Rad}, \V z}\}_{\V z\in \mathbb{P}^{d}}$.
Each 
$\delta_{{\rm Rad}, \V z_k}\in 
{\rm ext}B(\Spc M_{\rm Rad})$ then bijectively maps into an extreme point $e_{k}=
\Op L^{\dagger}_{\rm R}\{\delta_{{\rm Rad}, \V z_k}\} \in 
{\rm ext}B(\Spc U)$, and vice versa. 
Finally, by recalling that $\Op L^{\dagger}_{\rm R}=(\Identity-\Proj_{\Spc P})\Op L^{-1}\Op R^{\ast}$
and by invoking Theorem \ref{Theo:Rad1DprofilesContinuous} to show that $\Op L^{-1}\Op R^{\ast}\{\delta_{{\rm Rad}, (t_k,\V \xi_k)}\}=\Op L^{-1}\Op R^{\ast}\Op K_{\rm rad}\Op R \{\delta(\V \xi_k^\Top \V x -t_k)\}=\Op L^{-1} \{\delta(\V \xi_k^\Top \V x -t_k)\}=\rho_{\rm rad}(\V \xi_k^\Top \V x -t_k)$, we find that
\begin{align}
\label{Eq:extremM}
e_k=\pm \rho_{\rm rad}(\V \xi_k^\Top \V x -t_k) \mp p_{0,k},
\end{align}
where $p_{0,k}=\Proj_{\Spc P}\{\rho_{\rm rad}(\V \xi_k^\Top \V x -\tau_k)\}\in \Spc P$. Since every extreme point of $B(\Spc U')$ is necessarily of the form
\eqref{Eq:extremM}, we can substitute this expression in the generic expansion $f_{\rm ext}= \tilde p_0 +\sum_{k=1}^{K_0} a_k e_k $ which, upon collection of all null-space components, yields \eqref{Eq:Extremespline}. \end{proof}

The two cases in Theorem \ref{Theo:RadonSplines} that are of direct practical relevance to machine learning are Items 1 and 3.  The first scenario
yields a learning architecture that is 
a linear expansion of RBFs, which also has a classical RKHS interpretation, as explained in Section 
\ref{Sec:RBF}.

The form of the solution in Item 3 is equivalent to a shallow network with the weights ($\V \xi_k$) of the hidden layer  being normalized.
It actually turns out that this normalization is inconsequential when the activation is a homogeneous function. This happens for instance when the regularization operator $\Lop=(-\Delta)^{\gamma_0/2}$ is a fractional Laplacian which maps into $\rho_{\rm Rad}(t)\propto|t|^{\gamma_0-1}$. Indeed, for any $(\V w_k,b_k)\in \R^d \times \R$, we have that 
\begin{align}
|\V w_k^\Top \V x -b_k|^{\gamma_0-1}= \|\V w_k\|^{-\gamma_0+1} |\V \xi_k^\Top \V x -t_k|^{\gamma_0-1}
\end{align}
with $\V \xi_k=\V w_k/\|\V w_k\|$ and $t_k=b_k/\|\V w_k\|$, which indicates that the normalization (or lack thereof) can be absorbed in the weights $(a_k)$ of the output layer. The case $\gamma_0=1$ with $\Lop=\Delta$ (Laplacian) is particularly attractive, as 
it nicely maps into a ReLU network with one hidden layer and a skip connection to implement the affine component \cite{Parhi2021b}. 

The form of the solution in Item 2 is more involved, but still useful to 
get insight into the transition with $p$ from RBFs to neural nets. The equivalence with Item 1 for $p=2$ is demonstrated in the Section \ref{Sec:RBF}. As for the behavior as $p\to 1$, we  observe that the effect of the duality map $\Op J_{q}$ as $q\to\infty$ is to pull a few maximal values to infinity, while attenuating all other (non-supremum) values towards zero. In effect, this means that the hyper-spherical function $\Op J_q\{\sum_{m=1}^M a_m \nu_{\V x_m}\}$ will exhibit peaks that become more and more pronounced, and eventually converge to a sum of impulses as $p\to 1$, which is consistent with the limit form given by \eqref{Eq:Extremespline}. 

\subsection{Connection with RKHS Methods}
\label{Sec:RBF}
The scenario $\Spc X=L_2$ in Theorem \ref{Theo:RadonSplines} is compatible with the kernel models of ``classical'' machine learning \cite{Poggio1990b,Scholkopf1997}. This is because the underlying native space is a reproducing-kernel Hilbert space whose topological structure is now made explicit.

\begin{proposition}[Characterization of RKHS for $\Spc X=L_2$]
\label{Theo:RKHSspace}
Let $\Lop$ be a spline admissible operator with a polynomial null space of degree $n_0$ and consider the self-adjoint operator $\Op A=(\Op L^\ast \Op K \Lop)^{-1}$. Then, the native space $\Spc H=L'_{2,\Lop_{\rm R}}(\R^d)$, defined by \eqref{Eq:Nativespace} with $\Spc X'=\Spc X=L_2$, is the reproducing-kernel Hilbert space $\Spc H=\Spc U' \oplus \Spc P$
associated with the composite inner product
\begin{align}\
\langle f_1, f_2 \rangle_{\Spc H}&=\langle\Lop_{\rm R} \{f_1\}, \Lop_{\rm R} \{f_2\} \rangle_{\rm Rad}+
\sum_{|\V k|\le n_0} \langle m_\V k^\ast, f_1 \rangle \langle m_\V k^\ast, f_2 \rangle \nonumber\\
&=\langle(\Lop^\ast \Op K\Lop) \{f_1\}, f_2 \rangle+
\sum_{|\V k|\le n_0} \langle m_\V k^\ast, f_1 \rangle \langle m_\V k^\ast, f_2 \rangle
\label{Eq:Hinner}
\end{align}
whose leading term can also be written as
\begin{align}
\langle(\Lop^\ast \Op K\Lop) \{f_1\}, f_2 \rangle&=\langle(\Lop^\ast \Op K\Lop) \Proj_{\Spc U'}\{f_1\},\Proj_{\Spc U'}\{f_2 \}\rangle \nonumber\\
&=\langle\Proj_{\Spc U'}\{f_1\}, \Proj_{\Spc U'}\{f_2\} \rangle_{\Spc U'},\label{Eq:InnerUP}
\end{align}
where $\Proj_{\Spc U'}=\Identity- \Proj_{\Spc P}: \Spc H \to \Spc U'$.
The topological dual of $\Spc H$ is the Hilbert space $\Spc H'=L_{2,\Lop_{\rm R}}(\R^d)=\Spc U \oplus \Spc P'$ equipped with the
inner product
\begin{align}\
\langle g_1, g_2 \rangle_{\Spc H'}&=\langle\Lop_{\rm R}^{\ast\dagger} g_1, \Lop_{\rm R}^{\ast\dagger} g_2 \rangle_{\rm Rad}+
\sum_{|\V k|\le n_0} \langle m_\V k, g_1\rangle \langle m_\V k, g_2 \rangle \nonumber \\
&=\langle \Lop_{\rm R}^{\dagger}\Lop_{\rm R}^{\ast\dagger} g_1, g_2 \rangle+
\sum_{|\V k|\le n_0} \langle m_\V k, g_1\rangle \langle m_\V k, g_2 \rangle \nonumber\\
&=\langle \Op A\Proj_{\Spc U}\{g_1\}, \Proj_{\Spc U}\{g_2\} \rangle+
\sum_{|\V k|\le n_0} \langle m_\V k, g_1\rangle \langle m_\V k, g_2 \rangle,
\label{Eq:Hinner2}
\end{align}
where $\Proj_{\Spc U}=\Identity- \Proj_{\Spc P'}: \Spc H' \to \Spc U$.
 The corresponding linear isometries (Riesz maps) that map $\Spc U\to \Spc U'$,
 $\Spc P' \to \Spc P$, and 
 $\Spc H\to \Spc H'$  are
\begin{align*}
\Op J_{\Spc U}&= \Op A :  \Spc U  \to \Spc U'\\
&=(\Identity- \Proj_{\Spc P})\Op A (\Identity- \Proj_{\Spc P'}) : \Spc U \oplus \Spc P' \to \Spc U',\\
\Op J_{\Spc P'}&=\Proj_{\Spc P}: \Spc P' \to \Spc P,\\
&=\Proj_{\Spc P}\Proj_{\Spc P'}: \Spc U \oplus \Spc P' \to \Spc P,\\
\Op J_{\Spc H}&=\left((\Identity- \Proj_{\Spc P})\Op A (\Identity- \Proj_{\Spc P'}) +  \Proj_{\Spc P}\Proj_{\Spc P'}\right): \Spc H\to \Spc H',
\end{align*}
where the second, alternative forms of $\Op J_{\Spc U}$ and $\Op J_{\Spc P'}$ that include projectors are the proper extension of those operators to the whole space $\Spc H=\Spc U'\oplus \Spc P$. %
\end{proposition}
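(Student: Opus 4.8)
The plan is to specialize the general native-space construction (Theorems~\ref{Theo:Native} and~\ref{Theo:PredualNative}) to the Hilbertian setting $\Spc X=\Spc X'=L_2(\R\times\mathbb{S}^{d-1})$, in which every space in sight is reflexive and self-dual, so that the predual $\Spc X_{\Op L_{\rm R}}(\R^d)$ of \eqref{Eq:Predualspace} coincides with the genuine topological dual $\Spc H'$. First I would record that $\Op L_{\rm R}^{\dagger}$ is an isometric bijection from $\Spc X'_{\rm Rad}=L_{2,{\rm Rad}}$ onto $\Spc U'$ and that $\Spc P$ is a finite-dimensional Hilbert space under the $\ell_2$-norm of its Taylor coefficients; the isometric isomorphism $\Spc H\cong L_{2,{\rm Rad}}\times\Spc P$ of \eqref{Eq:Nativespace} then endows $\Spc H=\Spc U'\oplus\Spc P$ with a Hilbert structure. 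The decomposition is \emph{orthogonal} for the pulled-back inner product, because $\Op L_{\rm R}\{p_0\}=0$ kills the first term for $p_0\in\Spc P$, while $\Proj_{\Spc P}f=0$ kills the second for $f\in\Spc U'$; summing the two component inner products (the $L_{2,{\rm Rad}}$ product transported through $\Op L_{\rm R}$ and the $\ell_2$ product of the coefficients $\langle f,m_{\V k}^\ast\rangle$) yields the first line of \eqref{Eq:Hinner}.

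The second, operator-theoretic form of the leading term is the one genuinely computational step. Here I would invoke the intertwining identity $\Op L_{\rm R}=\Op R\,\Op K\,\Op L$ and the relation $\Op R^\ast\Op R=\Op K^{-1}$ from Theorem~\ref{Theo:RadonS0}, together with the self-adjointness of the radial, real-valued multipliers $\Op K$ and $\Op L$, to rewrite
\begin{align*}
\langle \Op L_{\rm R}\{f_1\},\Op L_{\rm R}\{f_2\}\rangle_{\rm Rad}
&=\langle \Op K\Op L\{f_1\},\Op R^\ast\Op R\,\Op K\Op L\{f_2\}\rangle\\
&=\langle (\Op L^\ast\Op K\Op L)\{f_1\},f_2\rangle,
\end{align*}
which is the second line of \eqref{Eq:Hinner}; inserting $\Identity-\Proj_{\Spc P}$ on each side (legitimate because $\Op L$ annihilates $\Spc P$) gives \eqref{Eq:InnerUP}. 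The operator $\Op A=(\Op L^\ast\Op K\Op L)^{-1}$ then has frequency response $1/(c_d|\widehat L|^2\|\bw\|^{d-1})$, so its impulse response is exactly the radial-basis function $\rho_{\rm iso}$ of \eqref{Eq:RBF}; this identifies the reproducing kernel of $\Spc H$ as the Green's function of $\Op A$ (plus the finite-rank polynomial correction), which justifies the name RKHS together with the continuity of point evaluations established below.

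For the dual side I would use reflexivity to identify $\Spc H'=\Spc X_{\Op L_{\rm R}}(\R^d)=\Spc U\oplus\Spc P'$ and repeat the computation with $\Op L_{\rm R}^{\ast\dagger}=\Op R\,\Op L^{-1\ast}_{\Spc P}$ in place of $\Op L_{\rm R}$: the same two moves ($\Op R^\ast\Op R=\Op K^{-1}$ and self-adjointness) turn $\langle \Op L_{\rm R}^{\ast\dagger}g_1,\Op L_{\rm R}^{\ast\dagger}g_2\rangle_{\rm Rad}$ into $\langle \Op L^{-1}g_1,\Op K^{-1}\Op L^{-1}g_2\rangle=\langle \Op A\,g_1,g_2\rangle$ on $\Spc U$, which is the content of \eqref{Eq:Hinner2} once the projectors $\Proj_{\Spc U}=\Identity-\Proj_{\Spc P'}$ are reinstated. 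The RKHS property itself reduces to boundedness of the point functionals: $|f(\V x_0)|=|\langle f,\delta(\cdot-\V x_0)\rangle|\le\|f\|_{\Spc H}\,\|\delta(\cdot-\V x_0)\|_{\Spc H'}$, and $\delta(\cdot-\V x_0)\in\Spc H'$ precisely because $\nu_{\V x_0}=\Op L_{\rm R}^{\ast\dagger}\{\delta(\cdot-\V x_0)\}\in L_{2,{\rm Rad}}$, which is the case $q=2$ of Theorem~\ref{Theo:GenImpulse}.

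Finally, the Riesz maps follow from the orthogonal block structure. On the infinite-dimensional block, the isometry $\Spc U\to\Spc U'$ is $\Op J_{\Spc U}=\Op A$ (equivalently $\Op L_{\rm R}^{\dagger}\Op L_{\rm R}^{\ast\dagger}$), characterised by $\langle\Op A g_1,g_2\rangle=\langle g_1,g_2\rangle_{\Spc U}$; on the finite-dimensional block, the isometry $\Spc P'\to\Spc P$ is $\Op J_{\Spc P'}=\Proj_{\Spc P}$, where the biorthogonality $\langle m_{\V k},m_{\V n}^\ast\rangle=\delta_{\V k-\V n}$ of Lemma~\ref{Theo:IsotropicWindow} makes $\Proj_{\Spc P}$ an $\ell_2$-isometry of the coefficient sequences; and $\Op J_{\Spc H}$ is their direct sum, written with the projectors $\Identity-\Proj_{\Spc P}$ and $\Identity-\Proj_{\Spc P'}$ (resp.\ $\Proj_{\Spc P}$ and $\Proj_{\Spc P'}$) so that each block is extended by zero to all of $\Spc H$. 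I expect the main obstacle to be bookkeeping rather than depth: one must verify that every duality pairing above is legitimate on the native space (correct domains, the null space genuinely factored out so that the leading term is a norm on $\Spc U'$ and $\Op A$ is well-defined on $\Spc U$), and confirm positive-definiteness and completeness so that the orthogonal sum is a bona fide Hilbert space rather than a merely formal direct sum.
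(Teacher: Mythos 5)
Your proposal is correct and follows essentially the same route as the paper, which obtains Proposition~\ref{Theo:RKHSspace} as a corollary of Theorem~\ref{Theo:PredualNative} with $\Spc X=L_2$, using the intertwining identity $\Op L_{\rm R}=\Op R\Op K\Op L$ together with $\Op R^\ast\Op R=\Op K^{-1}$ to pass between the Radon-domain and Cartesian forms of the inner products, verifying that the component semi-inner products induce the norms of \eqref{Eq:Nativespace} and \eqref{Eq:Predualspace}, and reducing the reproducing-kernel property to $\Lop_{\rm R}^{\ast\dagger}\{\delta(\cdot-\V x_0)\}\in L_{2,\rm Rad}$ via Theorem~\ref{Theo:GenImpulse}. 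Your version is in fact somewhat more explicit than the paper's sketch (e.g., the blockwise derivation of the Riesz maps and the identification of $\Op A\{\delta\}$ with $\rho_{\rm iso}$), with no substantive deviation.
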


Proposition \ref{Theo:RKHSspace} is obtained as a corollary of Theorem \ref{Theo:PredualNative}  with $\Spc X=L_2$, with the help of the intertwining property $\Op L_{\rm R}=\Op K_{\rm rad}\Op R \Op L= \Op R \Op K\Op L$ (see the discussion at the end of Section \ref{Sec:RadonDistributions}).
The technical part is to establish the completeness of $\Spc U$ (resp., 
$\Spc H'=\Spc U\oplus \Spc P'$), which then implies that of $\ \Spc U'$ (resp., 
$\Spc H=\Spc H''=\Spc U'\oplus \Spc P$) by duality. 
It then suffices to verify that the semi-inner products associated with each individual term in \eqref{Eq:Hinner} and \eqref{Eq:Hinner2} induce the four component norms $\|w\|_{\Spc U'}$, $\|p_0\|_{\Spc P}$, $\|v\|_{\Spc U}$, $\|p_0^\ast\|_{\Spc P'}$ for $\Spc X=\Spc X'=L_2$ that appear in the definitions \eqref{Eq:Nativespace} and \eqref{Eq:Predualspace} of the native space and of its predual. 
The denomination RKHS applies to any Hilbert space $\Spc H$ of functions on $\R^d$ such that $\delta(\cdot-\V x_0) \in \Spc H'$ for any $\V x_0 \in \R^d$. In our case, this property is equivalent to $\Lop^{\ast\dagger}_{\rm R}\{\delta(\cdot-\V x_0)\} \in L_{2, {\rm Rad}}
$, which follows from Theorem \ref{Theo:GenImpulse}.

 
To get further insight, we now show that the RBF solution \eqref{Eq:RBF} is  a particular case of the $L_p$ solution \eqref{Eq:RadLp} with $p=2$.
Since $L_2=(L_2)'$ is its own dual, $\Op J=\Identity$, which allows us to manipulate  \eqref{Eq:RadLp} as 
\begin{align}
f&=p_0 + \Lop_{\Spc P}^{-1}\Op R^\ast\Op R \Lop_{\Spc P}^{-1\ast}\{\sum_{m=1}^{M} a_m\delta(\cdot - \V x_m)\} \nonumber \\
&=  p_0 +\Lop_{\Spc P}^{-1}\Op K^{-1}\Lop_{\Spc P}^{-1\ast}\{\sum_{m=1}^{M} a_m\delta(\cdot - \V x_m)\}\nonumber \\
&= p_0 + (\Identity- \Proj_{\Spc P})\Op A (\Identity- \Proj_{\Spc P'})\{\sum_{m=1}^{M} a_m\delta(\cdot - \V x_m)\}\label{Eq:RBFRad2}\\
&=p_0 + \Op A\{\sum_{m=1}^{M} a_m\delta(\cdot - \V x_m)\}
=  p_0 +\sum_{m=1}^{M} a_m \rho_{\rm iso}(\V x- \V x_m), \label{Eq:RBFRad}
\end{align}
where $\rho_{\rm iso}=(\Lop^\ast \Op K \Lop)^{-1}\{\delta\}=\Op A\{\delta\}: \R^d \to \R$ is the Green's function of $(\Lop^\ast \Op K \Lop)$  and  $p_0\in \Spc P$. 
The nonobvious 
simplification from \eqref{Eq:RBFRad2}
to \eqref{Eq:RBFRad} results from two crucial observations: (1) the ``orthogonality-to-the-null-space'' condition $\sum_{m=1}^{M} a_m\delta(\cdot - \V x_m) \in \Spc U$ is necessary for optimality; and (2) the availability of the identity
\begin{align*}
 \forall u\in \Spc U: \quad  (\Identity- \Proj_{\Spc P})\Op A (\Identity- \Proj_{\Spc P'})\{ u\} =\Op A\{ u\} =u^\ast \in \Spc U',
 \end{align*}
which is tightly linked to the specification of the underlying spaces in Proposition \ref{Theo:RKHSspace}.
%
Likewise, we find that the quadratic regularization cost associated with the linear model \eqref{Eq:RBFRad} is $\V a^\Top \M G \V a$, where 
$\M G\in \R^{M \times M}$ is a symmetric, conditionally positive-definite matrix whose entries are calculated as follows:
\begin{align*}
[\M G]_{m,n}&=\langle \Lop_{\rm R}\rho_{\rm iso}(\cdot-\V x_m), \Lop_{\rm R}\rho_{\rm iso}(\cdot-\V x_n)\rangle_{\rm Rad}\\
&=\langle \Op R \Op K \Op L \{\rho_{\rm iso}(\cdot-\V x_m)\}, \Op R \Op K \Op L \{\rho_{\rm iso}(\cdot-\V x_n)\}\rangle_{\rm Rad}\\
&=\langle \Op K \Op L \{\rho_{\rm iso}(\cdot-\V x_m)\}, \Op R^\ast \Op R \Op K \Op L \{\rho_{\rm iso}(\cdot-\V x_n)\}\rangle\\
&=\langle  \rho_{\rm iso}(\cdot-\V x_m), \Op L^\ast \Op K\underbrace{\Op R^\ast \Op R \Op K}_{\Identity} \Op L \{\rho_{\rm iso}(\cdot-\V x_n)\}\rangle\\
&=
\langle \rho_{\rm iso}(\cdot - \V x_m),\delta(\cdot-\V x_n)\rangle=\rho_{\rm iso}(\V x_n-\V x_m)=\rho_{\rm iso}(\V x_m-\V x_n).
\end{align*}

As variant of the $L_2$ result in Theorem \ref{Theo:RadonSplines}, we may also consider the modified regularization operator $\tilde \Lop=\Op K^{-\frac{1}{2}} \Op L$ whose frequency response is
$\widehat{\tilde L}(\bw)=\sqrt{2}(2\pi)^{(d-1)/2}\widehat L(\bw)/\|\bw\|^{(d-1)/2}$. For this particular setting, $\tilde{\Op L}_{\rm R}=\Op R\Op K^{\frac{1}{2}}\Op L$, which translates into
$$\|\tilde{\Op L}_{\rm R}f\|^2_{L_2(\R \times \mathbb{S}^{d-1})}=\|\Op R\Op K^{\frac{1}{2}}\Op Lf\|^2_{L_2(\R \times \mathbb{S}^{d-1})}=\|\Op L f\|^2_{L_2(\R^d)},$$
owing to the property that $\Op R\Op K^{\frac{1}{2}}$ is an $L_2$ isometry \cite{Ludwig1966}.
The proposed Radon-domain regularization therefore reduces to the standard energy functional associated with (semi-)reproducing-kernel Hilbert spaces. The resulting basis function 
is $\tilde \rho_{\rm iso}(\V x)=\Fourier^{-1}\{1/|\widehat L|^2\}(\V x)$, which is the same as the one encountered in the classical formulation that does not involve the Radon transform.
While this may suggest that the two formulations are equivalent, there is an important difference
that concerns the dimension of the null space.

As a matter of illustration, we now compare two schemes that utilize the same ``linear'' radial basis $\rho_{\rm iso} (\V x)\propto  \|\V x\|$. Based on \eqref{Eq:GreenLap}, we deduce that this corresponds to the choice $|\widehat L(\bw)|^2=\|\bw\|^{d+1}$ in the classical formulation, which induces the regularization norm $\|(-\Delta)^{(d+1)/2}\{f\}\|_{L_2}$ with a polynomial null space of degree $n_0=\lceil (d+1)/2\rceil> d/2$ \cite{Duchon1977,Wendland2005}.
In our proposed Radon-domain variant, the appropriate regularization 
norm is $\|\Op R\Op K (-\Delta)^{1/2}\{f\}\|_{L_2}$ with a null space of polynomial degree $n_0=0$. This solution is attractive because it does not depend on the dimensionality of the data.
\subsection{Universal Approximation Properties}
\label{Sec:Universal}
The universal-approximation properties of the supervised-learning scheme specified by \eqref{Eq:RadonEnergy} are supported by Theorem \ref{Theo:Native} below,
which summarizes the properties of the native space $\Spc X'_{\Lop_{\rm R}}$ in relation to the regularization operator $\Lop_{\rm R}$ and its generalized inverse $ \Lop^{\dagger}_{\rm R}$.
This result is a direct corollary (dual counterpart) of Theorem \ref{Theo:PredualNative}, whose proof is given in Section \ref{Sec:Predual}.

\begin{theorem}[Properties of the native space $\Spc X'_{\Lop_{\rm R}}$]
\label{Theo:Native} 
Let $\Lop$ be an isotropic LSI operator that is spline-admissible with a polynomial null space $\Spc P$ (possibly trivial) of degree $n_0$.
Then, the operators $\Lop_{\rm R}=\Op K_{\rm rad}\Op R\Op L: \Spc X'_{\Lop_{\rm R}} \to \Spc X'_{\rm Rad}$ and  $\Lop_{\rm R}^{\dagger}=(\Identity -\Proj_{\Spc P})\Op L^{-1}\Op R^\ast: \Spc X'_{\rm Rad} \to L_{\infty,-n_0}(\R^d)$ (the adjoint of $\Lop_{\rm R}^{\ast\dagger}$ in Theorem \ref{Theo:PredualNative}) are continuous and have the properties
\begin{align}
\forall w\in \Spc X'_{\rm Rad}
: &\quad \Lop_{\rm R}\Lop_{\rm R}^{\dagger}
\label{Eq:Native1}
\{w\}=w\\
\forall p_0\in \Spc P: & \quad \Lop_{\rm R}\{p_0\}=0 \label{Eq:PrenativeNull}\\
\forall f \in \Spc X'_{\Lop_{\rm R}}(\R^d): & \quad \Lop_{\rm R}^{\dagger}\Lop_{\rm R}\{f\}
=(\Identity-\Proj_{\Spc P})\{f\}=\Proj_{\Spc U'}\{ f\},
\label{Eq:isoNative2}
\end{align}
where $\Spc X'_{\Lop_{\rm R}}(\R^d)\eqdef\Lop^{\dagger}_{\rm R}(\Spc X'_{\rm Rad}) \oplus \Spc P$ is 
equipped with the composite norm $\|f\|_{\Spc X'_{\Lop_{\rm R}}}=\|\Lop_{\rm R}\{f\}\|_{\Spc X'_{\rm Rad}}+ \|\Proj_{\Spc P}\{f\}\|_{\Spc P}$. The space $\Spc X'_{\Lop_{\rm R}}$ is complete and
 isomorphic to $\Spc X'_{\rm Rad} \times \Spc P$
with $f =\Lop^{\dagger}_{\rm R}\{u\} + p_0 \mapsto (w,p_0)=(\Lop_{\rm R}\{f\}, \Proj_{\Spc P}\{f\})$. Moreover, $\Spc S(\R^d) \embedC \Spc X'_{\Op L_{\rm R}}(\R^d) \embedC L_{\infty,-n_0}(\R^d)\embedD \Spc S'(\R^d)$.
\end{theorem}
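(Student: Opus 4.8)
The plan is to read off Theorem~\ref{Theo:Native} as the dual counterpart of Theorem~\ref{Theo:PredualNative}, splitting the work into two parts: a purely algebraic verification of the three operator identities \eqref{Eq:Native1}--\eqref{Eq:isoNative2}, and a transfer of the Banach-space structure together with the embeddings by duality. Throughout I would use the factored forms $\Lop_{\rm R}=\Op K_{\rm rad}\Op R\Op L$ and $\Lop_{\rm R}^{\dagger}=(\Identity-\Proj_{\Spc P})\Op L^{-1}\Op R^\ast$, the null-space relation $\Op L\Proj_{\Spc P}=0$ from Section~\ref{Sec:Null}, and the two Radon inversion identities $\Op K_{\rm rad}\Op R\Op R^\ast=\Identity$ on $\Spc X'_{\rm Rad}$ (Theorem~\ref{Theo:Complementedspaces}) and $\Op R^\ast\Op K_{\rm rad}\Op R=\Identity$ on $\Spc S'(\R^d)$ (Theorem~\ref{Theo:InvertRadonDist}).

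For the identities, \eqref{Eq:PrenativeNull} is immediate: the analysis of Section~\ref{Sec:Null} gives $\Op L\{p_0\}=0$ for every $p_0\in\Spc P$, so applying $\Op K_{\rm rad}\Op R$ yields $\Lop_{\rm R}\{p_0\}=0$. For \eqref{Eq:Native1} I would compute $\Lop_{\rm R}\Lop_{\rm R}^{\dagger}=\Op K_{\rm rad}\Op R\,\Op L(\Identity-\Proj_{\Spc P})\Op L^{-1}\Op R^\ast$; since $\Op L(\Identity-\Proj_{\Spc P})=\Op L$, this collapses to $\Op K_{\rm rad}\Op R\Op R^\ast$, which is the identity on $\Spc X'_{\rm Rad}$. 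For \eqref{Eq:isoNative2} I would write $\Lop_{\rm R}^{\dagger}\Lop_{\rm R}=(\Identity-\Proj_{\Spc P})\Op L^{-1}\,\Op R^\ast\Op K_{\rm rad}\Op R\,\Op L$ and use $\Op R^\ast\Op K_{\rm rad}\Op R=\Identity$ to reduce it to $(\Identity-\Proj_{\Spc P})\Op L^{-1}\Op L$; the one point needing care is that $\Op L^{-1}\Op L$ must be read on the range of $\Lop_{\rm R}^{\dagger}$, where it acts as the identity, so that the outcome is $(\Identity-\Proj_{\Spc P})\{f\}=\Proj_{\Spc U'}\{f\}$ for every $f$ in the native space.

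The Banach structure then follows from the dual pairing of Theorem~\ref{Theo:PredualNative}: since $\Spc X'_{\Op L_{\rm R}}(\R^d)$ is, by construction \eqref{Eq:Nativespace}--\eqref{Eq:Predualspace}, the continuous dual of the Banach predual $\Spc X_{\Op L_{\rm R}}(\R^d)$, it is automatically complete. The map $f\mapsto(\Lop_{\rm R}\{f\},\Proj_{\Spc P}\{f\})$ is the claimed isomorphism onto $\Spc X'_{\rm Rad}\times\Spc P$: injectivity comes from \eqref{Eq:isoNative2} together with the polynomial term, surjectivity from writing any pair $(w,p_0)$ as $f=\Lop_{\rm R}^{\dagger}\{w\}+p_0$ and applying \eqref{Eq:Native1} and \eqref{Eq:PrenativeNull}, and the map is isometric by the very definition of the composite norm $\|f\|_{\Spc X'_{\Op L_{\rm R}}}=\|\Lop_{\rm R}\{f\}\|_{\Spc X'_{\rm Rad}}+\|\Proj_{\Spc P}\{f\}\|_{\Spc P}$. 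Continuity of $\Lop_{\rm R}$ and of $\Lop_{\rm R}^{\dagger}$ is read off from this isometry.

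It remains to establish the chain $\Spc S(\R^d)\embedC\Spc X'_{\Op L_{\rm R}}(\R^d)\embedC L_{\infty,-n_0}(\R^d)\embedD\Spc S'(\R^d)$. The inclusion $\Spc X'_{\Op L_{\rm R}}\embedC L_{\infty,-n_0}$ is immediate from the middle description in \eqref{Eq:Nativespace}, and $L_{\infty,-n_0}\embedD\Spc S'$ is standard. The substantive step, which I expect to be the main obstacle, is the continuity of $\Spc S(\R^d)\embedC\Spc X'_{\Op L_{\rm R}}$: one must bound $\|\varphi\|_{\Spc X'_{\Op L_{\rm R}}}$ by a Schwartz seminorm, and the only nontrivial term is $\|\Lop_{\rm R}\{\varphi\}\|_{\Spc X'}=\|\Op K_{\rm rad}\Op R\Op L\{\varphi\}\|_{\Spc X'}$ (the projection term is controlled at once because $m_{\V k}^\ast\in\Spc S(\R^d)$). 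This is exactly where the admissibility hypotheses of Definition~\ref{Def:SplineAdNonTrivial} enter: the decay afforded by Condition~3 and the smoothing/decay of $\Op K$ must be combined to show that the filtered Radon transform carries $\Op L\big(\Spc S(\R^d)\big)$ continuously into the even Radon-domain space $\Spc X'_{\rm Rad}$, controlling simultaneously the decay in $t$ and the regularity across $\mathbb{S}^{d-1}$. Since this is precisely the estimate performed in the proof of Theorem~\ref{Theo:PredualNative}, in the present corollary it can be inherited by duality rather than redone.
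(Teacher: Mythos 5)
Your proposal follows the paper's own route: the paper disposes of Theorem \ref{Theo:Native} in a single sentence, declaring it a direct corollary (dual counterpart) of Theorem \ref{Theo:PredualNative}, which is exactly the duality transfer you carry out --- completeness because the native space is the dual of the Banach predual, the isometric isomorphism with $\Spc X'_{\rm Rad}\times\Spc P$, and the embedding chain obtained by dualizing $\Spc S(\R^d)\embedD L_{1,n_0}(\R^d)\embedD\Spc X_{\Op L_{\rm R}}(\R^d)\embedD\Spc S'(\R^d)$ (using reflexivity of $\Spc S$ for the leftmost inclusion). Your explicit algebraic verification of \eqref{Eq:Native1}--\eqref{Eq:isoNative2} via $\Op L(\Identity-\Proj_{\Spc P})=\Op L$ and the two Radon inversion identities only makes visible what the paper leaves implicit, and you are right that the substantive analytic estimates live entirely in the proof of Theorem \ref{Theo:PredualNative} and need not be redone here.
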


To explain how Theorem \ref{Theo:Native}  relates to universality, let us first consider the case $\Spc X'=L_2$ for which we have just shown that $\Spc X'_{\Lop_{\rm R}}=\Spc H$ is a RKHS whose (semi-)reproducing kernel 
is 
$\rho_{\rm iso}=(\Lop^\ast \Op K \Lop)^{-1}\{\delta\}$. From the general properties of
RKHS \cite{Aronszajn1950,Wendland2005}, we know that $\Spc H$ (as a set) can be specified as 
$\Spc H=\overline{{\rm span}\{\rho_{\rm iso}(\cdot-\V y)\}}_{\V y\in \R^d} + \Spc P$, which tells us that the class of RBF estimators of the form given by \eqref{Eq:RBF} is dense in $\Spc H$. This means that such estimators can yield an approximation of any $f \in \Spc H$ to an arbitrary degree of precision. In particular, this applies to any $f \in \Spc S(\R^d)$, due to the inclusion $\Spc S(\R^d) \subset \Spc X'_{\Op L_{\rm R}}(\R^d)$, as guaranteed by Theorem \ref{Theo:Native}. Now, the key to universal approximation is that $\Spc S(\R^d)$ is dense in most of the classical function spaces \cite{Schwartz:1966}, in particular, $C_0(\R^d)$. We then readily deduce that \eqref{Eq:RBF}, for $M$ sufficiently large and a suitable choice of the $\V x_m$, has the ability to reproduce any continuous function $f: \R^d \to \R$. This deduction, of course, is consistent with the classical theory of kernel estimators: Our admissibility conditions for $\widehat{ L}_{\rm rad}$ in Definition \ref{Def:SplineAdm1} (resp., in Definition \ref{Def:SplineAdNonTrivial}) ensure that the function
$\rho_{\rm iso}: \R^d \to \R$ is strictly positive-definite (resp., strictly conditionally positive-definite), which is the standard criterion for universality \cite{Micchelli1986, Micchelli2006,Wendland2005}.

The same kind of density argument can also be made for $\Spc X'=\Spc M$.
The relevant basis elements there are the atomic Radon-compatible Dirac measures $\delta_{{\rm Rad},(t_k,\V \xi_k)}\in \Spc M_{\rm Rad}$ with $(t_k,\V \xi_k)\in \R \times \mathbb{S}^{d-1}$. These get mapped
into $e_k=\Op L^{\dagger}_{\rm R}\{\delta_{{\rm Rad},(t_k,\V \xi_k)}\} \in \Spc U'$, which are essentially ridges (up to some polynomial) characterized by $e_k=\rho_{\rm rad}(\V \xi_k^\Top \cdot -t_k)- p_{0,k}$ with 
$p_{0,k}=\Proj_{\Spc P}\{\rho_{\rm rad}(\V \xi_k^\Top \cdot -t_k)\}$. Thus, by setting $w\approx \sum_k w_k \delta_{{\rm Rad},(t_k,\V \xi_k)}$, we can interpret the generative formula 
$f=\Op L^{\dagger}_{\rm R}\{w\}+ p_0$ in Theorem \ref{Theo:Native} as a linear superposition of ridges plus a global polynomial trend, which is compatible with the form of the estimator in  \eqref{Eq:Extremespline}. 
We then invoke the property that $\Spc S(\R^d) \subset \Spc M_{\Op L_{\rm R}}(\R^d)$,
which implies that any continuous function can be approximated as closely as desired by a member of 
$\Spc M_{\Op L_{\rm R}}(\R^d)$. As it turns out, the latter is representable by a superposition of 
ridges plus a polynomial of degree $n_0$. We emphasize that the presence of the polynomial term---the guarantor of stability for Theorem \ref{Theo:Stability}---is essential to counterbalance the growth of the individual atoms at infinity. This is an important aspect where our analysis and conclusions deviate from those of \cite{Sonoda2017}.

%
%
%

\subsection{Regularization Operators for Antisymmetric Activations}
\label{Sec:Antisym}
While the framework that has been discussed so far is very powerful, it has one shortcoming: it yields canonical activation functions $\rho_{\rm rad}$ that are necessarily symmetric. In some cases such as $\Lop_{\rm R}=\Op K_{\rm rad} \Op R(-\Delta)$, these can be converted into one-sided functions such as the ReLU by doctoring the  polynomial term.
That said, the original scheme that is described by Theorem \ref{Theo:RadonSplines} does not allow for sigmoids, which are frequently used for classification \cite{Bishop2006}. This is the reason why we now introduce a 
variant that systematically produces anti-symmetric activations, including sigmoids for $n_0=0$. 

The idea is to substitute the (symmetric) filtering operator $\Op K_{\rm rad}$ by its anti-symmetric counterpart $\tilde{\Op K}_{\rm rad}$, which includes an additional Hilbert transform. Specifically, $\tilde{\Op K}_{\rm rad}$ is the hyper-spherical radial filter whose frequency response is
$\widehat{\tilde{K}}_{\rm rad}(\omega)=-\jj \,{\rm sign}(\omega) c_p|\omega|^{-d-1}$ and whose adjoint is $\tilde{\Op K}^\ast_{\rm rad}=-\tilde{\Op K}_{\rm rad}$. Since the action of the (radial) Hilbert transform $\tilde{\Op H}_{\rm rad}: \phi(\cdot,\V \xi) \mapsto \phi \astrad 1/(\pi t)$ is well-defined on $\Spc S(\R \times \mathbb{S}^{d-1})$
with $\tilde{\Op H}^\ast_{\rm rad}=-\tilde{\Op H}_{\rm rad}=\tilde{\Op H}^{-1}_{\rm rad}$, we have the identity $\Op R^\ast \Op K_{\rm rad} \Op R=
\Op R^\ast \Op K_{\rm rad}\tilde{\Op H}_{\rm rad}^\ast \tilde{\Op H}_{\rm rad}\Op R=\Op R^\ast \tilde{ \Op K}_{\rm rad}^\ast \tilde{\Op R}=\Identity$ with $\tilde{\Op R}\eqdef\tilde{\Op H}_{\rm rad}\Op R$. Accordingly, we can essentially replicate the whole mechanism of construction of spaces in Section \ref{Sec:ComplementedSpaces} by subtituting $\Spc S_{\rm Rad}$ by $\tilde{\Spc S}_{\rm Rad}=\tilde{\Op R}(\Spc S(\R \times \mathbb{S}^{d-1})=\tilde{\Op H}_{\rm rad}(\Spc S_{\rm Rad})$, which is a space of odd functions that are smooth ($C^\infty$) and included in $L_{p}(\R \times \mathbb{S}^{d-1})$ for all $p\ge 1$. While the members of $\tilde{\Spc S}_{\rm Rad}$ do not necessarily decay rapidly, the mapping $\Op R^\ast\tilde{\Op K}^\ast_{\rm rad}: \tilde{\Spc S}_{\rm Rad} \to \Spc S(\R^d)$ is still guaranted to be an isomorphism (see \cite[Theorem 3.3.1, p. 83]{Ramm2020} where similar arguments are used). 
Under the assumption that the hyper-spherical norm $\|\cdot\|_{\Spc X}$ is continuous on $\tilde{\Spc S}_{\rm Rad}$, we can readily adapt the proof
of Theorem \ref{Theo:Complementedspaces} to establish the following.

\begin{proposition}[Odd Radon-compatible Banach spaces]
\label{Theo:Complementedspaces2}
Consider the Banach space $\tilde{\Spc X}_{\rm Rad}=\overline{(\tilde{\Spc S}_{\rm Rad},\|\cdot\|_{\Spc X})}$ of odd hyper-spherical functions. Then, the following holds.
\begin{enumerate}
\item The map $\Op R^\ast\tilde{\Op K}^\ast_{\rm rad}: \tilde{\Spc X}_{\rm Rad} \to  \tilde{\Spc Y}=\Op R^\ast\tilde{\Op K}^\ast_{\rm rad}\big(\tilde{\Spc X}_{\rm Rad}\big)$ is an isometric bijection, with $\tilde{\Op R}\Op R^\ast\tilde{\Op K}^\ast_{\rm rad}=\Identity$ on $\tilde{\Spc X}_{\rm Rad}$.
\item The map $\tilde{\Op R}^\ast: \tilde{\Spc X}'_{\rm Rad} \to  \tilde{\Spc Y}'=\tilde{\Op R}^\ast\big(\tilde{\Spc X}'_{\rm Rad}\big)$ is an isometric bijection,
with $\tilde{\Op K}_{\rm rad}\Op R\tilde{\Op R}^\ast=\Identity$ on $\tilde{\Spc X}'_{\rm Rad}$.
\end{enumerate}
\end{proposition}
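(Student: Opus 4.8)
The plan is to transcribe the proof of Theorem~\ref{Theo:Complementedspaces} almost verbatim, replacing the symmetric filter $\Op K_{\rm rad}$ and the pair $(\Op R,\Op R^\ast)$ by their antisymmetric counterparts $\tilde{\Op K}_{\rm rad}$ and $(\tilde{\Op R},\tilde{\Op R}^\ast)$. The single structural ingredient that drives everything is the inversion identity $\Op R^\ast\tilde{\Op K}^\ast_{\rm rad}\tilde{\Op R}=\Identity$ on $\Spc S(\R^d)$, established in the discussion preceding the statement (it rests only on $\tilde{\Op H}^\ast_{\rm rad}\tilde{\Op H}_{\rm rad}=\Identity$). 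First I would promote this to a left-inverse statement on the dense subspace $\tilde{\Spc S}_{\rm Rad}=\tilde{\Op R}\big(\Spc S(\R^d)\big)$: writing an arbitrary element as $g=\tilde{\Op R}\varphi$ with $\varphi\in\Spc S(\R^d)$, one gets
\begin{align*}
\tilde{\Op R}\Op R^\ast\tilde{\Op K}^\ast_{\rm rad}\{g\}=\tilde{\Op R}\big(\Op R^\ast\tilde{\Op K}^\ast_{\rm rad}\tilde{\Op R}\big)\{\varphi\}=\tilde{\Op R}\{\varphi\}=g,
\end{align*}
so $\tilde{\Op R}$ is a left inverse of $\Op R^\ast\tilde{\Op K}^\ast_{\rm rad}$ on $\tilde{\Spc S}_{\rm Rad}$; a symmetric one-line computation shows it is also a right inverse on the image, whence $\Op R^\ast\tilde{\Op K}^\ast_{\rm rad}$ is a bijection of $\tilde{\Spc S}_{\rm Rad}$ onto $\Op R^\ast\tilde{\Op K}^\ast_{\rm rad}\big(\tilde{\Spc S}_{\rm Rad}\big)$ with inverse $\tilde{\Op R}$.

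For Item~1, I would equip $\tilde{\Spc Y}$ with the pushforward norm $\|f\|_{\tilde{\Spc Y}}\eqdef\|\tilde{\Op R}f\|_{\Spc X}$, under which $\Op R^\ast\tilde{\Op K}^\ast_{\rm rad}$ is an isometry by the displayed identity, the map being injective so that this is a genuine norm. The operators $\Op R^\ast\tilde{\Op K}^\ast_{\rm rad}$ and $\tilde{\Op R}$ are then mutually inverse isometries between $\tilde{\Spc S}_{\rm Rad}$ and its image, and I would pass to the completion $\tilde{\Spc X}_{\rm Rad}=\overline{(\tilde{\Spc S}_{\rm Rad},\|\cdot\|_{\Spc X})}$ exactly as in Theorem~\ref{Theo:Complementedspaces}: under the standing hypothesis that $\|\cdot\|_{\Spc X}$ is continuous on $\tilde{\Spc S}_{\rm Rad}$, both maps are bounded, hence extend uniquely by density, the identity $\tilde{\Op R}\Op R^\ast\tilde{\Op K}^\ast_{\rm rad}=\Identity$ persists on $\tilde{\Spc X}_{\rm Rad}$, and the extended $\Op R^\ast\tilde{\Op K}^\ast_{\rm rad}$ is an isometric bijection onto the (complete) image $\tilde{\Spc Y}$.

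Item~2 I would obtain purely by duality, with no new analysis. Taking the Banach adjoint of the isometric bijection $\Op R^\ast\tilde{\Op K}^\ast_{\rm rad}:\tilde{\Spc X}_{\rm Rad}\to\tilde{\Spc Y}$ yields an isometric bijection $\tilde{\Op K}_{\rm rad}\Op R:\tilde{\Spc Y}'\to\tilde{\Spc X}'_{\rm Rad}$, whose inverse is the adjoint of $\tilde{\Op R}$, namely $\tilde{\Op R}^\ast:\tilde{\Spc X}'_{\rm Rad}\to\tilde{\Spc Y}'$. The identity $\tilde{\Op K}_{\rm rad}\Op R\,\tilde{\Op R}^\ast=\Identity$ on $\tilde{\Spc X}'_{\rm Rad}$ is then nothing but the adjoint of $\tilde{\Op R}\Op R^\ast\tilde{\Op K}^\ast_{\rm rad}=\Identity$, and the compatibility of the two dual pairings follows from
\begin{align*}
\langle\tilde{\Op R}^\ast\{w\},\Op R^\ast\tilde{\Op K}^\ast_{\rm rad}\{v\}\rangle=\langle w,\tilde{\Op R}\Op R^\ast\tilde{\Op K}^\ast_{\rm rad}\{v\}\rangle_{\rm Rad}=\langle w,v\rangle_{\rm Rad}.
\end{align*}

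The main obstacle is not the algebra, which is a verbatim port of the symmetric case, but the functional-analytic legitimacy of the density argument. Unlike the elements of $\Spc S_{\rm Rad}$, those of $\tilde{\Spc S}_{\rm Rad}$ need not decay rapidly, so one must secure (i) that $\Op R^\ast\tilde{\Op K}^\ast_{\rm rad}$ genuinely maps $\tilde{\Spc S}_{\rm Rad}$ isomorphically onto $\Spc S(\R^d)$---this is where the cited result \cite[Theorem~3.3.1]{Ramm2020} enters and absorbs the slow decay---and (ii) that all four operators in play are bounded for $\|\cdot\|_{\Spc X}$, so that the extensions to the completions are unambiguous and the inverse relations survive the limit. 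Once these boundedness and isomorphism facts are in hand, the remainder is the routine completion-and-duality machinery already deployed for Theorem~\ref{Theo:Complementedspaces}.
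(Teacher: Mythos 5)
Your proposal is correct and follows essentially the same route as the paper, which itself only sketches the argument in the paragraph preceding the proposition: establish the inversion identity $\Op R^\ast\tilde{\Op K}^\ast_{\rm rad}\tilde{\Op R}=\Identity$ on $\Spc S(\R^d)$ via the unitarity of the radial Hilbert transform, note that $\Op R^\ast\tilde{\Op K}^\ast_{\rm rad}$ maps $\tilde{\Spc S}_{\rm Rad}$ isomorphically onto $\Spc S(\R^d)$ despite the slow decay (the Ramm citation), and then rerun the isometry--completion--duality machinery of Theorem \ref{Theo:Complementedspaces} under the standing continuity assumption on $\|\cdot\|_{\Spc X}$. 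You have correctly identified both the algebraic core and the two genuine analytic caveats, so nothing is missing.
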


The underlying definition of the ``oddified'' backprojection operator $\tilde{\Op R}^\ast: \tilde{\Spc X}'_{\rm Rad} \to  \tilde{\Spc Y}'$ for $g \in \tilde{\Spc X}_{\rm Rad}'$ is 
\begin{align}
\langle \tilde{\Op R}^\ast\{g\}, \varphi \rangle=\langle g, \tilde{\Op R}\{\varphi\}\rangle_{\rm Rad}
=\langle \tilde{\Op H}^\ast_{\rm rad}\{g\}, \Op R\{\varphi\}\rangle_{\rm Rad}
\label{Eq:Rtildeast}
\end{align}
for all $\varphi \in  \tilde{\Spc Y}$ or, equivalently,
$\varphi \in  \Spc S(\R^d)$ since $\Spc S(\R^d)$ is dense in $\Spc Y$ by construction.
Likewise, by using the property that the Hilbert transform is a homeomorphism from $\Spc S_{\rm Liz, even}$ onto $\Spc S_{\rm Liz, odd}$ \cite{Samko1993} with the underlying Lizorkin spaces being included in
$\Spc S_{\rm Rad}$ and $\tilde{\Spc S}_{\rm Rad}\subset L_{p, {\rm odd}}$, respectively, we can adapt the proof of Lemma \ref{Lem:Even} to show that
$\tilde L_{q,{\rm Rad}}=L_{q, {\rm odd}}$ for $q\in(1,\infty)$,  
$\tilde{C}_{0,{\rm Rad}}=C_{0, {\rm odd}}$ for $p=\infty$, and $\tilde{\Spc M}_{{\rm Rad}}=(\tilde{C}_{0,{\rm Rad}})'=\Spc M_{{\rm odd}}$.

The bottom line is that the whole argumentation, including the critical Fourier-based proofs of Section \ref{Sec:Math}, applies in this modified setting as well. Accordingly, all theorems that mention the regularization operator $\Lop_{\rm R}=\Op K_{\rm rad} \Op R \Lop$
and the radial profile $\rho_{\rm rad}=\Fourier^{-1}\{ 1/\widehat{L}_{\rm rad}\}$ are also valid for the odd setting where these quantifies are substituted by
\begin{align}
\tilde{\Op L}_{\rm R}&\eqdef \tilde{\Op K}_{\rm rad} \Op R \Op L=\tilde{\Op K}_{\rm rad}  \Op L_{\rm rad}\Op R, \\
\tilde{\rho}_{\rm rad}(t)& \eqdef \Fourier^{-1}\left\{ \frac{\jj\,{\rm sign}(\omega)}
{ \widehat{L}_{\rm rad}(\omega) } \right\} (t), \label{Eq:Oddactivation}
\end{align}
where \eqref{Eq:Oddactivation} directly follows from \eqref{Eq:Rtildeast} and 
Theorem \ref{Theo:Rad1DprofilesContinuous}. The conditions for admissibility in Definitions \ref{Def:SplineAdm1} and \ref{Def:SplineAdNonTrivial}, which are all Fourier-based, remain the same, while the adjusted activation $\tilde{\rho}_{\rm rad}$ (the 1D Hilbert transform of $\rho_{\rm rad}$) is real-valued and anti-symmetric
because the original Fourier profile $\widehat{L}_{\rm rad}: \R \to \R$ is symmetric.

We note that our admissibility condition with $\gamma_0=1$ for odd variant ($n_0=0$) is compatible with the condition used by Barron to prove the universality of neural networks with sigmoidal activations \cite{Barron1993}.
It is also worth mentioning that the substitution of $\Op L_{\rm R}$ by $\tilde{\Op L}_{\rm R}$ has no incidence on the form of the RBF in \eqref{Eq:RBF} because of the unitary nature of the Hilbert transform.

\subsection{Specific Configurations}
\label{Sec:Examples}
The proposed framework encompasses a wide variety of kernels and activation functions, with minimal restrictions. For instance, one can start with any strictly positive-definite function $\rho_{{\rm rad},0}$ whose Fourier transform is strictly positive, and construct some higher-order variants by (fractional) integration. The variants are such that $\rho_{{\rm rad},\gamma_0}(t)=\Fourier^{-1}\{\frac{\widehat{\rho}_{{\rm rad},0}(\omega)}{|\omega|^{\gamma_0}}\}(t)$ with suitable $\gamma_0>0$ and are guaranteed to meet the requirements in Definition \ref{Def:SplineAdNonTrivial}. The simplest scenario is to set $\rho_{{\rm rad},0}=\delta$, which maps into a Laplacian-type regularization. 

In Table 1, we provide examples of admissible operators together with their corresponding symmetric and anti-symmetric activations. It is noteworthy that the two most popular sigmoids (${\rm tanh}$ and ${\rm arctan}$) are part of the framework. We can determine the explicit frequency response of their regularization filter and show that it is first-order-admissible with a null space that consists of the polynomials of degree $n_0=0$ (the constants). The symmetric spline activations of odd degree $m-1$ and the anti-symmetric ones of even degree are also known: they coincide with the ridge splines
of Parhi and Novak, which are tied to the Radon-domain operator $\Lop_{\rm rad}=\frac{\partial^{m}}{\partial t^{m}}$  \cite{Parhi2021b}. With the present formulation, we can seamlessly extend this family to fractional orders, in direct analogy with \cite{Unser2000a}, by setting $\Lop=(-\Delta)^{\frac{\alpha+1}{2}}$.
\begin{table}\label{tab:activation}
\begin{tabular}{p{100pt} p{100pt} p{60pt} p{100pt}}
\hline \hline &\\[-1.8ex]$\rho_{\rm rad}(t)$ & $\tilde\rho_{\rm rad}(t)$ & $\widehat{L}_{\rm rad}$ or $\widehat{\tilde L}_{\rm rad}$& $n_0$ (null space)\\[1ex]
\hline\\[-2ex]
{\bf Exponential}\\[1ex]
$\ee^{-|t|}$ & N/A & $1+|\omega|^2$ & $-1$ (trivial)\\[2ex]
& {\bf Classical sigmoids}\\
$ $ & $\frac{\tanh(\tfrac{t}{2})}{2}=\tfrac{1}{2} + \frac{1}{1 + \ee^{-t}}$ & $\frac{\sinh(\pi \omega)} {\pi}$ & $0$ (bias)\\[2ex]
$ $ & $\frac{\arctan(t)}{\pi}$ & $\omega \ee^{|\omega|}$ & $0$\\[2ex]
{\bf Ridge splines} & (of degree $n \in \N$)\\[1ex]
$\tfrac{1}{2}|t|$ (or ReLU) & ${\color{black} t \log |t|}$ & $|\omega|^2$ & $1$ (affine)\\[2ex]
$\propto {\color{black} t^{2n} \log |t|}$ & $\frac{{\rm sign}(t) |t|^{2n}} {(2n)!}$& $|\omega|^{2n+1}$ & $2n\ge 2$ (even)\\[2ex]
$\tfrac{1}{2}\frac{|t|^{2n +1}}{(2n +1) !}$ & $\propto t^{2n+1} \log| t|$ & $|\omega|^{2n+2}$ & $2n+1\ge1$ (odd)\\[2ex]
{\bf Fractional splines} & (degree $\alpha \in \R^+\backslash \N$)\\[1ex]
$\color{black} \frac{|t|^\alpha \sin(\frac{\alpha \pi}{2})}{\pi \Gamma(\alpha)}$ & $\color{black} \frac{ {\rm sign}(t) |t|^{\alpha}\cos(\frac{\alpha \pi}{2})}{\pi \Gamma(\alpha)}$& $|\omega|^{\alpha+1}$ & $\lceil\alpha\rceil$\\
\hline\hline
\end{tabular}
\caption{Examples of admissible symmetric and anti-symmetric activation functions with their corresponding regularization operator. The anti-symmetric activation is given by \eqref{Eq:Oddactivation} and requires the use of the modified filter $\tilde{\Op K}_{\rm rad}$ in the statement of Theorem \ref{Theo:RadonSplines}.}
\end{table}

\section{Supporting Mathematical Results}
\label{Sec:Math}
To answer the fundamental issue of the existence of a solution in Theorem \ref{Theo:RadonSplines}, we need to 1) prove that the ``predual'' space
 $\Spc X_{\Lop}(\R^d)$ is a proper Banach space (Theorem \ref{Theo:PredualNative}); and 2) establish the weak* continuity
 of the sampling functionals $\delta(\cdot-\V x_k)$.
As we shall see, both aspects largely rest upon the functional characterization of the Schwartz kernel of the pseudoinverse operator $\Op L_{\rm R}^{\ast\dagger}$ provided in Theorem \ref{Theo:GenImpulse}.

\subsection{Kernel and Stability of Generalized Inverse Operators} 
Let $\nu: f \mapsto \langle \nu, f\rangle$ be a linear functional that is acting on some Banach space $\Spc X'$. We recall that $\nu$ is weak*-continuous if and only if
$\nu \in \Spc X$, where $\Spc X$ is the predual of $\Spc X'$ \cite{Reed1980}. The condition that $\Spc X'$ is reflexive (i.e., $\Spc X''=\Spc X$) is equivalent to the continuity of $\nu$ on $\Spc X$, which is the standard condition for analysis. However, when $\Spc X'$ is not reflexive (e.g., $\Spc X'=\Spc M=(C_0)'$), the constraint of weak* continuity is stronger than continuity, contrary to what could be suggested by 
the qualifier ``weak." In that case, the predual space $\Spc X \subseteq \Spc X''$, which is continuously embedded in $\Spc X''$, turns out to be smaller than $\Spc X''$. Therefore, in order to establish the weak* continuity of the Dirac functionals $\delta(\cdot-\V x_0)$ for the scenarios in Theorem \ref{Theo:RadonSplines}, we need to show that $\delta(\cdot-\V x_0) \in \Spc X_{\Op L_{\rm R}}$, which can be reduced to proving that 
$
\Lop_{\rm R}^{\ast\dagger}\{\delta(\cdot - \V x_0)\}
\in 
\Spc X_{\rm Rad}
$.
\begin{theorem}[Properties of the impulse response of $\Op L_{\rm R}^{\ast\dagger}=\Op R \Op L_\Spc P^{-1\ast}$]
\label{Theo:GenImpulse}
Let $\Lop$ be an isotropic operator such that
$\widehat L(\bw)=\widehat L_{\rm rad}(\|\bw\|)$ where $\widehat L_{\rm rad}: \R \to \R$ is a continuous function and $\rho_{\rm rad}(t)=\Fourier^{-1}\{1/\widehat L_{\rm rad}\}(t)$. We consider two cases:
\begin{enumerate}
\item Trivial null space: If $\Lop$ satisfies the admissibility conditions in Definition \ref{Def:SplineAdm1}, 
then $\Op L_{\rm R}^{\ast\dagger}=\Op R \Op L^{-1\ast}$ and
\begin{align}
\nu_{\V x_0}(t,\V \xi)&=\Op R \Op L^{-1\ast}\{\delta(\cdot - \V x_0)\}(t,\V \xi) =\rho_{\rm rad}(t-\V \xi^\Top \V x_0)
\label{Eq:Radonbasis1}
\end{align}
with $\V x_0 \in \R^d$ and $(t,\V \xi)\in \R \times \mathbb{S}^{d-1}$.
Moreover, $\nu_{\V x_0}
\in \Spc X_{\rm Rad}
$ for $\Spc X=C_0$ as well as $\Spc X=L_q$ with $q\in[1,\infty]$.

\item Nontrivial null space: If $\Lop$ satisfies the admissibility conditions in Definition \ref{Def:SplineAdNonTrivial} with a polynomial null space of degree $n_0$,
then
\begin{align}
\nu_{\V x_0}(t,\V \xi)&= \Lop_{\rm R}^{\ast\dagger}\{\delta(\cdot - \V x_0)\}(t,\V \xi) \nonumber\\
&=\rho_{\rm rad}(t-\V \xi^\Top \V x_0) - \sum_{n=0}^{n_0} \frac{(-\V \xi^\Top \V x_0)^n}{n!}\big(\kappa_{\rm rad} \ast \partial^n\rho_{\rm rad}\big)(t)
\label{Eq:Radonbasis}
\end{align}
with $\nu_{\V x_0}
\in \Spc X_{\rm Rad}
$ for the same spaces as in Item 1, but with $q\in[2,\infty]$.
Moreover, 
\begin{align}
\label{Eq:Continuitybound}
\sup_{(\V x_0,\V \xi) \in \R^d \times \mathbb{S}^{d-1}} (1+|\V \xi^\Top \V x_0|)^{-n_0} \|\nu_{\V x_0}(\cdot,\V \xi)\|_{L_q(\R)}  < \infty
\end{align}
for any $q\in[2,\infty]$. 
\end{enumerate}
\end{theorem}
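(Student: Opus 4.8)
The plan is to separate the algebraic identity (the closed forms for $\nu_{\V x_0}$) from the analytic estimates (membership in $\Spc X_{\rm Rad}$ and the growth bound), and to reduce everything, via the Radon transform, to one-dimensional statements in the variable $t$. For the trivial-null-space case the projector $\Proj_{\Spc P'}$ vanishes, so $\Op L_\Spc P^{-1\ast}=\Op L^{-1}$ and $\nu_{\V x_0}=\Op R\Op L^{-1}\{\delta(\cdot-\V x_0)\}=\Op R\{\rho_{\rm iso}(\cdot-\V x_0)\}$; Proposition~\ref{Prop:IsoRad} applied to the isotropic kernel $\rho_{\rm iso}=\Fourier^{-1}\{1/\widehat L\}$ (radial profile $\rho_{\rm rad}$) gives \eqref{Eq:Radonbasis1} immediately. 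For the nontrivial case I would first expand $\Proj_{\Spc P'}\{\delta(\cdot-\V x_0)\}=\sum_{|\V k|\le n_0}m_{\V k}(\V x_0)\,m^{\ast}_{\V k}$ from the definition of $\Proj_{\Spc P'}$ and $m^{\ast}_{\V k}=(-1)^{|\V k|}\partial^{\V k}\kappa_{\rm iso}$, and then push $\Op R\Op L^{-1}$ through each summand. Two ingredients carry the computation: the Radon differentiation rule $\Op R\{\partial^{\V k}\varphi\}(t,\V \xi)=\V \xi^{\V k}\,\partial_t^{|\V k|}\Op R\{\varphi\}(t,\V \xi)$ (Fourier-slice theorem together with the multinomial expansion already recorded in \eqref{Eq:DirDeriv}), and the intertwining relation $\Op R\Op L^{-1}=\Op L_{\rm rad}^{-1}\Op R$ with $\Op R\{\kappa_{\rm iso}\}(t,\V \xi)=\kappa_{\rm rad}(t)$, which turns the $\V k$-th term into $\V \xi^{\V k}(\kappa_{\rm rad}\astrad\partial^{|\V k|}\rho_{\rm rad})(t)$. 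Collecting the terms with $|\V k|=n$ and invoking the multinomial identity $\sum_{|\V k|=n}\V x_0^{\V k}\V \xi^{\V k}/\V k!=(\V \xi^\Top\V x_0)^n/n!$ collapses the multi-index sum into the single sum of \eqref{Eq:Radonbasis}.

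For the analysis I would fix $\V \xi$, set $a=\V \xi^\Top\V x_0$, and take the one-dimensional Fourier transform in $t$, obtaining (up to the convolution constant)
\begin{equation*}
\Fourier\{\nu_{\V x_0}(\cdot,\V \xi)\}(\omega)=\frac{1}{\widehat L_{\rm rad}(\omega)}\left(\ee^{-\jj\omega a}-\widehat\kappa_{\rm rad}(\omega)\sum_{n=0}^{n_0}\frac{(-\jj\omega a)^{n}}{n!}\right).
\end{equation*}
The conceptual heart is the behaviour at the origin: since $\widehat\kappa_{\rm rad}(\omega)=1$ for $|\omega|<\tfrac12$ (Lemma~\ref{Theo:IsotropicWindow}), the parenthesis there is exactly the order-$(n_0{+}1)$ Taylor remainder of $\ee^{-\jj\omega a}$, which is $O(|\omega a|^{n_0+1})$ and, the argument being purely imaginary, obeys $|\omega a|^{n_0+1}/(n_0{+}1)!$; multiplying by $1/\widehat L_{\rm rad}(\omega)\sim C_0^{-1}|\omega|^{-\gamma_0}$ and using $\gamma_0\in(n_0,n_0{+}1]$ shows the would-be singularity is annihilated, the product being $O(|\omega|^{n_0+1-\gamma_0})$ and hence bounded. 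For $|\omega|>1$ one has $\widehat\kappa_{\rm rad}=0$, the parenthesis reduces to $\ee^{-\jj\omega a}$ of modulus one, and ellipticity ($\gamma_1>1$, Condition~2 of Definition~\ref{Def:SplineAdNonTrivial}) gives $|1/\widehat L_{\rm rad}(\omega)|\le C|\omega|^{-\gamma_1}$. Consequently $\Fourier\{\nu_{\V x_0}(\cdot,\V \xi)\}\in L_1(\R)\cap L_2(\R)$ (the tail $|\omega|^{-\gamma_1}$ is integrable and square-integrable because $\gamma_1>1$, and the core is bounded), so by Fourier inversion and Plancherel $\nu_{\V x_0}(\cdot,\V \xi)\in C_0(\R)\cap L_2(\R)\subset L_q(\R)$ for every $q\in[2,\infty]$; this is precisely why Case~2 is confined to $q\ge2$, whereas in Case~1 the extra hypothesis $\rho_{\rm rad}\in L_1$ of Definition~\ref{Def:SplineAdm1} pushes $q$ down to $1$. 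Finally, $\rho_{\rm rad}$ and $\kappa_{\rm rad}$ are symmetric and $\partial^{n}\rho_{\rm rad}$ has parity $(-1)^n$, so the substitution $(t,\V \xi)\mapsto(-t,-\V \xi)$ (i.e.\ $a\mapsto-a$) leaves \eqref{Eq:Radonbasis} invariant; hence $\nu_{\V x_0}$ is even and, by Lemma~\ref{Lem:Even}, lies in $\Spc X_{\rm even}=\Spc X_{\rm Rad}$ for the stated spaces, the $\V \xi$-integration over the compact sphere preserving $L_q$-membership.

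The delicate point, and the step I expect to be the main obstacle, is the uniform growth estimate \eqref{Eq:Continuitybound}. Its mechanism is already visible above: for $|\omega|<\tfrac12$ the parenthesis is the Taylor remainder $\sum_{n>n_0}(-\jj\omega a)^{n}/n!$, whose size is controlled by the degree-$n_0$ polynomial $\sum_{n\le n_0}|\omega a|^{n}/n!$, and it is this that injects a polynomial dependence of degree $n_0$ in $a$ into the low-frequency content of $\nu_{\V x_0}(\cdot,\V \xi)$, while the high-frequency part is essentially a translate and contributes an $a$-independent constant. Concretely I would use the low/high-pass splitting $\rho_{\rm rad}=\rho_{\rm rad}^{\rm lo}+\rho_{\rm rad}^{\rm hi}$ with $\rho_{\rm rad}^{\rm lo}=\kappa_{\rm rad}\astrad\rho_{\rm rad}$, so that $\kappa_{\rm rad}\astrad\partial^{n}\rho_{\rm rad}=\partial^{n}\rho_{\rm rad}^{\rm lo}$ and
\begin{equation*}
\nu_{\V x_0}(t,\V \xi)=\rho_{\rm rad}^{\rm hi}(t-a)+\Big(\rho_{\rm rad}^{\rm lo}(t-a)-\sum_{n=0}^{n_0}\tfrac{(-a)^{n}}{n!}\,\partial^{n}\rho_{\rm rad}^{\rm lo}(t)\Big),
\end{equation*}
the bracket being the degree-$n_0$ Taylor remainder of the smooth low-pass part, which I would then estimate by a scale-by-scale argument, splitting $\{|\omega|<\tfrac12\}$ at the critical scale $|\omega|\sim1/|a|$ and combining the sharp remainder bound $|\omega a|^{n_0+1}$ with the coarse bound $|\omega a|^{n_0}$ against the origin singularity $|\omega|^{-\gamma_0}$. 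Two features make this technical and are where I would concentrate the effort: first, at the borderline orders $\gamma_0=n_0+1$ the individual summands of \eqref{Eq:Radonbasis} need \emph{not} separately belong to the target space (for $\Lop=-\Delta$, where $\rho_{\rm rad}\propto|t|$, the top correction $\partial^{n_0}\rho_{\rm rad}^{\rm lo}$ is bounded but not square-integrable), so the estimate must be run on the combined Fourier expression rather than term by term, and a naive Minkowski bound overcounts the growth — the sharp order, cleanest at the endpoint $q=\infty$, being recovered only by exploiting the concentration of $\kappa_{\rm rad}\in\Spc S(\R)$; second, the bound must be uniform in the direction $\V \xi$, which here is automatic because the whole expression depends on $\V \xi$ only through the scalar $a=\V \xi^\Top\V x_0$, a structural simplification I would use to reduce \eqref{Eq:Continuitybound} to a one-parameter family of $1$D estimates. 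Once \eqref{Eq:Continuitybound} is secured, the membership of the previous paragraph gives $\nu_{\V x_0}\in\Spc X_{\rm Rad}$, hence $\delta(\cdot-\V x_0)\in\Spc X_{\Op L_{\rm R}}$ (its image under $\Lop_{\rm R}^{\ast\dagger}$ lying in $\Spc X_{\rm Rad}$ and its $\Spc P'$-projection being finite-dimensional), which is exactly the weak$\ast$-continuity of the sampling functionals required in the proof of Theorem~\ref{Theo:RadonSplines}.
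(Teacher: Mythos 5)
Your proposal is correct and follows essentially the same route as the paper: reduction to a one-dimensional Fourier-domain expression via the Fourier-slice theorem, identification of the numerator as the Taylor remainder of $\ee^{-\jj\omega a}$ whose $O(|\omega a|^{n_0+1})$ behaviour cancels the pole of order $\gamma_0\le n_0+1$, ellipticity for the tail, and the two-sided (sharp-near-zero / coarse) remainder bound combined with Hausdorff--Young/Plancherel to obtain the uniform $(1+|\V \xi^\Top\V x_0|)^{n_0}$ estimate and $L_q$-membership for $q\in[2,\infty]$. The only cosmetic differences are that you derive \eqref{Eq:Radonbasis} by spatial-domain Radon calculus on $\Proj_{\Spc P'}\{\delta(\cdot-\V x_0)\}$ rather than entirely in the Fourier domain, and you justify membership in $\Spc X_{\rm Rad}$ by an explicit parity check plus Lemma \ref{Lem:Even} instead of the paper's ``in the range of $\Op R$ by construction'' remark---both are valid.
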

\begin{proof} To show that $\nu_{\V x_0}\in \Spc X_{\rm Rad}$, it is sufficient to prove that $\nu_{\V x_0}\in \Spc X(\R^d \times \mathbb{S}^{d-1})$ since $\nu_{\V x_0}$ is in the range of the Radon transform by construction. 
%

When the null space of $\Lop$ is trivial, $\Lop$ has a stable convolution inverse so that it suffices to show that $\nu_{\V x_0}
\in L_q(\R \times \mathbb{S}^{d-1}) \cap C_0(\R \times \mathbb{S}^{d-1})$. 
To that end, we formally identify the isotropic distribution $\rho_{\rm iso}=\Op L^{-1\ast} \{\delta\}=\Op L^{-1} \{\delta\}=\Fourier^{-1}\{1/\widehat L_{\rm rad}(\|\bw\|)\}$ and apply Proposition \ref{Prop:IsoRad}, which yields
$$
\nu_{\V x_0}(t,\V \xi)=\Op R \{\rho_{\rm iso}(\cdot -\V x_0)\}(t, \V \xi)=\rho_{\rm rad}(t-\V \xi^\Top \V x_0).
$$
Due to our assumptions, this 
function is such that $\|\nu_{\V x_0}(\cdot,\V \xi)\|_{L_1}=\|\rho_{\rm rad}\|_{L_1}<\infty$ for any fixed $\V \xi \in \mathbb{S}^{d-1}$.
Moreover, since $1/\hat L_{\rm rad} \in L_1(\R)$, $\rho_{\rm rad}$ is bounded, continuous, and vanishing at infinity (by the Riemann-Lebesgue lemma), which gives $\rho_{\rm rad}(\cdot-\V \xi^\Top \V x_0) \in C_0(\R)$ for any $\V x_0 \in \R^d$ and $\V \xi \in \mathbb{S}^{d-1}$. Since
$\rho_{\rm rad} \in L_\infty(\R) \cap L_1(\R)$, we readily deduce that $\rho_{\rm rad}(\cdot-\V \xi^\Top \V x_0) \in L_q(\R)$ for all intermediate $q\ge 1$, which then also yields $\rho_{\rm rad}(t-\V \xi^\Top \V x_0) \in L_q(\R \times \mathbb{S}^{d-1})$ because the surface of the unit sphere $\mathbb{S}^{d-1}$ is bounded.
Finally, since $\rho_{\rm rad}: \R \to \R$ is continuous and vanishing at infinity, $\nu_{\V x_0}(t,\V \xi)$ is jointly continuous in
$(t,\V \xi)$ and vanishing at $t \to \pm \infty$ for all $\V \xi \in \mathbb{S}^{d-1}$, which implies that $\nu_{\V x_0} \in C_0(\R \times \mathbb{S}^{d-1})$.

For the more difficult case of a nontrivial null space, we invoke the Fourier-slice theorem to evaluate the 1D Fourier transform of $\nu_{\V x_0}(\cdot,\V \xi)$ with $\V \xi$ fixed as 
\begin{align}
\hat \nu_{\V x_0}(\omega,\V \xi)&=\frac{\Fourier\{\delta(\cdot-\V x_0) - \sum_{|\V k|\le n_0} \langle m_\V k, \delta(\cdot-\V x_0) \rangle \;  m_\V k^\ast\}(\omega \V \xi) }
{\widehat L_{\rm rad}(\omega)} \nonumber\\
&=\frac{\ee^{-\jj \omega \V \xi^\Top \V x_0} - \sum_{|\V k|\le n_0}\frac{\V x_0^\V k}{\V k!} \; \widehat m_\V k^\ast(\omega \V \xi) }
{\widehat L_{\rm rad}(\omega)}\nonumber\\
&=\frac{\ee^{-\jj \omega \V \xi^\Top \V x_0} - \sum_{|\V k|\le n_0}\frac{\V x_0^\V k}{\V k!} \; (-\jj \omega \V \xi)^{\V k} \widehat \kappa_{\rm rad}(\omega)}
{\widehat L_{\rm rad}(\omega)}\nonumber\\
&=\frac{\ee^{-\jj \omega \V \xi^\Top \V x_0} - \sum_{n=0}^{n_0}\frac{(-\V \xi^\Top \V x_0)^n}{n!} (\jj \omega)^n \widehat \kappa_{\rm rad}(\omega)}
{\widehat L_{\rm rad}(\omega)}, \label{Eq:Radspline}\end{align}
where the simplification of the summation over $\V k$ results from the multinomial expansion
$(-\jj \omega \V \xi^\Top\V x_0)^n=(y_1+ \cdots + y_d)^n= \sum_{|\V k|= n} \frac{n!}{\V k!} \V y^{\V k}$ with
$\V y={\big(-\jj \omega \xi_i x_{0,i}\big)_{i=1}^d}$. The delicate point here is that  \eqref{Eq:Radspline} is potentially singular because it has a pole of multiplicity $\gamma_0$ at $\omega=$0. Fortunately, the condition $\gamma_0\le n_0+1$ ensures that there is a proper pole-zero cancellation: By recalling that $\widehat \kappa_{\rm rad}(\omega)=1$ for $\omega\in\Omega_0=[-R_0,R_0]$ with $R_0=\frac{1}{2}$ and setting $t_0=\V \xi^\Top \V x_0$, we identify the numerator as the remainder of the Maclaurin series of $\ee^{-\jj \omega t_0}$, 
which is bounded by
\begin{align}
\left|\ee^{-\jj \omega t_0} - \sum_{n=0}^{N}\frac{(-\jj t_0)^n}{n!} \omega^n\right|& \le \sup_{\omega \in \R} \left|(-\jj t_0)^{N+1} \ee^{-\jj\omega t_0}\right| \;
\frac{ |\omega|^{N+1}}{(N+1)!} \nonumber\\
&\le \frac{ |t_0|^{N+1}}{(N+1)!}\; | \omega|^{N+1}.
\label{Eq:Remainder}
\end{align}
This then yields the estimate
\begin{align}
 |\hat \nu_{\V x_0}(\omega,\V \xi)|&\le
 \frac{|\V \xi^\Top \V x_0|^{n_0+1}} {C_0(n_0+1) !} |\omega|^{\epsilon} \quad \mbox{ as } \omega\to 0 \nonumber \\
\mbox{ with } \epsilon=n_0+1-\gamma_0& =\begin{cases}
0,& \gamma_0 \in \N\\
1-(\gamma_0-\lfloor \gamma_0\rfloor) \in (0,1),& \mbox{otherwise}.
\end{cases}
\label{Eq:nuatzero}
\end{align}
Since the denominator $\widehat L_{\rm rad}$ in \eqref{Eq:Radspline} is continuous and non-vanishing away from the origin,
$\hat \nu_{\V x_0}(\cdot,\V \xi)$ is bounded on $\Omega_0$, and, by extension, 
on any compact subset of $\R$.
Moreover, since 
$|\ee^{-\jj \omega \V \xi^\Top \V x_0}|=1$ and $\widehat \kappa_{\rm rad}$ 
is rapidly decreasing, there exists a constant $C$ such that $|\hat \nu_{\V x_0}(\omega,\V \xi)|\le C |\omega|^{-\gamma_1}$ for $|\omega|>R_1$. This leads to several conclusions.
First, if $\gamma_1> 1$, then $\hat \nu_{\V x_0}(\cdot,\V \xi) \in L_1(\R)$ so that
$\nu_{\V x_0} \in C_{0,\rm Rad}(\R \times \mathbb{S}^{d-1})$ by the same argument as in the nonsingular case. Second, if $\gamma_1> \alpha + \tfrac{1}{2}$, then
$\nu_{\V x_0}(\cdot,\V \xi) \in W_2^{\alpha}(\R)$, where
$W_2^{\alpha}(\R)=\{f: \int_{\R} (1 + |\omega|^2)^\alpha|\hat f(\omega)|^2 \dint \omega<\infty\}$ is the Sobolev space of functions with finite-energy derivatives up to order $\alpha$. Therefore, $\nu_{\V x_0} \in L_{q,\rm Rad}(\R \times \mathbb{S}^{d-1})$ for
all $q\in[2,\infty]$ provided that 
$\gamma_1> 1$. The explicit Radon-domain formula \eqref{Eq:Radonbasis} with $\kappa_{\rm rad}(t)=\Fourier^{-1}\{ \widehat \kappa_{\rm rad}(\omega)\}(t)$ is obtained by taking the 1D inverse transform of \eqref{Eq:Radspline}.

\begin{figure}

\includegraphics[width=12.5cm]{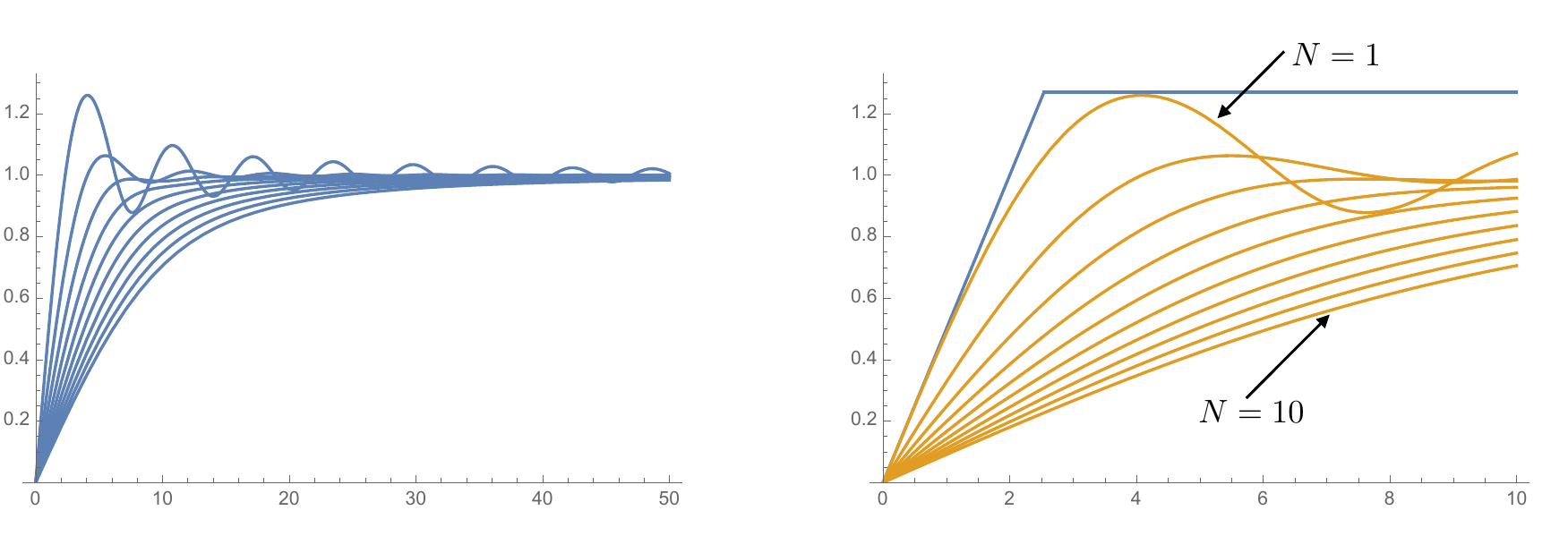}
\caption{Functions $\omega \mapsto |r_{N }(\omega)|$ for $N=1,\dots,10$. The version on the right includes the upper bound specified by \eqref{Eq:uBound} as an overlay.}
\label{Fig:Ncurve}

\end{figure}

To refine our characterization of $\hat \nu_{\V x_0}$, 
we introduce the function
\begin{align}
\label{Eq:Radialfunction}
r_{N }(\omega)=\frac{\ee^{-\jj \omega} - \sum_{n=0}^{N}\frac{(- \jj\omega)^n}{n!}}{\frac{(\jj \omega)^N }{N !}}
\end{align}
whose modulus is plotted in Figure \ref{Fig:Ncurve}. 
Some of the remarkable properties of $r_N$ are
\begin{align}
r_{N }(\omega)= \frac{-\jj \omega}{N+1}\ \mbox{ as } \ \omega\to 0\\
\forall \omega \in \R:\quad \left|r_{N }(\omega)\right|\le \min(|\omega|/2, 1.27)\label{Eq:uBound}\\
\lim_{\omega \to \infty}  \left|r_N(\omega)\right|=1,
\end{align}
with the global bound \eqref{Eq:uBound} being overlaid on the graph to demonstrate its sharpness. This function will allows us to control the behavior of
%
\begin{align}
\hat \nu_{\V x_0}(\omega,\V \xi)& = \frac{\ee^{-\jj  t_0 \omega} - \sum_{n \le n_0}\frac{(-\jj t_0\omega)^n}{n!} \kappa_{\rm rad}(\omega) }{\widehat L_{\rm rad}(\omega)} 
 \nonumber\\
& = 
\frac{
\widehat \kappa_{\rm rad}(\omega) r_{n_0}(t_0\omega)
\frac{(-\jj t_0\omega)^{n_0}}{n_0!}
+ \big(1-\widehat \kappa_{\rm rad}(\omega)\big)\ee^{-\jj  t_0 \omega}
}
{\widehat L_{\rm rad}(\omega)} 
\nonumber
\end{align}
by splitting the frequency domain in three regions. First, for $\omega\in \Omega_0=[0,R_0]$,  where $\widehat \kappa_{\rm rad}(\omega)=1$, we find that
\begin{align}
|\hat \nu_{\V x_0}(\omega,\V \xi)|& =\left|
\frac{r_{n_0}(t_0\omega)\frac{(-\jj t_0\omega)^{n_0}}{n_0!}}{\widehat L_{\rm rad}(\omega)} \right|
\le  |t_0|^{n_0} \min{(|\omega|,2)} \frac{2\frac{|\omega|^{n_0}}{n_0!}}
{|\widehat L_{\rm rad}(\omega)|}.
\label{vbound1}
\end{align}
For the transition region $\omega\in \Omega_{01}=[R_0, \tilde R_{1}]$ with $\tilde R_{1}=\max(2R_0,R_1)$, where $0\le \widehat \kappa_{\rm rad}(\omega)\le 1$, we  bound the numerator by its maximum, which yields
 \begin{align}
|\hat \nu_{\V x_0}(\omega,\V \xi)|& \le 
\frac{
2 \frac{|t_0 \tilde R_1 |^{n_0}}{n_0!}
+1}
{|\widehat L_{\rm rad}(\omega)|}.
\label{vbound2}
\end{align}
Finally, for $\omega\in\Omega_1=[\tilde R_1,\infty]$, where $\widehat \kappa_{\rm rad}(\omega)=0$, we get the expected tail behavior
\begin{align}
|\hat \nu_{\V x_0}(\omega,\V \xi)|& \le 
\frac{
1}
{|\widehat L_{\rm rad}(\omega)|}\le \frac{
1}{C_1|\omega|^{\gamma_1}}.
\label{vbound3}
\end{align}
Based on those bounds with $t_0=1$, we define the auxiliary function
\begin{align}
u(\omega)=\begin{cases}
 \min{(|\omega|,2)} \frac{2\frac{|\omega|^{n_0}}{n_0!}}
{|\widehat L_{\rm rad}(\omega)|},&|\omega|< R_0 \\
\frac{
2 \frac{\tilde R_1^{n_0}}{n_0!}
+1}{|\widehat L_{\rm rad}(\omega)|},
&R_0\le |\omega|\le \tilde R_1\\
\frac{1}
{|\widehat L_{\rm rad}(\omega)|},&\tilde R_1< |\omega|
\end{cases}
\end{align}
which, by construction, 
is such that $\|u\|_{L_p}<\infty$ for any $p\ge 1$.
We can now use \eqref{vbound1},  \eqref{vbound2}, and \eqref{vbound3}  to bound the norm of $\hat \nu_{\V x_0}(\cdot,\V \xi)$ by distinguishing between two cases. For $t_0\le 1$, we have 
$\|\hat \nu_{\V x_0}(\cdot,\V \xi)\|_{L_p} \le \|u\|_{L_p}$, while, for $t_0\ge 1$, we get 
$\|\hat \nu_{\V x_0}(\cdot,\V \xi)\|_{L_p} \le |t_0|^{n_0} \|u\|_{L_p}$.
By combining these two inequalities, we obtain the universal norm estimate
\begin{align}
 \|\hat \nu_{\V x_0}(\cdot,\V \xi)\|_{L_p(\R)} &\le (1+ |\V \xi^\Top \V x_0|)^{n_0} \|u\|_{L_p} < + \infty,
\end{align}
which holds for all $(\V x_0,\V \xi) \in \R^d \times \mathbb{S}^{d-1}$ and $p\ge 1$. 
Finally, we invoke the boundedness of the (inverse) Fourier transform $\Fourier^{-1}: L_p(\R) \to L_q(\R)$ for $p\in[1,2]$ to obtain the generic bound
\begin{align}
\sup_{(\V x_0,\V \xi) \in \R^d \times \mathbb{S}^{d-1}} (1+|\V \xi^\Top \V x_0|)^{-n_0} \|\nu_{\V x_0}(\cdot,\V \xi)\|_{L_q(\R)}    < \infty,
\label{Eq:Boundspace1}
\end{align}
which is the desired result.

\end{proof}

As complement to the proof of Theorem \ref{Theo:GenImpulse}, we make the following remarks.

1. The function $\omega\mapsto \hat \nu_{\V x_0}(\omega,\V \xi)$ in \eqref{Eq:Radspline}
is $\epsilon$-H\"older continuous around the origin for $\gamma_0 \notin 2 \N$
and as smooth as $\widehat L$ when $\gamma_0=2n$. For instance, the case where  $\Lop=(-\Delta)^{n}$ is a non-fractional iterate of the Laplacian corresponds to $\widehat L_{\rm rad}(\omega)=\omega^{2n}$ and  $\hat \nu_{\V x_0}(\cdot,\V \xi) \in C^\infty(\R)$, which then translates into $t\mapsto \nu_{\V x_0}(t,\V \xi)$ being rapidly decreasing, but with a limited order of differentiability controlled by $\gamma_0=\gamma_1=2n$.

2. The proof can be readily adapted to characterize the partial derivatives of $\nu_{\V x_0}$ by replacing $\delta(\cdot-\V x_0)$ by $\partial^{\V n}\delta(\cdot-\V x_0)$ with $|\V n|< n_0$. These distributions are such that 
$\langle \partial^{\V n}\delta(\cdot-\V x_0), m_\V k\rangle=(-1)^{|\V n|}\partial^{\V n}m_\V k(\V x_0)=(-1)^{|\V n|}m_{\V k-\V n}(\V x_0)$, with the convention that $m_{\V k-\V n}=0$ if $k_m<n_m$ for any $m\in \{1,\dots,d\}$.

3. The leading term in \eqref{Eq:Radonbasis} is $\rho_{\rm rad}(t-\tau_0)$ with $\tau_0=\V \xi^\Top \V x_0$, which, in the non-trivial scenario, typically grows as $O(|t|^{\gamma_0-1})$. Our analysis shows that the second correction term in \eqref{Eq:Radonbasis}, which is unbounded at infinity as well, essentially neutralizes this growth. It is tempting to call this a miraculous cancellation.

The bottom line is that, for any $(\V x_0, \V \xi_0) \in \R^d \times \mathbb{S}^{d-1}$, the function $t\mapsto \nu_{\V x_0}(t,\V \xi_0)$ is continuous, bounded with a maximum that is proportional to $|\V \xi_0^\Top \V x_0|^{n_0}$ (see \eqref{Eq:Boundspace1} with $p=\infty$), and vanishing at infinity.
This is a remarkable property that also guarantees the boundedness of $\Lop_{\rm R}^{\ast\dagger}: L_{1,n_0}(\R^d) \to \Spc X_{\rm Rad}$, which is not obvious a priori.
The enabling ingredient there is \eqref{Eq:Continuitybound}, which ensures that the corresponding bounding constant in Theorem \ref{Theo:Stability} is finite.
Indeed, since $\|\V x\|\ge |\V \xi^\Top \V x|$ with equality if and only if  $\V \xi$ and $\V x$ are colinear, we have that
\begin{align*}
\|\Lop_{\rm R}^{\ast\dagger}\|&\le \sup_{\V x\in\R^d, \; \V \xi \in \mathbb{S}^{d-1}} (1+\|\V x\|)^{-n_0} \|\nu_{\V x}(\cdot,\V \xi)\|_{\Spc X(\R)}\\
&\le \sup_{\V x\in\R^d,\; \V \xi \in \mathbb{S}^{d-1}} (1+|\V \xi^\Top \V x|)^{-n_0} \|\nu_{\V x}(\cdot,\V \xi)\|_{\Spc X(\R)} < \infty.
\end{align*}

\begin{theorem}[Stability of Cartesian-to-Radon-domain mappings]
\label{Theo:Stability}
Let $h_{\V x}(t,\V \xi)=\Op T\{\delta(\cdot-\V x)\}(t,\V \xi)\}$ denote the generalized impulse response of the operator $\Op T: L_{1,\alpha}(\R^d) \to \Spc X_{\rm Rad}
(\R \times \mathbb{S}^{d-1})
$ and let
\begin{align}
C_{p,\alpha}=\sup_{\V x\in\R^d,\; \V \xi \in \mathbb{S}^{d-1}} (1+\|\V x\|)^{-\alpha} \|h_{\V x}(\cdot,\V \xi)\|_{L_p(\R)}.
\end{align}
\begin{enumerate}

\item $\Spc X=C_0$: If 
$C_{\infty,\alpha}<\infty$, then $\Op T: L_{1,\alpha}(\R^d) \to C_{0,\rm Rad}
$ is bounded with $\|\Op T\|\le C_{\infty,\alpha}$.

\item $\Spc X=L_p$ with $p\in(1,\infty)$: If $C_{p,\alpha}<\infty$, then $\Op T: L_{1,\alpha}(\R^d) \to L_{p, \rm Rad}
$ 
is bounded with 
$
\|\Op T\|\le \frac{2 \pi^{d/2}}{\Gamma(d/2)} C_{p,\alpha}$. 
\end{enumerate}
The same holds true for the adjoint $\Op T^\ast: \Spc X'_{\rm Rad} 
\to 
L_{\infty,-\alpha}(\R^d)$ with $\|\Op T^\ast\|=\|\Op T\|$.

\end{theorem}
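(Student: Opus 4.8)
The plan is to exploit the Schwartz-kernel (superposition) representation of $\Op T$ and then reduce both norm estimates to a single scalar inequality per target space. First I would establish, for every $f \in L_{1,\alpha}(\R^d)$, the integral representation
\begin{align*}
\Op T\{f\}(t,\V \xi) = \int_{\R^d} f(\V x)\, h_{\V x}(t,\V \xi)\, \dint \V x,
\end{align*}
interpreting the right-hand side pointwise in $(t,\V \xi)$ for Case 1 and as a Bochner integral in $L_p(\R)$ (for each fixed $\V \xi$) in Case 2. Rigorously this rests on the Schwartz kernel theorem together with a density argument: the formula is clear on $\Spc S(\R^d)$, where both sides are continuous, and $\Spc S(\R^d)$ is dense in $L_{1,\alpha}(\R^d)$, so the bounded extension supplied below forces agreement everywhere.

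For Case 1 I would bound the kernel pointwise by $|h_{\V x}(t,\V \xi)| \le \|h_{\V x}(\cdot,\V \xi)\|_{L_\infty(\R)} \le C_{\infty,\alpha}(1+\|\V x\|)^\alpha$ and insert this into the representation, obtaining $|\Op T\{f\}(t,\V \xi)| \le C_{\infty,\alpha}\int_{\R^d}|f(\V x)|(1+\|\V x\|)^\alpha\,\dint \V x = C_{\infty,\alpha}\|f\|_{L_{1,\alpha}}$; taking the supremum over $(t,\V \xi)$ gives $\|\Op T\| \le C_{\infty,\alpha}$. For Case 2 I would apply Minkowski's integral inequality in the $t$-variable for each fixed $\V \xi$,
\begin{align*}
\|\Op T\{f\}(\cdot,\V \xi)\|_{L_p(\R)} \le \int_{\R^d} |f(\V x)|\, \|h_{\V x}(\cdot,\V \xi)\|_{L_p(\R)}\, \dint \V x \le C_{p,\alpha}\|f\|_{L_{1,\alpha}},
\end{align*}
then raise to the $p$-th power and integrate over $\mathbb{S}^{d-1}$, whose surface measure equals $\frac{2\pi^{d/2}}{\Gamma(d/2)}$. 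This yields $\|\Op T\{f\}\|_{L_p} \le \big(\tfrac{2\pi^{d/2}}{\Gamma(d/2)}\big)^{1/p} C_{p,\alpha}\|f\|_{L_{1,\alpha}}$, and since $\frac{2\pi^{d/2}}{\Gamma(d/2)} \ge 2 > 1$ for all $d$ while $1/p \le 1$, the (non-sharp but clean) stated constant $\frac{2\pi^{d/2}}{\Gamma(d/2)}$ dominates $\big(\tfrac{2\pi^{d/2}}{\Gamma(d/2)}\big)^{1/p}$, giving the claimed bound.

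The adjoint statement is then pure Banach-space duality. Since $L_{\infty,-\alpha}(\R^d) = \big(L_{1,\alpha}(\R^d)\big)'$ and $\Spc X'_{\rm Rad} = \big(\Spc X_{\rm Rad}\big)'$, the operator $\Op T^\ast$ is exactly the Banach adjoint of a bounded operator, so $\|\Op T^\ast\| = \|\Op T\|$ by the Hahn--Banach theorem and requires no reflexivity: the inequality $\|\Op T^\ast\| \le \|\Op T\|$ is immediate from $\langle \Op T^\ast g, f\rangle = \langle g, \Op T\{f\}\rangle$, and the reverse follows from $\|\Op T\{f\}\|_{\Spc X_{\rm Rad}} = \sup_{\|g\|_{\Spc X'_{\rm Rad}}\le 1}|\langle g, \Op T\{f\}\rangle| = \sup_{\|g\|_{\Spc X'_{\rm Rad}}\le 1}|\langle \Op T^\ast g, f\rangle|$.

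The hard part will be the rigorous justification of the superposition representation in the first step --- specifically the Bochner measurability and integrability of $\V x \mapsto h_{\V x}(\cdot,\V \xi) \in L_p(\R)$ that Minkowski's inequality requires, and the verification that the resulting integral operator genuinely coincides with $\Op T$ rather than merely agreeing with it on the Dirac atoms. Once that representation is secured, both norm estimates and the duality identity are routine.
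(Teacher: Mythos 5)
Your proof is correct, and Case 1 coincides with the paper's argument (same Schwartz-kernel superposition, same pointwise bound). For Case 2, however, you take a genuinely different route: you estimate $\Op T$ directly via Minkowski's integral inequality in the $t$-variable and then integrate the $p$-th power over the sphere, which in fact yields the slightly sharper constant $S_d^{1/p}$ with $S_d=\frac{2\pi^{d/2}}{\Gamma(d/2)}$, dominated by the stated $S_d$ since $S_d\ge 2$ for all $d\ge 1$. The paper never estimates $\Op T$ directly in the reflexive case: it instead bounds the adjoint $\Op T^\ast: L_q(\R\times\mathbb{S}^{d-1})\to L_{\infty,-\alpha}(\R^d)$ with $q=p/(p-1)$ by a pointwise H\"older inequality applied to $\Op T^\ast\{g\}(\V x)=\int_{\R}\int_{\mathbb{S}^{d-1}} h_{\V x}(t,\V\xi)\,g(t,\V\xi)\dint\V\xi\dint t$, and then transfers the bound back to $\Op T=(\Op T^\ast)^\ast$ through the isometric embedding of $L_{1,\alpha}(\R^d)$ into its bidual $\big(L_{\infty,-\alpha}(\R^d)\big)'$. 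The trade-offs are as you anticipate: your direct argument must secure the Bochner measurability and integrability of $\V x\mapsto h_{\V x}(\cdot,\V\xi)\in L_p(\R)$ that Minkowski's integral inequality requires (you flag this honestly as the hard part), whereas the paper's duality detour only ever invokes a scalar H\"older bound on a double integral, and it delivers the boundedness of $\Op T^\ast$ --- which is itself part of the statement --- as a by-product rather than as a separate duality step appended at the end. Both arguments leave implicit the fact that the image lands in the Radon-compatible subspace $\Spc X_{\rm Rad}$ rather than merely in $\Spc X(\R\times\mathbb{S}^{d-1})$; the paper discharges that point in the proof of Theorem \ref{Theo:GenImpulse} by observing that the relevant functions are in the range of the Radon transform by construction.
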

\begin{proof} 
The function $\big((t, \V \xi),\V x\big) \mapsto h_{\V x}(t,\V \xi)$ is the Schwartz kernel of the operator $\Op T$, so that
\begin{align*}
|g(t,\V \xi)|&=\big|\Op T\{f\}(t,\V\xi)\big| = \big|\int_{\R^d} h_{\V x}(t,\V \xi) \;  f(\V x)\dint \V x\big|\\
&\le \sup_{\V x\in\R^d,\; \V \xi \in \mathbb{S}^{d-1}}\left((1+\|\V x\|)^{-\alpha} \|h_{\V x}(\cdot,\V \xi)\|_{L_\infty}\right)\;\int_{\R^d}(1+\|\V x\|)^{\alpha}|f(\V x)| \dint \V x.
\end{align*}
Consequently,
\begin{align*}
\|g\|_{L_{\infty}}=\sup_{(t,\V \xi) \in \R \times \mathbb{S}^{d-1}}|g(t,\V \xi)|&=C_{\infty,\alpha} \|f\|_{L_{1,\alpha}},
\end{align*}
which is the desired bound for $\Spc X=C_0$.

To handle the reflexive case $\Spc X=L_p$, we consider the adjoint operator
$\Op T^\ast$ whose Schwartz kernel $\big(\V x,(t, \V \xi)\big) \mapsto h_{\V x}(t,\V \xi)$ is obtained by transposition. We now show that $\Op T^\ast: L_q(\R \times \mathbb{S}^{d-1}) \to L_{\infty,-\alpha}(\R^d)$ with $q=\tfrac{p}{p-1}$ is bounded which, by duality, implies that the same holds true for $(\Op T^\ast)^\ast=\Op T: L_{1,\alpha}(\R^d) \toC L_p(\R \times \mathbb{S}^{d-1})$
since $L_{1,\alpha}(\R^d)$ is isometrically embedded in its bidual $\big(L_{1,\alpha}(\R^d)\big)''=\big(L_{\infty,-\alpha}(\R^d)\big)'$.
The action of the adjoint operator is described as
\begin{align*}
f(\V x)&=\Op T^\ast\{g\}(\V x) = \int_{\R} \int_{\mathbb{S}^{d-1}} h_{\V x}(t,\V \xi) \;  g(t,\V \xi)\dint \V \xi\dint t
\end{align*}
which, with the help of H\"older's inequality, yields the bound
\begin{align*}
\big| (1+\|\V x\|)^{-\alpha} f(\V x)\big|&\le
(1+\|\V x\|)^{-\alpha} \|h_{\V x}\|_{L_p(\R \times \mathbb{S}^{d-1})} \; \|g\|_{L_q(\R \times \mathbb{S}^{d-1})}\\
& \le \sup_{_{\V x\in\R^d,\; \V \xi \in \mathbb{S}^{d-1}} } \left( (1+\|\V x\|)^{-\alpha} S_{d} \;\|h_{\V x}(\cdot,\V \xi) \|_{L_p(\R)}\right) \|g\|_{L_q(\R \times \mathbb{S}^{d-1})}\\
& \le S_{d}\; C_{p,\alpha} \; \|g\|_{L_q(\R \times \mathbb{S}^{d-1})},
\end{align*}
where $S_d=\frac{2 \pi^{d/2}}{\Gamma(d/2)} $ is the surface of the unit hypersphere $\mathbb{S}^{d-1}$.
This proves the desired result with $\|\Op T\|=\|\Op T^\ast\|\le S_d C_{p,\alpha}$.

\end{proof}
%
%
\subsection{Characterization of the Predual Space $\Spc X_{\Op L_{\rm R}}$}
\label{Sec:Predual}
The application of the general representer in \cite{Unser2022} requires that
the native space $\Spc X'_{\Lop_{\rm R}}=(\Spc X_{\Lop_{\rm R}})'$ be identifiable as the topological dual of 
some primary space $\Spc X_{\Lop_{\rm R}}$ endowed with the Banach property. We achieve this constructively through a completion process that ensures that $\Spc X_{\Lop_{\rm R}}$ is a complete normed space.

\begin{theorem}[Construction of the predual space $\Spc X_{\Lop_{\rm R}}$]
\label{Theo:PredualNative} 
Let $\Lop$ be an isotropic LSI operator that is spline-admissible with a polynomial null space $\Spc P$ of degree $n_0$.
Then, $\Lop_{\rm R}^{\ast\dagger}=\Op R \Op L^{-1\ast} (\Identity -\Proj_{\Spc P'}): L_{1,n_0}(\R^d) \toC \Spc X_{\rm Rad}
$ is bounded,
 and admits the unique extension
$\Spc \Lop_{\rm R}^{\ast\dagger}: \Spc X_{\Lop_{\rm R}}(\R^d) \toC \Spc X_{\rm Rad}
$ with the properties
\begin{align}
\forall v\in \Spc X_{\rm Rad}
: &\quad \Lop_{\rm R}^{\ast\dagger}
\label{Eq:isoPrenative1}
\Lop_{\rm R}^{\ast}\{v\}=v\\
\forall p_0^\ast\in \Spc P': & \quad \Lop_{\rm R}^{\ast\dagger}\{p_0^\ast\}=0 \label{Eq:PrenativeNull}\\
\forall f \in \Spc X_{\Lop_{\rm R}}(\R^d): & \quad \Lop_{\rm R}^{\ast}\Lop_{\rm R}^{\ast\dagger}\{f\}
=(\Identity-\Proj_{\Spc P'})\{f\}=\Proj_\Spc U\{ f\},
\label{Eq:isoPrenative2}
\end{align}
where $\Lop_{\rm R}^{\ast}=\Lop^\ast \Op R^\ast\Op K_{\rm rad}$ and $\Spc X_{\Lop_{\rm R}}(\R^d)=\Lop^\ast_{\rm R}(\Spc X_{\rm Rad}) \oplus \Spc P'$ is 
equipped with the norm $\|f\|_{\Spc X_{\Lop_{\rm R}}}\eqdef\max(\|\Lop_{\rm R}^{\ast\dagger}\{f\}\|_{\Spc X}, \|\Proj_{\Spc P'}\{f\}\|_{\Spc P'})$. The space $\Spc X_{\Lop_{\rm R}}$ is complete and
 isomorphic to $\Spc X_{\rm Rad} \times \Spc P'$
with $f =\Lop^\ast_{\rm R}\{v\} + p_0^\ast \mapsto (v,p_0^\ast)=(\Lop_{\rm R}^{\ast\dagger}\{f\}, \Proj_{\Spc P'}\{f\})$. Moreover, $\Spc S(\R^d) \embedD L_{1,n_0}(\R^d) \embedD  \Spc X_{\Op L_{\rm R}}(\R^d)\embedD \Spc S'(\R^d)$ with the embedding being continuous and dense.
\end{theorem}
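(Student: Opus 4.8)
The plan is to leverage the two boundedness results already in hand and to reduce every structural claim to the abstract isomorphism $\Spc X_{\Lop_{\rm R}}\cong\Spc X_{\rm Rad}\times\Spc P'$. First I would record the boundedness of $\Lop_{\rm R}^{\ast\dagger}=\Op R\Op L^{-1\ast}(\Identity-\Proj_{\Spc P'}): L_{1,n_0}(\R^d)\to\Spc X_{\rm Rad}$. The Schwartz kernel of this operator is exactly the map $\V x\mapsto\nu_{\V x}$ computed in Theorem \ref{Theo:GenImpulse}, and the kernel estimate \eqref{Eq:Continuitybound} is precisely the finiteness of the constant $C_{p,n_0}$ required by Theorem \ref{Theo:Stability} (with $\alpha=n_0$ and $\Spc X=C_0$ or $\Spc X=L_q$, $q\ge 2$). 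Theorem \ref{Theo:Stability} then delivers the boundedness of $\Lop_{\rm R}^{\ast\dagger}$ together with the boundedness of its adjoint into $L_{\infty,-n_0}(\R^d)$, the latter being the form needed on the native-space side.

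Next I would verify the three operator identities \eqref{Eq:isoPrenative1}, \eqref{Eq:PrenativeNull} and \eqref{Eq:isoPrenative2} by pure telescoping, first on the dense subspace and then by continuity. The two ingredients are the Radon inversion identities $\Op R^\ast\Op K_{\rm rad}\Op R=\Identity$ and $\Op R\Op R^\ast\Op K_{\rm rad}=\Identity$ (Theorems \ref{Theo:RadonS0}, \ref{Theo:Complementedspaces}, \ref{Theo:InvertRadonDist}) and the null-space relations $\Op L^{-1\ast}\Op L^\ast=\Identity$ and $\Proj_{\Spc P'}\Op L^\ast=0$ established in Section \ref{Sec:Null}. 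Inserting $\Lop_{\rm R}^\ast=\Op L^\ast\Op R^\ast\Op K_{\rm rad}$, the composition $\Lop_{\rm R}^{\ast\dagger}\Lop_{\rm R}^\ast$ collapses through $(\Identity-\Proj_{\Spc P'})\Op L^\ast=\Op L^\ast$ and then $\Op L^{-1\ast}\Op L^\ast=\Identity$ to $\Op R\Op R^\ast\Op K_{\rm rad}=\Identity$ on $\Spc X_{\rm Rad}$, which is \eqref{Eq:isoPrenative1}; \eqref{Eq:PrenativeNull} is immediate because $(\Identity-\Proj_{\Spc P'})$ annihilates $\Spc P'$; and \eqref{Eq:isoPrenative2} collects the two. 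These same identities show that $\Lop_{\rm R}^\ast$ is injective on $\Spc X_{\rm Rad}$ (with left inverse $\Lop_{\rm R}^{\ast\dagger}$) and that $\Proj_{\Spc P'}\Lop_{\rm R}^\ast=0$, so the sum $\Lop_{\rm R}^\ast(\Spc X_{\rm Rad})+\Spc P'$ is direct.

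The completeness is the crux. I would transport the norm of $\Spc X_{\rm Rad}$ through the injection $\Lop_{\rm R}^\ast$ to make $\Spc U=\Lop_{\rm R}^\ast(\Spc X_{\rm Rad})$ a Banach space isometric to $\Spc X_{\rm Rad}$, then equip $\Spc U\oplus\Spc P'$ with the $\ell_\infty$ norm so that $(v,p_0^\ast)\mapsto\Lop_{\rm R}^\ast\{v\}+p_0^\ast$ is an isometric isomorphism onto $\Spc X_{\rm Rad}\times\Spc P'$, the latter complete as a product of Banach spaces ($\Spc P'$ being finite-dimensional). The one genuinely delicate point, and the main obstacle, is to confirm that this abstract completion is realized \emph{concretely} inside $\Spc S'(\R^d)$: that the continuous extension of $\Lop_{\rm R}^\ast$ from $\Spc S_{\rm Rad}$ to $\Spc X_{\rm Rad}$ still takes values in $\Spc S'(\R^d)$ and stays injective there. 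I would settle this through the duality pairing $\langle\Lop_{\rm R}^\ast\{v\},\varphi\rangle=\langle v,\Lop_{\rm R}\{\varphi\}\rangle_{\rm Rad}$ for $\varphi\in\Spc S(\R^d)$: since $\Op L^\ast\varphi\in L_{1,n_0}$ (Condition 3 of Definition \ref{Def:SplineAdNonTrivial}) and the filtered Radon transform sends it into $\Spc X'_{\rm Rad}$, the right-hand side is a Schwartz-continuous functional of $\varphi$ bounded by $\|v\|_{\Spc X_{\rm Rad}}$, so $\Lop_{\rm R}^\ast\{v\}$ is a well-defined tempered distribution depending continuously on $v$. This argument simultaneously yields the continuous embedding $\Spc X_{\Op L_{\rm R}}(\R^d)\embedC\Spc S'(\R^d)$.

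Finally I would assemble the chain $\Spc S(\R^d)\embedD L_{1,n_0}(\R^d)\embedD\Spc X_{\Op L_{\rm R}}(\R^d)\embedD\Spc S'(\R^d)$. The outer inclusions are classical (Schwartz functions lie in and are dense in every weighted $L_1$, and the last embedding was supplied above). For the middle one I would use the decomposition $g=\Lop_{\rm R}^\ast\Lop_{\rm R}^{\ast\dagger}\{g\}+\Proj_{\Spc P'}\{g\}$, valid for $g\in L_{1,n_0}$ by the same telescoping as in \eqref{Eq:isoPrenative2}, which exhibits $g$ as an element of $\Spc U\oplus\Spc P'$; continuity then follows from the boundedness of $\Lop_{\rm R}^{\ast\dagger}$ and of $\Proj_{\Spc P'}$ on $L_{1,n_0}$, and density follows because $\Lop_{\rm R}^\ast(\Spc S_{\rm Rad})\subset L_{1,n_0}$ (as $\Op R^\ast\Op K_{\rm rad}$ maps $\Spc S_{\rm Rad}$ into $\Spc S(\R^d)$ by Corollary \ref{Coro:Sinvert}, and $\Op L^\ast$ sends $\Spc S(\R^d)$ into $L_{1,n_0}$) is dense in $\Spc U$ by the isometry.
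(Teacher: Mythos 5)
Your proposal reproduces the architecture of the paper's proof almost exactly: the boundedness of $\Lop_{\rm R}^{\ast\dagger}$ on $L_{1,n_0}(\R^d)$ is obtained from the kernel estimate \eqref{Eq:Continuitybound} of Theorem \ref{Theo:GenImpulse} fed into Theorem \ref{Theo:Stability}; the identities \eqref{Eq:isoPrenative1}--\eqref{Eq:isoPrenative2} follow by telescoping through $\Op R\Op R^\ast\Op K_{\rm rad}=\Identity$ and $\Proj_{\Spc P'}\Op L^\ast=0$; and completeness is obtained by transporting the $\Spc X_{\rm Rad}$-norm through the injection $\Lop_{\rm R}^\ast$ (the paper makes the extension step explicit via the BLT theorem applied first on the dense subspace $\Lop_{\rm R}^\ast(\Spc S_{\rm Rad})=\Lop^\ast(\Spc S(\R^d))\subset L_{1,n_0}(\R^d)$) and then forming the $\ell_\infty$ direct sum with the finite-dimensional $\Spc P'$. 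All of this is consistent with the paper.

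The one step where you take a genuinely different route is the embedding $\Spc U\embedC\Spc S'(\R^d)$, and there your argument has a gap. The duality pairing $\langle \Lop_{\rm R}^\ast\{v\},\varphi\rangle=\langle v,\Lop_{\rm R}\{\varphi\}\rangle_{\rm Rad}$ only gives a bound by $\|v\|_{\Spc X_{\rm Rad}}$ if $\Lop_{\rm R}\{\varphi\}=\Op K_{\rm rad}\Op R\Op L\{\varphi\}\in\Spc X'_{\rm Rad}$ with $\varphi\mapsto\|\Lop_{\rm R}\varphi\|_{\Spc X'_{\rm Rad}}$ continuous on $\Spc S(\R^d)$. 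You justify this by saying that $\Op L^\ast\varphi\in L_{1,n_0}(\R^d)$ (Condition 3) and that ``the filtered Radon transform sends it into $\Spc X'_{\rm Rad}$,'' but that last assertion does not follow: $\Op R$ maps $L_1(\R^d)$ into $L_1(\R\times\mathbb{S}^{d-1})$, yet $\Op K_{\rm rad}$ is a (fractional) derivative of order $d-1$ in $t$ and does not map $L_1$ into $\Spc M$ or $L_p$ without additional smoothness of the argument. The fact you need is essentially the embedding $\Spc S(\R^d)\embedC\Spc X'_{\Op L_{\rm R}}(\R^d)$ of Theorem \ref{Theo:Native}, which the paper derives as a corollary of the theorem you are proving, so invoking it here is circular unless established independently. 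The paper sidesteps the issue entirely by staying on the predual side: it rewrites $\Spc U=\Op R^\ast\Op Q_{\rm rad}(\Spc X_{\rm Rad})$ with $\Op Q_{\rm rad}=\Op L_{\rm rad}\Op K_{\rm rad}$, observes that $\Op Q_{\rm rad}:\Spc X_{\rm Rad}\to\Spc S'(\R\times\mathbb{S}^{d-1})$ is (weakly, hence strongly, by nuclearity) continuous and that $\Op R^\ast:\Spc S'(\R\times\mathbb{S}^{d-1})\toC\Spc S'(\R^d)$ by \eqref{Eq:RadjDis}, so the composition is continuous without ever requiring $\Lop_{\rm R}\varphi$ to have finite $\Spc X'$-norm. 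Your middle embedding $L_{1,n_0}(\R^d)\embedD\Spc X_{\Op L_{\rm R}}(\R^d)$ via the decomposition $g=\Lop_{\rm R}^\ast\Lop_{\rm R}^{\ast\dagger}\{g\}+\Proj_{\Spc P'}\{g\}$ is fine and matches the paper's (terser) argument.
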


Theorem \ref{Theo:PredualNative}  obviously also applies to scenarios where the null space is trivial by setting
$\Spc P'=\{0\}$ and only retaining \eqref{Eq:isoPrenative1}, in which case $\Spc X_{\Lop_{\rm R}}(\R^d)=\Lop^\ast_{\rm R}(\Spc X_{\rm Rad})$ and $\Spc S(\R^d) \embedD L_{1}(\R^d) \embedD  \Spc X_{\Op L_{\rm R}}(\R^d)\embedD \Spc S'(\R^d)$.
\begin{proof}
Since $\Op R^{\ast}\Op K_{\rm rad}\big(\Spc S_{\rm Rad}
\big)=\Spc S(\R^d)$, the image of $\Spc S_{\rm Rad}$ under $\Lop_{\rm R}^\ast$ 
is a vector space denoted by $\Spc S_{\Lop_{\rm R}^\ast}(\R^d)=\Lop_{\rm R}^{\ast}\big(\Spc S_{\rm Rad})=\Lop^\ast\big(\Spc S(\R^d)\big)$. 
The spline-admissibility of $\Lop$ implies that $\Lop^\ast$ is injective on $\Spc S(\R^d)$ which, in turn, translates into the injectivity of $\Lop_{\rm R}^\ast$ on $\Spc S_{\rm Rad}$. The latter statement is equivalent to the existence of a linear operator $\left.\Lop_{\rm R}^{\ast-1}\right|_{\Spc S_{\Lop_{\rm R}^\ast}}=\Lop_{\rm R}^{\ast-1}$ (for short)  such that, for any $u=\Lop_{\rm R}^{\ast}\{ \phi\}\in \Spc S_{\Lop_{\rm R}^\ast}(\R^d)$ with $\phi \in \Spc S_{\rm Rad}$, it holds that
\begin{align*}
\Lop_{\rm R}^{\ast-1}\{u\}=\Lop_{\rm R}^{\ast-1}\Lop_{\rm R}^\ast \{\phi\} =\phi.
\end{align*}
Hence, if $\phi \mapsto \|\phi\|_{\Spc X}$ is a valid norm on $\Spc S_{\rm Rad}$, then
the same holds true for $u \mapsto \|u\|_{\Spc U}\eqdef \|\Lop_{\rm R}^{\ast-1}\{u\} \|_{\Spc X}$ on
$\Spc S_{\Lop_{\rm R}^\ast}(\R^d)$. This means that the normed spaces $(\Spc S_{\rm Rad}
,\|\cdot\|_{\Spc X})$ and $(\Spc S_{\Lop_{\rm R}^\ast}(\R^d),\|\cdot\|_{\Spc U})$ are (isometrically) isomorphic.
Moreover, since $\Spc S_{\Lop_{\rm R}^\ast}(\R^d) \subset L_{1,n_0}(\R^d)$ (Condition 4 in Definition \ref{Def:SplineAdNonTrivial}) and $\Spc S_{\rm Rad} \embedD \Spc X_{\rm Rad}$, the inverse operator $\left.\Lop_{\rm R}^{\ast-1}\right|_{\Spc S_{\Lop_{\rm R}^\ast}}$ coincides on $\Spc S_{\Lop_{\rm R}^\ast}$ with $\Lop_{\rm R}^{\ast\dagger}: L_{1,n_0}(\R^d) \toC \Spc X_{\rm Rad}$  whose impulse response is characterized in Theorem \ref {Theo:GenImpulse}. The well-posedness and boundedness of $\Lop_{\rm R}^{\ast\dagger}$ on $L_{1,n_0}(\R^d)$ for $n_0=\lceil \gamma_0-1\rceil$ follows from Theorem \ref{Theo:Stability} and \eqref{Eq:Continuitybound} in Theorem \ref {Theo:GenImpulse}, which provides the required stability condition. The other fundamental ingredient is 
$\langle m_\V k, u\rangle=\langle m_\V k, \Lop_{\rm R}^\ast \{\phi\}\rangle=\langle \Lop_{\rm R} \{m_\V k\}, \phi\rangle_{\rm Rad}=0$ for all $u\in 
\Spc S_{\Lop^\ast}(\R^d)$ and $|\V k|\le n_0$, which implies that $\Proj_{\Spc P'}\{u\}=0$. Consequently, we have that
$$\Lop_{\rm R}^{\ast\dagger}\{u\}=\Lop_{\rm R}^{\ast-1}(\Identity-\Proj_{\Spc P'})\{u\}=\Lop_{\rm R}^{\ast-1}\{u\}=\Lop_{\rm R}^{\ast-1}\Lop_{\rm R}^\ast \{\phi\}=\phi,
$$
which confirms the equivalence of $\Lop_{\rm R}^{\ast\dagger}$ and $\left.\Lop_{\rm R}^{\ast-1}\right|_{\Spc S_{\Lop_{\rm R}^\ast}}$.

So far, we have shown that the operator $\Lop_{\rm R}^{\ast\dagger}: \big(\Spc S_{\Lop_{\rm R}^\ast},\|\cdot\|_{\Spc U}\big) \to \Spc X_{\rm Rad}$ is an isometry. As next step, we invoke the bounded linear transformation (BLT) theorem \cite[Theorem I.7, p 9]{Reed1980} to uniquely extend the operator to the completed space
$\Spc U=\overline{\big(\Spc S_{\Lop_{\rm R}^\ast},\|\cdot\|_{\Spc U}\big)}$ which, by construction, is the Banach space equipped with the norm $\|\cdot\|_{\Spc U}$. This extension argument also applies the other way around:  Since
${(\Spc S_{\Lop^\ast}(\R^d),\|\cdot\|_{\Spc U})} \embedIso \Spc U$, 
the operator $\Lop_{\rm R}^\ast: (\Spc S_{\rm Rad},\|\cdot\|_{\Spc X}) \to \Spc U$ 
has a unique (isometric) extension $\Lop_{\rm R}^\ast: \Spc X_{\rm Rad} \to \Spc U$ with $\Spc X_{\rm Rad}$ being the closure
$\Spc X_{\rm Rad} =\overline{(\Spc S_{\rm Rad},\|\cdot\|_{\Spc X})}$.
This proves that  the spaces $\Spc X_{\rm Rad}$ and $\Spc U$ are isometrically isomorphic with $\Spc U=\Lop_{\rm R}^{\ast}\big(\Spc X_{\rm Rad}\big)$ and 
$\Spc X_{\rm Rad}=\Lop_{\rm R}^{\ast\dagger}\big(\Spc U\big)$. In addition, we have that $\Spc U \perp \Spc P$, which means that
$\Proj_{\Spc P'}\{u\}=0$ for all $u\in \Spc U$.
Since $\Spc U$ and $\Spc P'$ are both Banach spaces, the direct-sum space $\Spc X_{\Lop_{\rm R}}=\Spc U \oplus \Spc P'$, equipped with the composite norm $\|f\|_{\Spc X_{\Lop_{\rm R}}}=\max(\|\Proj_{\Spc U}\{f\}\|_{\Spc U},
\|\Proj_{\Spc P'}\{f\}\|_{\Spc P'})$, is complete (Banach property) and isomorphic to $\Spc X_{\rm Rad} \times \Spc P'$. The final element is to recognize that
$\|\Proj_{\Spc U}\{f\}\|_{\Spc U}=\|\Op L_{\rm R}^{\ast\dagger}\Proj_{\Spc U}\{f\}\|_{\Spc X}=\|\Op L_{\rm R}^{\ast\dagger}\{f\}\|_{\Spc X}$, where $\Proj_{\Spc U}=(\Identity-\Proj_{\Spc P'})$.
This direct-sum decomposition has an equivalent description in terms of operators, which is the form given in the statement of Theorem \ref{Theo:PredualNative}.
Specifically, the isomorphism between $\Spc U$ and $\Spc X$ is expressed by \eqref{Eq:isoPrenative1} and \eqref{Eq:isoPrenative2} for $f \in \Spc U$. This is complemented by the null-space property \eqref{Eq:PrenativeNull}, which ensures that the components of $f$ that are in $\Spc P'$ are annihilated by $\Lop_{\rm R}^\ast$.

Embeddings:
The denseness of $\Spc S$ in $\Spc X_{\Op L_{\rm R}}$ follows from the observation that
$\Spc S(\R^d)=(\Identity-\Proj_{\Spc P'})\big(\Spc S(\R^d)\big) \oplus \Spc P'$. Since, by construction, one has that
$(\Identity-\Proj_{\Spc P'})(\Spc S) \embedD \Spc U$ and $(\Identity-\Proj_{\Spc P'})(L_{1,n_0}) \embedD \Spc U$, one also has that
$\Spc S(\R^d) \embedD L_{1,n_0}(\R^d) \embedD \Spc X_{\Op L_{\rm R}}(\R^d)$.

As for the relation $\Spc X_{\Op L_{\rm R}}=\Spc U \oplus \Spc P' \embedD  \Spc S'(\R^d)$, we already have that $\Spc P'\embedC \Spc S(\R^d) \embedC \Spc S'(\R^d)$, by construction. To show that $\Spc U \embedC \Spc S'(\R^d)$, we invoke the intertwining relation and express $\Spc U=\Lop^\ast\Op R^\ast \Op K_{\rm rad}(\Spc X_{\rm Rad})$ as $\Spc U=\Op R^\ast \Op Q_{\rm rad}(\Spc X_{\rm Rad})$, where 
$\Op Q_{\rm rad}=\Op L_{\rm rad}\Op K_{\rm rad}: \Spc X_{\rm Rad} \to \Spc S'(\R \times \mathbb{S}^{d-1})$.
Our hypotheses on $\widehat{L}_{\rm rad}$ ensure that $\Op Q_{\rm rad}\{\phi\}$ is well-defined for every $\phi \in \Spc X$, with the operator being continuous in the weak topology of
$\Spc S'(\R \times \mathbb{S}^{d-1})$. (As the latter is a complete nuclear space, the weak (sequential) convergence also ensures continuity in the strong topology \cite{Treves2006}.)
Since $\Op R^\ast: \Spc S'(\R \times \mathbb{S}^{d-1}) \toC \Spc S'(\R^d)$, as implied by  \eqref{Eq:RadjDis}, the composed operator $\Op R^\ast \Op Q_{\rm rad}:  \Spc X_{\rm Rad} \to \Spc S'(\R^d)$ is continuous as well, which proves that $\Spc U$ and $\Spc X_{\Op L_{\rm R}}$ are both continuously embedded in $\Spc S'(\R^d)$. The latter embedding is also dense by transitivity since $\Spc S(\R^d) \subset \Spc X_{\Op L_{\rm R}}$ and $\Spc S(\R^d) \embedD \Spc S'(\R^d)$ \cite{Schwartz:1966,Gelfand-Shilov1964}.

\end{proof}

\appendix
\section{Appendix: Duality Maps}
\label{App:DualityMap}
The generalization of the Cauchy-Schwarz inequality for any dual pair $(\Spc X, \Spc X')$ of Banach spaces  is
the so-called {\em duality inequality}
\begin{align}
\label{Eq:Dualitybound}
\forall (f,x) \in \Spc X' \times \Spc X: \langle f, x\rangle_{\Spc X' \times \Spc X}\le \|f\|_{\Spc X'}\|x\|_{\Spc X},
\end{align}
which is tightly linked to the definition of the dual norm given by
\begin{align}
\label{Eq:DualNorm}
\|f\|_{\Spc X'}\eqdef\sup_{x \in \Spc X: \|x\|_{\Spc X}\le 1} \langle f, x\rangle_{\Spc X' \times \Spc X}.
\end{align}
\begin{definition}[Strict convexity]
A Banach space $\Spc X$ 
is said to be strictly convex if, for all $x_1,x_2 \in \Spc X$ such that $\|x_1\|_{\Spc X} =\|x_2\|_{\Spc X} =1$
and $x_1 \ne x_2$, one has that $\|\theta x_1+(1-\theta)x_2\|_{\Spc X}< 1$ for all
$\theta \in (0,1)$.
\end{definition}

It is obvious from \eqref{Eq:DualNorm} that \eqref{Eq:Dualitybound} is sharp. Moreover, when $\Spc X$ 
is reflexive and strictly convex, there is a single element $x^\ast \in \Spc X'$ (the Banach conjugate of $x$)
such that $\|x^\ast\|_{\Spc X'}=\|x\|_{\Spc X}$ (isometry) and
$\langle x^\ast, x\rangle_{\Spc X' \times \Spc X}=\|x^\ast\|_{\Spc X'}\|x\|_{\Spc X}$ (sharp duality bound) \cite{Cioranescu2012}. This leads to the definition of the corresponding duality map $\Op J_\Spc X: \Spc X \to \Spc X'$ as
\begin{align}
\Op J_{\Spc X}\{x\}=x^\ast.
\end{align}
Since the dual of $\Spc X'$ is strictly convex as well, we have that $\Op J_\Spc X^{-1}=\Op J_{\Spc X'}: \Spc X' \to \Spc X$ with $\Op J_{\Spc X'}\{x^\ast\}=x$, where $(x^\ast)^\ast=x \in \Spc X''=\Spc X$
is the unique Banach conjugate of $x^\ast \in \Spc X'$.

A relevant example of reflexive and strictly convex Banach space is $\Spc X=L_q(\R \times \mathbb{S}^{d-1})$ for $q \in (1,\infty)$.
Its topological dual is
$\Spc X'=L_p(\R \times \mathbb{S}^{d-1})$ with $p=q/(q-1)$ being the conjugate exponent of $q$.
For that particular pair,
\eqref{Eq:Dualitybound} reduces to the H\"older inequality for hyper-spherical functions.
From \cite[Chapter 4]{Cioranescu2012}, the corresponding duality map $\Op J_q: L_q(\R \times \mathbb{S}^{d-1}) \to L_p(\R \times \mathbb{S}^{d-1})$
is 
\begin{align}
\label{Eq:LpDual}
\Op J_{q}\{\nu\}(\V z)=\nu^\ast(\V z)=\frac{\left|\nu(\V z)\right|^{q-1}}{\|\nu\|^{q-2}_{L_q}} {\rm sign}\big(\nu(\V z)\big),
\end{align}
which establishes a one-to-one isometric mapping between the spaces $L_q$ and 
$L_p=(L_q)'$ with the property that $\Op J^{-1}_{q}=\Op J_{p}$.

\begin{proposition}[Banach isometries]
\label{Prop:DualMaps}
Let $(\Spc X, \Spc X')$ be a dual pair of reflexive and strictly convex Banach spaces with corresponding duality map $\Op J_\Spc X: \Spc X \to \Spc X'$. We consider two generic types of linear isometries.

1) One-to-one map: Let $\Op T: \Spc X \to \Spc Y=\Op T(\Spc X)$ be an injective operator whose inverse is denoted by $\Op T^{-1}$ with $\Op T^{-1}\Op T=\Identity$ on $\Spc X$. Then, $\Spc Y=\Op T(\Spc X)=\{y=\Op T\{x\}: x \in \Spc X\}$ equipped with the norm $\|y\|_{\Spc Y}\eqdef\|\Op T^{-1}\{y\}\|_{\Spc X}$ is a Banach space. Its 
continuous dual is the Banach space $\Spc Y'=\Op T^{-1\ast}(\Spc X')$ with
$\|y'\|_{\Spc Y'}=\| \Op T^\ast\{ y'\}\|_{\Spc X'}$, while the
corresponding duality map is
$\Op J_{\Spc Y}= (\Op T^\ast)^{-1}\circ \Op J_{\Spc X}\circ\Op T^{-1}: \Spc Y \to \Spc Y'$.

2) Projection: Let $\Op P: \Spc X \to \Spc U \embedIso \Spc X$ be a continuous projection operator on $\Spc X$ with $\|\Op P\|=1$. Then, $\big(\Op P(\Spc X),\Op P^\ast(\Spc X')\big)=\big(\Spc U,\Spc U'\big)$ is a dual pair of Banach subspaces with corresponding duality map
$\Op J_{\Spc U}=\Op P^\ast\circ \Op J_{\Spc X}\circ\Op P: \Spc U \to \Spc U'$.

\end{proposition}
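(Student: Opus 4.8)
The plan is to derive both parts from the characterizing property of the duality map recalled in the appendix: for a reflexive, strictly convex space $\Spc Z$, the element $\Op J_{\Spc Z}\{z\}$ is the \emph{unique} $z^\ast \in \Spc Z'$ satisfying $\|z^\ast\|_{\Spc Z'} = \|z\|_{\Spc Z}$ together with the sharp bound $\langle z^\ast, z\rangle = \|z^\ast\|_{\Spc Z'}\|z\|_{\Spc Z}$. Thus in each case I will first verify that the new space and its dual are again reflexive and strictly convex (so that $\Op J$ is well-defined there), identify the dual concretely, and then check that the proposed composite operator produces an element meeting the two characterizing conditions; uniqueness of the Banach conjugate then forces it to coincide with the duality map.

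For Part 1, I would first observe that, by the very definition of $\|\cdot\|_{\Spc Y}$, the map $\Op T: \Spc X \to \Spc Y$ is a surjective linear isometry with inverse $\Op T^{-1}$ (positive-definiteness of the norm uses injectivity of $\Op T^{-1}$). Hence $\Spc Y$ is complete (completeness transported from $\Spc X$), and it is reflexive and strictly convex since these are isometric invariants. Consequently the adjoint $\Op T^\ast: \Spc Y' \to \Spc X'$ is an isometric isomorphism, which realizes $\Spc Y' = (\Op T^\ast)^{-1}(\Spc X') = \Op T^{-1\ast}(\Spc X')$ with $\|y'\|_{\Spc Y'} = \|\Op T^\ast y'\|_{\Spc X'}$ and shows that $\Spc Y'$ is reflexive and strictly convex as well. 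It then remains to confirm the formula: with $x = \Op T^{-1}y$ and $x^\ast = \Op J_{\Spc X}\{x\}$, the candidate $y^\ast := (\Op T^\ast)^{-1}x^\ast$ obeys $\|y^\ast\|_{\Spc Y'} = \|\Op T^\ast y^\ast\|_{\Spc X'} = \|x^\ast\|_{\Spc X'} = \|x\|_{\Spc X} = \|y\|_{\Spc Y}$ and $\langle y^\ast, y\rangle_{\Spc Y' \times \Spc Y} = \langle \Op T^\ast y^\ast, \Op T^{-1}y\rangle = \langle x^\ast, x\rangle = \|x^\ast\|_{\Spc X'}\|x\|_{\Spc X} = \|y^\ast\|_{\Spc Y'}\|y\|_{\Spc Y}$, whence $\Op J_{\Spc Y} = (\Op T^\ast)^{-1}\Op J_{\Spc X}\Op T^{-1}$ by uniqueness.

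For Part 2, I would use that a bounded projection has closed range, so $\Spc U = \Op P(\Spc X) = \{x : \Op P\{x\} = x\}$ is a closed subspace of $\Spc X$, hence Banach under the restricted norm, and again reflexive and strictly convex as a closed subspace of such a space. The decisive structural step is the identification $\Spc U' \cong \Op P^\ast(\Spc X')$: one checks $\Op P^\ast(\Spc X') = (\ker\Op P)^\perp$ — every $\Op P^\ast x'$ annihilates $\ker\Op P$, and conversely any $x'$ vanishing on $\ker\Op P$ equals $\Op P^\ast x'$ through the splitting $x' = x'\circ\Op P + x'\circ(\Identity-\Op P)$ — so that restriction to $\Spc U$ and extension by $u'\mapsto u'\circ\Op P$ are mutually inverse. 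Here the hypothesis $\|\Op P\| = 1$ is essential: it forces $\|\Op P^\ast x'\|_{\Spc X'}$ to equal the quotient norm on $\Spc X'/\Spc U^\perp = \Spc U'$, rendering the identification isometric, so that $\Spc U'$ is isometric to a closed subspace of $\Spc X'$ and therefore reflexive and strictly convex. Finally, for $u \in \Spc U$ (where $\Op P\{u\} = u$), the candidate $u^\ast := \Op P^\ast \Op J_{\Spc X}\{u\}$ pairs as $\langle u^\ast, w\rangle = \langle \Op J_{\Spc X}\{u\}, \Op P\{w\}\rangle = \langle \Op J_{\Spc X}\{u\}, w\rangle$ for all $w \in \Spc U$, giving $\langle u^\ast, u\rangle = \|u\|_{\Spc X}^2 = \|u\|_{\Spc U}^2$ and, via the duality inequality together with the test $w = u$, the norm identity $\|u^\ast\|_{\Spc U'} = \|u\|_{\Spc U}$; uniqueness then yields $\Op J_{\Spc U} = \Op P^\ast \Op J_{\Spc X}\Op P$ on $\Spc U$.

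I expect the one genuinely delicate point to be the \emph{isometric} (not merely isomorphic) identification $\Spc U' \cong \Op P^\ast(\Spc X')$ in Part 2, since this is precisely where the normalization $\|\Op P\| = 1$ is used and cannot be relaxed; the remainder is a direct transcription of the defining properties of the duality map and of the stability of reflexivity and strict convexity under isometries and passage to closed subspaces.
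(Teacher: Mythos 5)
Your proposal is correct and follows essentially the same route as the paper: in both parts you produce the candidate conjugate, check the norm equality and the sharpness of the duality bound, and invoke uniqueness of the Banach conjugate in the reflexive, strictly convex setting. The only difference is one of detail—you spell out the isometric identification $\Spc U'\cong \Op P^\ast(\Spc X')$ via the annihilator/quotient-norm argument (correctly isolating where $\|\Op P\|=1$ is needed), a point the paper dispatches as a "standard argument."
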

\begin{proof}
We first recall that the dual of a reflexive and strictly convex Banach space is reflexive (by definition) and strictly convex as well.
\item 1) Injective operator: 
For the first property, we refer to \cite[Proposition 1]{Unser2022}. The key observation is that the operators $\Op T: \Spc X \to \Spc Y$ and 
$\Op T^{-1}: \Spc Y \to \Spc X$, as well as their adjoints, are isometries with
$(\Op T^\ast)^{-1}=\Op T^{-1\ast}$. The argument then primarily rests upon the duality inequality 
\begin{align}
\langle y', y\rangle_{\Spc Y' \times \Spc Y}=\langle y', \Op T\Op T^{-1}\{y\}\rangle_{\Spc Y' \times \Spc Y}&=\langle \Op T^\ast\{y'\}, \Op T^{-1}\{y\}\rangle_{\Spc X' \times \Spc X}\nonumber \\
&\le \|\Op T^\ast\{y'\}\|_{\Spc X'}\; \|\Op T^{-1}\{y\}\|_{\Spc X},
\end{align}
which is sharp if and only if $x=\Op T^{-1}\{y\}$ and $x'=\Op T^\ast\{y'\}$ (resp., $y'$ and $y$)
are Banach conjugates, so that $x'=\Op J_{\Spc X}\{x\}$.

\item 2) Projection operator: The first part is obtained by using a standard argument with (projected) Cauchy sequences.
For the second part, let $u \in \Spc U\embedIso \Spc X$ with Banach conjugate
$u^\ast \in \Spc X'$. Then,
\begin{align}
\|u^\ast\|_{\Spc X'}\|u\|_{\Spc X}
=\langle u^\ast, u\rangle_{\Spc X' \times \Spc X} & =
\langle u^\ast, \Op P^2 u\rangle_{\Spc X' \times \Spc X}=\langle \Op P^\ast u^\ast, \Op P \Op u\rangle_{\Spc U' \times \Spc U}\nonumber\\
& \ \le \|\Op P^\ast\| \|u^\ast\|_{\Spc U'} \|\Op P\| \|u\|_{\Spc U},
\end{align}
from which  we deduce that $\langle \Op P^\ast u^\ast, \Op P \Op u\rangle_{\Spc U' \times \Spc U}=\|u^\ast\|_{\Spc U'} \|u\|_{\Spc U'}$. We conclude that $u=\Op P u$ and $u^\ast=\Op P^\ast u^\ast=
\Op P^\ast \circ \Op J_{\Spc X}\{\Op P u\}$ are (unique) Banach conjugates of each other.
\end{proof}

\subsection*{Acknowlegdments}
The research was supported by the European Commission under Grant ERC-2020-AdG 
FunLearn-101020573 .
The author is thankful to Sebastian Neumayer for very helpful discussions.

\bibliographystyle{siam}
%
%
%
\bibliography{/Users/unser/MyDrive/Bibliography/Bibtex_files/Unser.bib}


\end{document}